\begin{document}


\Mainmatter







\begin{frontmatter}
\chapter{Implicit Regularization of the Deep Inverse Prior Trained with Inertia}%
\begin{aug}
\author[addressrefs={ad1}]%
{%
\fnm{Nathan} \snm{Buskulic}%
\footnote{Corresponding author. E-mail: {nathan.buskulic@proton.me}}%
}%
\author[addressrefs={ad1}]%
{%
\fnm{Jalal} \snm{Fadili}%
\footnote{Contributing author. E-mail: Jalal.Fadili@ensicaen.fr}
}%
\author[addressrefs={ad1}]%
{%
\fnm{Yvain} \snm{Qu{\'e}au}%
\footnote{Contributing author. E-mail: yvain.queau@ensicaen.fr}
}%
\address[id=ad1]%
{%
Greyc, Normandie Univ., UNICAEN, ENSICAEN, CNRS, 6 Boulevard Maréchal Juin, Caen, 14000 France
}%
\end{aug}

\begin{abstract}
Solving inverse problems with neural networks benefits from very few theoretical guarantees when it comes to the recovery guarantees. We provide in this work convergence and recovery guarantees for self-supervised neural networks applied to inverse problems, such as Deep Image/Inverse Prior, and trained with inertia featuring both viscous and geometric Hessian-driven dampings. We study both the continuous-time case, i.e., the trajectory of a dynamical system, and the discrete case leading to an inertial algorithm with an adaptive step-size. We show in the continuous-time case that the network can be trained with an optimal accelerated exponential convergence rate compared to the rate obtained with gradient flow. We also show that training a network with our inertial algorithm enjoys similar recovery guarantees though with a less sharp linear convergence rate.
\end{abstract}

\begin{keywords}
\kwd{Deep Inverse Prior}
\kwd{Implicit regularization}
\kwd{Self-supervised}
\kwd{Inverse problems}
\kwd{Momentum}
\kwd{Hessian damping}
\kwd{Convergence}
\kwd{Stable recovery}
\end{keywords}
\end{frontmatter}

\section{Introduction}\label{sec:intro}

\subsection{Motivation}

An ubiquitous problem in science and engineering is to retrieve an unknown signal $\xv\in\R^n$ from a noisy indirect observation $\yv\in\R^m$. This inverse problem in the linear, finite-dimensional setting is formalized with a forward operator $\fop: \R^n \to \R^m$ and some additive noise $\veps$ as solving the following equation:
\begin{align}\label{eq:prob_inv}
    \yv = \fop\xvc + \veps.
\end{align}
Throughout this paper, and without loss of generality, we will assume that $\yv \in \ran{\fop}$.

While the variational model-based approach with hand-crafted regularizers has been the dominated approach for years to solve \eqref{eq:prob_inv}, data-driven approaches have emerged as powerful methods to solve inverse problems by capturing the prior information directly from data, either partly or completely, explicitly or implicitly. This trend has witnessed a dramatic increase with the rise of machine learning and notably (deep) neural networks~\cite{arridge_solving_2019,ongie_deep_2020}. This type of approach has been applied to a variety of problems, and more specifically to solve imaging problems. These networks are simply parametrized functions where the parameters are learned through some gradient-based optimization algorithm to minimize a loss function that depends on the task at hand. Many works have been devoted to the practical aspects of neural networks for inverse problems (see our review later), from the best architecture for a given task to the evaluation of such models. However, while they now yield impressive results for various problems, the theoretical understanding of their recovery properties remains largely lacking.

Our focus in this chapter is the Deep Inverse/Image Prior (DIP), that was introduced in~\cite{ulyanov_deep_2020} for simple image processing tasks (denoising, super-resolution and in-painting). The central idea in the DIP is to train a neural network which acts as a generator with a randomly generated input that can be thought of as a latent random variable in dimension much smaller than $n$. The hope is that the architecture of this neural network will induce some ``implicit regularization'' and will add more and more detailed content during training before overfitting to noise. This already highlights the necessity of an early-stopping strategy that we will make rigorous later in this chapter. The DIP approach has some advantages as it is self-supervised, and does account for the forward model, hence ensuring consistency with observations. Furthermore, it is easy to implement with very good empirical results if an appropriate network architecture is chosen for the task at hand. Recently, we provided convergence and recovery guarantees of the DIP with general loss functions when the network's parameters are trained through gradient flow~\cite{buskulic2024convergenceJournal} or gradient descent~\cite{buskulic2024descentarxiv}. In practice however, the parameters are trained through inertia-based methods (such as the widely used ADAM~\cite{kingma2014adam}) as they provide empirically faster convergence rates. Inertia-based methods have been actively studied and are known to provably lead to accelerated rates in the convex and strongly convex cases. Motivated by this, we propose to study the trajectories of the DIP neural network parameters when they are trained using inertial optimization dynamics, both in the continuous-time and discrete settings.

\subsection{Problem statement}
We will consider a feed-forward network $\gv: (\uv,\thetav) \in \R^d\times \R^p \mapsto \xv \in \R^n$, equipped with some nonlinear activation function $\phi$, that transforms an input $\uv\in\R^d$ into a vector $\xv\in\R^n$. We will restrict ourselves to fully connected multilayer networks that are defined as follows:

\begin{definition}\label{def:nn}
Let $d,L\in \N$ and $\phi : \R \to \R$ an activation map which acts componentwise on the entries of a vector. A fully connected multilayer neural network with input dimension $d$, $L$ layers and activation $\phi$, is a collection of weight matrices $\pa{\Wv^{(l)}}_{l \in [L]}$ and bias vectors $\pa{\bv^{(l)}}_{l \in [L]}$, where $\Wv^{(l)} \in \R^{N_l\times N_{l-1}}$ and $\bv^{(l)} \in \R^{N_l}$, with $N_0=d$, and $N_l \in \N$ is the number of neurons for layer $l \in [L]$. Let us gather these parameters as
	\[
	\thetav=\pa{(\Wv^{(1)},\bv^{(1)}), \ldots, (\Wv^{(L)},\bv^{(L)})} \in \bigtimes_{l=1}^L \pa{\pa{\R^{N_l \times N_{l-1}}} \times \R^{N_l}}.
	\]
Then, a neural network parametrized by $\thetav$ produces a function   
\begin{align*}
\gv: & ~ (\uv,\thetav) \in \R^d \times \bigtimes_{l=1}^L \pa{\pa{\R^{N_l \times N_{l-1}}} \times \R^{N_l}} \mapsto \gv(\uv,\thetav) \in \R^{N_L} , \qwithq N_L = n ,
	\end{align*}
	which can be defined recursively as 
	\begin{align*}
		\begin{cases}
			\gv^{(0)}(\uv,\thetav)&= \uv,
			\\
			\gv^{(l)}(\uv,\thetav)&=\phi\pa{\Wv^{(l)}\gv^{(l-1)}(\uv,\thetav)+\bv^{(l)}}, \quad \text{ for } l=1,\ldots , L-1,
			\\
			\gv(\uv,\thetav)&= \Wv^{(L)} \gv^{(L-1)}(\uv,\thetav) + \bv^{(L)}.
		\end{cases}
	\end{align*}
\end{definition}

The parameters $\thetav$ of the network are a solution of
\begin{equation}\label{eq:minP}
\min_{\theta \in \R^p} \lossy(\fop\gv(\uv,\thetav))
\end{equation}
where the loss function $\lossy:\funspacedef{\R^m}{\R_+}, \fop\gv(\uv,\thetav) \mapsto \lossy(\fop\gv(\uv,\thetav))$ measures the discrepancy between the observation $\yv$ and the observed solution of the network $\fop\gv(\uv,\thetav)$. In this work, we will use the Mean Square Error (MSE) as the loss function (see \ref{ass:l_MSE}).

We will first study the behavior of the network parameters trajectory in time when trained using the second-order ODE
\begin{equation}\tag{\tcb{$\mathrm{DIN}$}}\label{eq:DIN}
\begin{cases}
\thetavddott + \alpha \thetavdott + \beta\derivgradlosstheta + \gradlosstheta = 0 \\
\thetav(0) = \thetav_0, \dot{\thetav}(0) = \mathbf{0},
\end{cases} 
\end{equation}
where $\alpha,\beta\geq0$. This system is coined \textit{Dynamical Inertial Newton-like} (DIN) after~\cite{alvarez2002second}. The parameter $\alpha$ corresponds to viscous damping while $\beta$ is that of  geometric Hessian-driven damping. When $\beta=0$, one recovers the celebrated Polyak Heavy-Ball (HBF) method with friction~\cite{polyak_methods_1964}. Taking $\beta > 0$ has been shown to attenuate the transversal oscillations that HBF can suffer from. The system \eqref{eq:DIN} is known to achieve optimal accelerated convergence rates in both the convex and strongly convex cases when compared to gradient flow~\cite{attouch2022first}.

\subsection{General notations}

For a matrix $\Mv \in \R^{a \times b}$ we denote by $\sigmin(\Mv)$ and $\sigmax(\Mv)$ its smallest and largest non-zero singular values, and by $\kappa(\Mv) = \frac{\sigmax(\Mv)}{\sigmin(\Mv)}$ its condition number. We abuse this notation for the forward operator $\fop$ and note its minimum singular value as $\sigminA$. We also denote by $\dotprod{}{}$ the Euclidean scalar product, $\norm{\cdot}$ the associated norm (the dimension is implicit from the context), and $\normf{\cdot}$ the Frobenius norm of a matrix. With a slight abuse of notation $\norm{\cdot}$ will also denote the spectral norm of a matrix. We use $\Mv^i$ (resp. $\Mv_i$) as the $i$-th row (resp. column) of $\Mv$. We denote the Kronecker product of matrices as $\otimes$. For two vectors $\xv,\zv$, $[\xv,\zv]=\enscond{(1-\rho)\xv+\rho\zv}{\rho \in [0,1]}$ is the closed segment joining them. We use the notation $a \gtrsim b$ (resp. $a \lesssim b$) if there exists a constant $C > 0$ such that $a \geq C b$ (resp. $a \leq C b$).

We also define $\yvt = \fop\gdipt$, $\xv(t) = \gdipt$ and $\yvc = \fop(\xvc)$. The Jacobian of the network is denoted $\Jg$. The local Lipschitz constant of a mapping on a ball of radius $R > 0$ around a point $\zv$ is denoted $\Lip_{\Ball(\zv,R)}(\cdot)$. We omit $R$ in the notation when the Lipschitz constant is global.

For some $\Theta \subset \R^p$, we define $\Sigma_\Theta = \enscond{\gv(\uv,\thetav)}{\thetav \in \Theta}$ as the set of signals that the network $\gv$ can generate for all $\theta$ in the parameter set $\Theta$. $\Sigma_\Theta$ can thus be viewed as a parametric manifold. If $\Theta$ is closed (resp. compact), so is $\Sigma_\Theta$. We denote $\dist(\cdot,\Sigma_\Theta)$ the distance to $\Sigma_\Theta$ which is well defined if $\Theta$ is closed and non-empty. For a vector $\xv$, $\xvsigmatheta$ is its projection on $\Sigma_\Theta$, i.e. $\xvsigmatheta \in \Argmin_{\zv \in \Sigma_\Theta} \norm{\xv-\zv}$. Observe that $\xvsigmatheta$ always exists but might not be unique. We also define $T_{\Sigma_\Theta}(\xv)$ the tangent cone of $\Sigma_\Theta$ at $\xv\in\Sigma_\Theta$. The minimal (conic) singular value of a matrix $\fop \in \R^{m \times n}$ w.r.t. the cone $T_{\Sigma_\Theta}(\xv)$ is then defined as
\[
\lmin(\fop;T_{\Sigma_\Theta}(\xv)) = \inf
\{\norm{\fop \zv}/\norm{\zv}:  \zv \in T_{\Sigma_\Theta}(\xv)\}.
\]

\subsection{Contributions}
We provide a theoretical analysis of the recovery properties of the DIP model for solving linear inverse problems when trained using the inertial system \eqref{eq:DIN} in continuous-time, or the corresponding discretized algorithm. In the continuous-time setting, we show that the network can be trained to zero-loss with an (optimal) accelerated exponential convergence rate compared the gradient flow case, as seen in practice, at the cost of a slightly stronger condition on the initialization. We also give an early-stopping bound to avoid overfitting and an accelerated recovery result in the signal space. We show how a sufficiently overparametrized two layer Deep Inverse Prior (DIP) network~\cite{ulyanov_deep_2020} can meet the conditions to benefit from these guarantees. We also provide an inertial algorithm obtained by appropriate discretization of the continuous-time system. When the algorithm is run with an adaptive step-size to compensate for the lack of global Lipschitz smoothness, we demonstrate that the network can be trained while maintaining comparable recovery guarantees. However, unlike the continuous-time setting, the convergence rate we obtain in the algorithmic case, though linear, is not the optimal accelerated one.


\section{Prior Work}\label{sec:prior}

\noindent{\textbf{Data-Driven Methods to Solve Inverse Problems}}
Our review here is by no means exhaustive and the interested reader may refer to the  reviews~\cite{arridge_solving_2019,ongie_deep_2020} (among others). A natural, yet naive, way to solve \eqref{eq:prob_inv} is to learn from pairs of $(\xvc,\yv)$ a neural network that approximates an analytic "inverse" to the forward operator $\fop$. While this approach can provide qualitatively satisfactory results, it does not take into account explicitly the physics of the problem (the forward model \eqref{eq:prob_inv}), and lacks in particular data consistency. This approach lacks a deep understanding of its recovery guarantees with the only exception of the recent work of \cite{Petersen23} who provided a generalization bound, which is motivated by a machine learning perspective rather than an inverse problem one. To overcome some of these shortcomings, the dominant state-of-the-art approach is hybrid, and consists in mixing model- and data-driven methods to get the best of both worlds. There exists a vast array of such hybrid methods among which the most prominent are Plug-and-Play (PnP, see the review in \cite{kamilov2023plug}), learned regularization of a variational problem~\cite{prost_learning_2021}, and ``unrolling'' or ``unfolding'' methods (see the review \cite{monga_algorithm_2021}). While PnP uses a denoiser network to restrict the range of acceptable signals, one could restrict the set of possible signals to the range of a generative model (see the survey in e.g., \cite{duff2024regularising}). When no or not enough data is available, a well known alternative is the DIP framework~\cite{ulyanov_deep_2020}, and its variants~\cite{liu_image_2019,mataev_deepred_2019,shi_measuring_2022,zukerman_bp-dip_2021,Tirer24}. However, we are not aware of any theoretical work on recovery performance of the DIP except our previous work \cite{buskulic2024convergenceJournal,buskulic2024descentarxiv}. Our aim in this paper is to complement these results when momentum-based accelerated algorithms are used for training. 

\noindent{\textbf{Implicit regularization, Training Dynamics and Overparametrization}}
Neural networks are very high-dimensional non-linear parametric functions that are optimized/trained to minimize a given loss function. This should lead to highly non-convex optimization problems that are known to be challenging due to possibly many local minima and saddle points. Even more so in the context of inverse problems. In fact, even if the neural network is complex enough (overparametrized) to ensure zero empirical error, the set of minimizers may be large. Therefore, it may very well be the case that some minimizers are better than others (e.g. generalize, are stable, etc.). Optimization algorithms such as gradient descent introduce a bias in this choice: an iterative method is biased towards certain solutions of the problem it solves and thus may converge to a solution with certain properties. Since this bias is a by-product rather than an explicitly enforced property, it is known in the literature as {\textit{implicit regularization}}. This clearly highlights the importance of the optimization algorithm as implicit regularizer, and has played an important role in understanding either statistical learning guarantees of such implicit regularization \cite{bartlett_deep_2021,fang_mathematical_2021}, or the role of implicit regularization for inverse problems~\cite{KaltenbacherIterative08}. Understanding the role and implications of implicit regularization of an iterative algorithm for learning neural networks to solve inverse problems is at the heart of this chapter.

The modern approach to convergence of neural network training is based on gradient dominated inequalities from which one can deduce by simple integration an exponential convergence of the gradient flow to a zero-loss solution. This allows to obtain convergence guarantees for networks trained to minimize a mean square error by gradient flow~\cite{chizat_lazy_2019} or gradient descent~\cite{du_gradient_2019,arora_fine-grained_2019, oymak_overparameterized_2019, oymak_toward_2020}. Recently, it has been found that some kernels play a very important role in the analysis of convergence of the gradient flow when used to train neural networks. In particular the semi-positive definite kernel given by $\Jgt\Jgt\tp$, where $\Jgt$ is the Jacobian of the network at time $t$. When all the layers of a network are trained, this kernel is a combination of the \textit{Neural Tangent Kernel} (NTK)~\cite{jacot_neural_2018} and the Random Features Kernel (RF)~\cite{rahimi_weighted_2008}. The goal is then to control the eigenvalues of the kernel to ensure that they remain bounded away from zero, which entails convergence to a zero-loss solution at an exponential rate. The control of the eigenvalues of the kernel is done through a random initialization and the overparametrization of the network. This is also closely related to the celebrated Hartman-Grobman theorem in dynamical systems. 

However, these works do not account for the inverse problem setting. Moreover, they only study the gradient flow or gradient descent while inertia/momentum-based algorithms are dominant now. Thus there is a clear need for an analysis targetting recovery guarantees of the DIP method for inverse problems by properly accommodating for the forward operator.

\noindent{\textbf{Inertia-based Optimization}}
A large body of literature has been devoted to studying inertial optimization methods that we do not review for obvious space limitation. In the seminal work of Polyak~\cite{polyak_methods_1964}, he proposed the HBF system (i.e., setting $\beta=0$ in \eqref{eq:DIN}) which achieves exponential convergence for strongly convex smooth functions with an optimal convergence rate when $\alpha$ is chosen as the square-root of the strong convexity modulus. This system is however no faster than the gradient flow for the non-strongly convex case. It is also known that HBF may suffer traverse oscillations which motivated the introduction of Hessian damping~\cite{alvarez2002second}. Note that the Hessian damping term appears as the derivative of the gradient with respect to time, which opens the door to first-order optimization algorithms after proper discretization. System \eqref{eq:DIN} and its discretizations have been thoroughly studied in the convex and strongly convex case where $\alpha$ is an asymptotically vanishing viscous damping coefficient, see~\cite{attouch2022first}. In the nonconvex case, \eqref{eq:DIN} was studied in \cite{Maulen24} and \cite{casterainertial} with very promising performance when applied to neural network training. Our work brings together optimization results for inertial dynamics with overparametrization to obtain recovery results of the DIP method when solving linear inverse problems.



%
\section{Continuous-time Setting}\label{sec:continuous}

We will first analyze the trajectory of the parameters of a network trained through~\eqref{eq:DIN} as a continuous dynamical system. We start by showing that it is a well-posed system and then present our results showing accelerated convergence guarantees (and an associated recovery bound) compared to the gradient flow case for the right choice of $(\alpha,\beta)$. We also provide an overparametrization bound under which a two-layer network benefits from these guarantees. We will work under the following assumptions:
\begin{mdframed}[]
    \begin{assumption}\label{ass:l_MSE}
        $\lossy$ is the MSE loss, i.e., $\lossy(\zv) = \frac{1}{2}\norm{\zv - \yv}^2$.
    \end{assumption}
    \begin{assumption}\label{ass:phi_first_diff}
       $\phi \in \cC^1(\R)$ and $\exists B > 0$ such that $\sup_{x \in \R}|\phi'(x)| \leq B$ and $\phi'$ is $B$-Lipschitz continuous.
    \end{assumption}
\end{mdframed}
\medskip

Note also that the MSE loss allows to easily link the loss to its gradient. The MSE case is widely used and we refrain from extending our results to a more general class of KL smooth losses as in \cite{buskulic2024convergenceJournal,buskulic2024descentarxiv} to avoid unnecessary technicalities. The above two assumptions ensure that $\thetav \mapsto \nabla_{\thetav} \lossy(\fop\gv(\uv,\thetav))$ is locally Lipschitz continuous. This will be important when studying local well-posedness of \eqref{eq:DIN}. 

\subsection{Well-Posedness}
When $\beta>0$, the second-order dynamical system given in~\eqref{eq:DIN} can be equivalently formulated as a first-order system both in time and space. We adapt the results given in~\cite{attouch_effect_2023} to show this equivalence.

\begin{theorem}\label{thm:equivalence_second_and_first_order}
Suppose that $\alpha\geq 0$ and $\beta>0$. Then the following statement are equivalent:
    \begin{enumerate}
        \item\label{bull:second_order} $\thetav:[0,+\infty[\to\R^p$ is a solution trajectory of~\eqref{eq:DIN} with the initial conditions $\thetav(0) = \thetavz$ and $\dot{\thetav}(0) = \dot{\thetav}_0$.
        \item\label{bull:first_order} $(\thetav,\qv):[0,+\infty[ \to \R^p\times\R^p$ is a solution trajectory of the first-order system
        \begin{align}\label{eq:first_order}
            \begin{cases}
                \thetavdott + \beta \gradlosstheta -\pa{\frac{1}{\beta} - \alpha}\thetavt + \frac{1}{\beta}\qvt &= 0 \\
                \dot{\qv}(t) - \pa{\frac{1}{\beta} - \alpha}\thetavt + \frac{1}{\beta}\qvt &= 0 
            \end{cases}
        \end{align}
        with initial conditions $\thetav(0) = \thetavz$ and $\qv(0)=\qvz=-\beta\pa{\dot{\thetav}_0+\beta\nabla_{\thetav}\lossyzy}+\pa{1 - \alpha\beta}\thetavz$.
    \end{enumerate}
\end{theorem}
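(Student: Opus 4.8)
The plan is to prove the equivalence by direct substitution in both directions, exploiting the fact that when $\beta > 0$ the Hessian-driven damping term $\beta \derivgradlosstheta$ can be ``integrated away'' by introducing the auxiliary variable $\qv$ which absorbs the velocity and the accumulated gradient. First I would establish the implication $\ref{bull:second_order} \Rightarrow \ref{bull:first_order}$: starting from a solution $\thetav$ of \eqref{eq:DIN}, I define $\qv(t)$ implicitly through the first equation of \eqref{eq:first_order}, i.e. set
\[
\qv(t) = -\beta^2 \gradlosstheta + \pa{1 - \alpha\beta}\thetavt - \beta\thetavdott,
\]
which at $t=0$ reduces exactly to the prescribed initial value $\qvz$ since $\dot\thetav(0) = \dot\thetav_0$. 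Then I would differentiate this expression in time, substitute $\thetavddott = -\alpha\thetavdott - \beta\derivgradlosstheta - \gradlosstheta$ from \eqref{eq:DIN}, and check after simplification that $\dot\qv(t)$ satisfies the second equation of \eqref{eq:first_order}. The key algebraic cancellation is that the $\derivgradlosstheta$ terms coming from differentiating $-\beta^2\gradlosstheta$ and from the substitution of $\thetavddott$ combine to leave only the linear terms in $\thetav$ and $\qv$.

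For the converse $\ref{bull:first_order} \Rightarrow \ref{bull:second_order}$, I would start from a solution $(\thetav,\qv)$ of \eqref{eq:first_order}, differentiate the first equation of \eqref{eq:first_order} in time to produce a $\thetavddott$ term, then use the second equation of \eqref{eq:first_order} to eliminate $\dot\qv(t)$, and finally use the first equation again to eliminate $\qv(t)$ itself (solving for $\qv$ in terms of $\thetav$, $\dot\thetav$ and $\gradlosstheta$). After collecting terms one should recover exactly \eqref{eq:DIN}. The initial conditions match: $\thetav(0) = \thetavz$ is immediate, and rearranging the definition of $\qvz$ gives $\dot\thetav(0) = \dot\thetav_0$. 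Both directions also require noting that $\gradlosstheta$ is differentiable along the trajectory — this follows from Assumptions \ref{ass:l_MSE} and \ref{ass:phi_first_diff} (hence $\thetav \mapsto \nabla_\thetav\lossy(\fop\gv(\uv,\thetav))$ is $C^1$ along the regular trajectory) together with the fact that a solution $\thetav$ of the second-order system is $C^2$, so $t \mapsto \gradlosstheta$ is $C^1$ and the manipulations are justified; conversely, a solution of the first-order system has $\thetav \in C^1$ and the first equation then forces $\thetav \in C^2$ by bootstrapping.

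The main obstacle I anticipate is not conceptual but bookkeeping: keeping the coefficients $\tfrac{1}{\beta} - \alpha$ and $\tfrac{1}{\beta}$ straight through the differentiation and back-substitution, and making sure the chain rule is applied correctly to $\derivgradlosstheta = \nabla^2_\thetav \lossy(\fop\gv(\uv,\thetav))\,\thetavdott$ (which appears in \eqref{eq:DIN} but must be shown to never appear in \eqref{eq:first_order} after the cancellation). A secondary point of care is the precise form of $\qvz$: one must verify that the reference \cite{attouch_effect_2023} convention is matched, i.e. that substituting $t = 0$ and $\dot\thetav(0) = \dot\thetav_0$ into the expression for $\qv(t)$ indeed yields $-\beta(\dot\thetav_0 + \beta\nabla_\thetav\lossyzy) + (1-\alpha\beta)\thetavz$, which is just a rewriting of the displayed formula above. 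Once the two substitutions are carried out and the initial data checked, the equivalence follows, and well-posedness of \eqref{eq:DIN} can then be deduced from standard Cauchy--Lipschitz theory applied to the first-order system \eqref{eq:first_order}, whose right-hand side is locally Lipschitz by the assumptions.
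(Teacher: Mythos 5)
Your proposal is correct and follows essentially the same route as the paper: in one direction you define $\qv$ by exactly the same formula $\qvt = -\beta\thetavdott - \beta^2\gradlosstheta + (1-\alpha\beta)\thetavt$, differentiate, and substitute \eqref{eq:DIN}; in the other you differentiate the first line of \eqref{eq:first_order}, eliminate $\dot{\qv}$ via the second line and then $\qv$ via the first, recovering \eqref{eq:DIN}. The only slight addition relative to the paper's write-up is your explicit remark on the bootstrapping of regularity, which the paper leaves implicit.
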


\begin{proof}
    $\ref{bull:first_order}\implies\ref{bull:second_order}.$ We start by differentiating the first equation of~\eqref{eq:first_order} which gives
    \[
    \thetavddott +\beta\derivgradlosstheta - \pa{\frac{1}{\beta} - \alpha}\thetavdott + \frac{1}{\beta}\dot{\qv}(t) = 0.
    \]
    We replace $\dot{\qv}(t)$ by using the second line of~\eqref{eq:first_order} and obtain that
    \[
    \thetavddott +\beta\derivgradlosstheta - \pa{\frac{1}{\beta} - \alpha}\thetavdott + \frac{1}{\beta}\pa{\pa{\frac{1}{\beta} - \alpha}\thetavt - \frac{1}{\beta}\qvt} = 0.
    \]
    Now we replace $\qvt$ by its expression from the first line of~\eqref{eq:first_order} and get
    \[
    \thetavddott +\beta\derivgradlosstheta - \pa{\frac{1}{\beta} - \alpha}\thetavdott + \frac{1}{\beta}\pa{\thetavdott + \beta\gradlosstheta} = 0.
    \]
    Once simplified, we obtain \eqref{eq:DIN}.
    The initial conditions are directly transferable as both $\thetav(0)$ and $\dot{\thetav}_0$ are defined the same way in both~\eqref{eq:DIN} and \eqref{eq:first_order}

    $\ref{bull:second_order}\implies\ref{bull:first_order}.$ Denoting $\qvt = \beta\pa{-\thetavdott - \beta\gradlosstheta + \pa{\frac{1}{\beta}-\alpha}\thetavt}$ and differentiating, we get that
    \[
    \dot{\qv}(t) = \beta\pa{-\thetavddott -\beta\derivgradlosstheta + \pa{\frac{1}{\beta} - \alpha}\thetavdott}.
    \]
    In view of $\thetavddott$ in~\eqref{eq:DIN}, we obtain that
    \[
    \dot{\qv}(t) = \thetavdott + \beta\gradlosstheta .
    \]
    By rearranging the terms and the definition of $\qvt$, we obtain both expressions of~\eqref{eq:first_order}. Furthermore, replacing in $\qv(0)$ the initial conditions given in~\eqref{eq:DIN} gives the initial conditions of~\eqref{eq:first_order} concluding the proof.
\end{proof}

Theorem~\ref{thm:equivalence_second_and_first_order} is valid for any initial condition $\dot{\thetav}(0)$, which includes the special case of~\eqref{eq:DIN} where $\dot{\thetav}(0) = \mathbf{0}$. Therefore, from now on, and without loss of generality, we will take $\dot{\thetav}(0) = \mathbf{0}$. The reason for this choice will be transparent later. Thanks to this first-order reformulation, we will be able to invoke the Cauchy-Lipschitz theorem to show the existence and uniqueness of a solution of our original system. Towards this goal, we write \eqref{eq:first_order} in the compact form
\begin{align}\label{eq:condensed_first_order}
    \begin{cases}
        \dot{\zv}(t) + \nabla G\pa{\zv(t)} + D\pa{\zv(t)} = 0\\
        \zv(0)=\pa{\thetavz,-\beta\pa{\beta\nabla_{\thetav}\lossyzy}+\pa{1 - \alpha\beta}\thetavz} ,
    \end{cases}
\end{align}
where $\zv(t) = (\thetavt,\qvt) \in \R^p\times\R^p$, $G:\R^p\times\R^p \mapsto \pa{\beta\gradlosstheta,\mathbf{0}} \in \R^p\times\R^p$ and $D:\R^p\times\R^p\to\R^p\times\R^p$ is given by
    \[
    D(\zv(t)) = \pa{-\pa{\frac{1}{\beta}-\alpha}\thetavt + \frac{1}{\beta}\qvt, -\pa{\frac{1}{\beta}-\alpha}\thetavt + \frac{1}{\beta}\qvt}.
    \]
Equipped with this condensed form we can show that~\eqref{eq:condensed_first_order} is well-posed and thus so is~\eqref{eq:DIN}. We start by defining our notion of solution. 
\begin{definition}\label{def:strongsol}
For $T > 0$, we will say that $\thetav: t \in [0,T] \to \R^p$ is a strong solution of \eqref{eq:DIN} on $[0,T]$ if the following holds:
\begin{enumerate}[label=$\bullet$]
\item $\thetav(\cdot) \in \cC^0([0,T])$;
\item $\thetav(\cdot) \in \cC^1$ on every compact set of the interior of $[0,T[$; 
\item $\dot{\thetav}(\cdot)$ is absolutely continuous on every compact set of the interior of $[0,T[$;
\item \eqref{eq:DIN} holds for almost all $t \in ]0,T[$.
\end{enumerate}
A trajectory $\thetav: t \in [0,+\infty[ \to \R^p$  is a strong global solution of \eqref{eq:DIN} if it is a strong solution on $[0, T]$ for any $T > 0$.
\end{definition}

\medskip

\begin{proposition}\label{prop:cauchy_lip_condensed}
Assume that \ref{ass:l_MSE}-\ref{ass:phi_first_diff} hold and $\alpha \geq 0$ and $\beta \geq 0$. Then there exists $T(\thetavz)\in[0,+\infty[$ and a unique strong solution trajectory $\thetav(\cdot)$ of~\eqref{eq:DIN} on $[0,T(\thetavz)]$. 
\end{proposition}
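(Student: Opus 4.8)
The plan is to distinguish the two cases $\beta > 0$ and $\beta = 0$, since only the former admits the first-order reformulation of Theorem~\ref{thm:equivalence_second_and_first_order}. When $\beta > 0$, I would work with the condensed system \eqref{eq:condensed_first_order} for $\zv(t) = (\thetavt,\qvt) \in \R^p \times \R^p$ and apply the Cauchy--Lipschitz (Picard--Lindelöf) theorem. The key point to verify is that the vector field $\zv \mapsto -\nabla G(\zv) - D(\zv)$ is locally Lipschitz continuous on $\R^p \times \R^p$. The map $D$ is linear (it is built from $\thetav \mapsto -(\tfrac1\beta - \alpha)\thetav + \tfrac1\beta \qv$ in each block), hence globally Lipschitz with constant controlled by $\tfrac1\beta$ and $\alpha$. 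For $\nabla G(\zv) = (\beta \gradlosstheta, \mathbf{0})$, local Lipschitz continuity reduces to that of $\thetav \mapsto \nabla_{\thetav}\lossy(\fop\gv(\uv,\thetav))$, which is exactly the consequence of Assumptions~\ref{ass:l_MSE}--\ref{ass:phi_first_diff} already noted in the text: $\phi \in \cC^1$ with bounded and Lipschitz derivative makes $\thetav \mapsto \gv(\uv,\thetav)$ a $\cC^1$ map with locally Lipschitz Jacobian (a routine induction over layers using that products and compositions of such maps stay in the class on bounded sets), and composing with the affine map $\fop(\cdot)$ and the smooth MSE loss preserves this. Since the field is locally Lipschitz, Cauchy--Lipschitz yields a unique maximal $\cC^1$ solution $\zv(\cdot)$ on some $[0,T(\thetavz)[$ with the prescribed initial datum; reading off the first block gives $\thetav(\cdot)$, and Theorem~\ref{thm:equivalence_second_and_first_order} (used in the direction $\ref{bull:first_order}\implies\ref{bull:second_order}$) certifies that this $\thetav$ solves \eqref{eq:DIN}. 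Uniqueness for \eqref{eq:DIN} follows from the reverse implication of that theorem, which maps any \eqref{eq:DIN}-solution injectively to a solution of \eqref{eq:first_order}.

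For the degenerate case $\beta = 0$, the first-order trick is unavailable, so I would instead rewrite \eqref{eq:DIN} directly as a first-order system in the phase variable $\wv(t) = (\thetavt, \dot{\thetav}(t)) \in \R^p \times \R^p$, namely $\dot{\wv} = (\dot{\thetav}, -\alpha \dot{\thetav} - \gradlosstheta)$. The right-hand side is locally Lipschitz for the same reason as above (it only involves the locally Lipschitz map $\gradlosstheta$ plus linear terms), so Cauchy--Lipschitz again gives a unique maximal solution, hence existence and uniqueness of a strong solution of \eqref{eq:DIN} on some $[0,T(\thetavz)]$.

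Finally I would reconcile the regularity delivered by Cauchy--Lipschitz (a genuine $\cC^1$, indeed $\cC^2$ in $\thetav$, classical solution on the open maximal interval) with the weaker notion of strong solution in Definition~\ref{def:strongsol}: a classical solution is a fortiori a strong solution, and restricting the maximal interval to a closed subinterval $[0,T(\thetavz)]$ gives the statement as phrased. The main obstacle, and the only step requiring real care, is establishing local Lipschitz continuity of $\thetav \mapsto \nabla_{\thetav}\lossy(\fop\gv(\uv,\thetav))$ from Definition~\ref{def:nn} and Assumption~\ref{ass:phi_first_diff}; everything else is a direct invocation of Cauchy--Lipschitz together with the already-proved equivalence in Theorem~\ref{thm:equivalence_second_and_first_order}. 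One should also note that this argument only yields a \emph{local} (possibly finite) existence time $T(\thetavz)$ — global well-posedness is not claimed here and would require the a priori bounds obtained later in the paper.
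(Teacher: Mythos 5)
Your proposal follows essentially the same route as the paper: split into $\beta>0$ (condensed first-order system \eqref{eq:condensed_first_order} plus Cauchy--Lipschitz and the equivalence of Theorem~\ref{thm:equivalence_second_and_first_order}) and $\beta=0$ (phase-space reformulation in $(\thetav,\dot{\thetav})$), with the only real work being local Lipschitz continuity of $\thetav\mapsto\nabla_{\thetav}\lossy(\fop\gv(\uv,\thetav))$ under \ref{ass:l_MSE}--\ref{ass:phi_first_diff}. One small overclaim: under \ref{ass:phi_first_diff} the activation is only $\cC^{1}$ with Lipschitz derivative, so the objective gradient is locally Lipschitz but not $\cC^1$, hence $\thetav(\cdot)$ is $\cC^1$ with locally Lipschitz (absolutely continuous) derivative rather than genuinely $\cC^2$ --- which is exactly why Definition~\ref{def:strongsol} asks only for absolute continuity of $\dot{\thetav}$, and is the regularity the paper's proof extracts from the first line of \eqref{eq:first_order}.
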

\begin{proof}
Let us start with the case $\beta > 0$. We know by our assumptions and standard differential calculus on $\gv(\uv,\cdot)$ that $\gradlosstheta$ is locally Lipschitz continuous. Furthermore, the affine operator $D$ is itself globally Lipschitz. Then by the Cauchy-Lipschitz Theorem~\cite[Theorem~0.4.1]{Haraux91}, we obtain that \eqref{eq:condensed_first_order} has a unique maximal solution $\zv(\cdot) \in \cC^0([0,T(\thetavz)])$, where the dependence is only on the initial condition $\thetavz$ as we took $\dot{\thetav}_0=0$. Moreover, $\zv(\cdot) \in \cC^1$ on every compact set of the interior of $[0,T(\thetavz)[$. This gives us the first item thanks to Theorem~\ref{thm:equivalence_second_and_first_order}. Since $\thetav \in \cC^1$ on every compact set of the interior of $[0,T(\thetavz)[$ and $\nabla_{\thetav} \lossy(\fop\gv(\uv,\cdot))$ is locally Lipschitz continuous, we get that $t \mapsto \nabla_{\thetav} \lossy(\fop\gv(\uv,\thetav(t)))$ is Lipschitz continuous, hence absolutely continuous, on every compact set of the interior of $[0,T(\thetavz)[$. The second and third claims then follow from the first line of~\eqref{eq:first_order}.

For the case $\beta=0$, we use the standard equivalent first-order system of \eqref{eq:DIN} in phase-space (position-velocity) by introducing the velocity variable $\vvb(t) = \dot{\thetav}(t)$. The same reasoning as above leads the claim.
\end{proof}

The time $T(\thetavz)$ is known as the maximal existence time of the solution. By the blow-up alternative, either $T(\thetavz) = +\infty$ and in that case we say that the solution is \textit{global}, or $T(\thetavz) < +\infty$ and the solution blows-up in finite time i.e., $\norm{\thetavt}\to+\infty$ when $t\to T(\thetavz)$. In fact, thanks to Lemma~\ref{lemma:theta_summable} and Lemma~\ref{lemma:link_params_singvals} to be stated and proved later, we can show that the local strong solution is actually global provided that $\alpha$ and $\beta$ are well-chosen and the dynamic is well initialized.

\begin{proposition}\label{prop:global_solution}
Assume that \ref{ass:l_MSE}-\ref{ass:phi_first_diff} hold, $\alpha > 0$ and $0 < \beta < 2/\alpha$. Suppose also that \eqref{eq:bndR} is verified. Then \eqref{eq:DIN} has a unique global strong solution.
\end{proposition}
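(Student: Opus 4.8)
The plan is to invoke the blow-up alternative recorded right after Proposition~\ref{prop:cauchy_lip_condensed}: Proposition~\ref{prop:cauchy_lip_condensed} already gives a unique \emph{maximal} strong solution $\thetav(\cdot)$ on $[0,T(\thetavz)[$, and it is global as soon as $\norm{\thetavt}$ stays bounded on this interval; uniqueness of the global solution is then inherited from uniqueness of the maximal one. So the whole task reduces to an a priori bound, which I would obtain by a continuation (bootstrap) argument anchored at the initialization $\thetavz$, exploiting the two ingredients the paper foreshadows — Lemma~\ref{lemma:link_params_singvals} (a lower bound on the smallest singular value of the network Jacobian while the parameters stay near $\thetavz$) and Lemma~\ref{lemma:theta_summable} (exponential decay of the loss and summability of $\dot{\thetav}$ along the dynamics).

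Concretely, I would set $\tau = \sup\enscond{t \in [0,T(\thetavz)[}{\thetav(s)\in\Ball(\thetavz,R) \text{ for all } s\in[0,t]}$, which is positive since $\thetav(0)=\thetavz$ and $\thetav(\cdot)$ is continuous. On $[0,\tau[$ the trajectory lives in $\Ball(\thetavz,R)$, so Lemma~\ref{lemma:link_params_singvals} keeps the smallest singular value of the Jacobian — equivalently the relevant eigenvalue of the kernel $\Jgt\Jgt\tp$ — bounded away from zero; combined with Assumption~\ref{ass:l_MSE}, which lets one pass from the loss value to the squared norm of its gradient, this gives a gradient-dominated inequality valid along the trajectory on $[0,\tau[$. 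Feeding this into the inertial dynamics \eqref{eq:DIN}, the hypotheses $\alpha>0$ and $0<\beta<2/\alpha$ are exactly what make the associated Lyapunov analysis of Lemma~\ref{lemma:theta_summable} go through, producing exponential decay of the loss together with a finite, $t$-independent bound $B$ on $\int_0^t\norm{\dot{\thetav}(s)}\,ds$ for all $t\in[0,\tau[$.

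The role of hypothesis~\eqref{eq:bndR} is to calibrate this bound so that $B<R$. Then for every $t\in[0,\tau[$,
\[
\norm{\thetavt-\thetavz}\le\int_0^t\norm{\dot{\thetav}(s)}\,ds\le B<R,
\]
so the trajectory never touches the boundary of $\Ball(\thetavz,R)$ and in fact stays in a strictly smaller ball. By continuity $\norm{\thetav(\tau)-\thetavz}\le B<R$ if $\tau<T(\thetavz)$, and then the solution (which still exists past $\tau$) would remain in $\Ball(\thetavz,R)$ for a short while longer, contradicting maximality of $\tau$. Hence $\tau=T(\thetavz)$, so $\thetavt\in\Ball(\thetavz,R)$ and $\norm{\thetavt}\le\norm{\thetavz}+R$ on all of $[0,T(\thetavz)[$; the blow-up alternative then rules out $T(\thetavz)<+\infty$, and the maximal solution is the unique global strong solution.

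The main obstacle is the apparent circularity: the gradient-dominated inequality (hence the loss decay and the bound on $\int\norm{\dot{\thetav}}$) is only available while the trajectory stays in $\Ball(\thetavz,R)$, yet one needs precisely that bound to confine the trajectory to $\Ball(\thetavz,R)$. This is what the bootstrap on $[0,\tau[$ resolves, and it works only if \eqref{eq:bndR} leaves genuine quantitative slack below $R$; the delicate part is therefore the bookkeeping in Lemma~\ref{lemma:theta_summable} that tracks how $\alpha$ and $\beta$ (and the Jacobian bound from Lemma~\ref{lemma:link_params_singvals}) enter $B$, so that \eqref{eq:bndR} can be stated to close the loop. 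A secondary, purely technical point is that the Lyapunov functional is differentiable only almost everywhere under the regularity guaranteed by Definition~\ref{def:strongsol} (absolute continuity of $\dot{\thetav}$ on compact subintervals of the interior), so the energy estimates must be carried out on compact subintervals of $]0,\tau[$ and then passed to the limit.
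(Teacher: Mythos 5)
Your proposal is correct and follows essentially the same route as the paper: invoke the blow-up alternative from Proposition~\ref{prop:cauchy_lip_condensed}, then bound the trajectory via the Lyapunov estimate of Lemma~\ref{lemma:theta_summable} together with Lemma~\ref{lemma:link_params_singvals}\ref{claim:sigval_bounded_everywhere}. The only difference is presentational: you unpack the bootstrap/continuation argument (defining $\tau$, showing the trajectory cannot reach the boundary of $\Ball(\thetavz,R)$, and using $R'<R$ to leave slack) directly in the proof, whereas the paper delegates exactly that bootstrap to Lemma~\ref{lemma:link_params_singvals}\ref{claim:sigval_bounded_everywhere} and simply cites it.
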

\begin{proof}
By Proposition~\ref{prop:cauchy_lip_condensed}, we know that there exists a unique strong maximal solution to~\eqref{eq:DIN}. Following the above discussion, it is sufficient to show that $\thetav(\cdot)$ is bounded. This follows from Lemma~\ref{lemma:theta_summable} and Lemma~\ref{lemma:link_params_singvals}\ref{claim:sigval_bounded_everywhere}.
\end{proof}

\subsection{Convergence and Recovery Guarantees}\label{sec:main_results}

We now state in the next theorem how, if a given network obeys some condition on its initialization and is trained with~\eqref{eq:DIN}, we obtain accelerated convergence guarantees of the loss and the network parameters to a zero loss solution, with respect to the guarantees obtained with gradient flow. We also give the associated accelerated early-stopping bound and signal convergence bound.

\begin{theorem}\label{thm:main}
Assume that \ref{ass:l_MSE}-\ref{ass:phi_first_diff} hold. Let $\thetav(\cdot)$ be a solution trajectory of~\eqref{eq:DIN} with $\alpha=\sigminjgz\sigminA$ and $\beta=\frac{1}{2\alpha}$ where the initialization $\thetavz$ is such that
    \begin{equation}\label{eq:bndR} 
    \sigminjgz > 0 \qandq R' < R ,
    \end{equation}
    where $R'$ and $R$ obey
    \begin{equation}\label{eq:RandR'}
    R' = \eta\sqrt{\xi\lossyzy} \qandq R = \frac{\sigminjgz}{2\Lip_{\Ball(\thetavz,R)}(\Jg)} 
    \end{equation}
    with
    \[
    \xi = 1 + \frac{\kappa(\Jgz)^2\kappa(\fop)^2}{4} \qandq
    \eta = \frac{4\max\pa{\sigminjgz\sigminA,\frac{1+\sqrt{2}}{2}}}{\min\pa{{\sigminjgz^2\sigminA^2}, \frac{3}{4}}}.
    \]
    Then, the following holds:
    \begin{enumerate}[label=(\roman*)]
    \item \label{thm:item_loss_convergence} the loss converges to $0$ at the rate
    \begin{align}\label{eq:lossrate_continuous}
    \lossyty &\leq \xi\lossyzy\exp\pa{-\frac{\sigminjgz\sigminA}{2}t} .
    \end{align}
    Moreover, $\thetavt$ converges to a global minimizer $\thetav_{\infty}$ at the rate
    \begin{align}\label{eq:thetarate}
        \norm{\thetavt - \thetav_{\infty}} \leq \eta\sqrt{\xi\lossyzy}\exp\pa{-\frac{\sigminjgz\sigminA}{4}t}.
    \end{align}
    \item\label{thm:item_overfitting} We have
    \begin{align}\label{eq:yrate}
        \norm{\yvt - \yvc}\leq2\norm{\veps} \quad \text{when} \quad t\geq\frac{4}{\sigminjgz\sigminA}\ln\pa{\frac{\sqrt{2\xi\lossyzy}}{\norm{\veps}}}.
    \end{align}
    \item \label{thm:item_signal_convergence} If, moreover,
    \begin{assumption}\label{ass:A_inj}
    $\ker{(\fop)} \cap T_{\Sigma'}(\xvc_{\Sigma'}) = \{0\}$ with $\Sigma'\eqdef \Sigma_{\Ball_{R' + \norm{\thetavz}}}$,
    \end{assumption}
    then
    \begin{align}\label{eq:xrate}
    \begin{split}
        \norm{\xvt - \xvc} \leq\frac{\sqrt{2\xi\lossyzy}\exp\pa{-\frac{\sigminjgz\sigminA}{4}t}}{\lmin(\fop;T_{\Sigma'}(\xvcsigma))} + \frac{\norm{\veps}}{\lmin(\fop;T_{\Sigma'}(\xvcsigma))} \\
        + \pa{1 + \frac{\norm{\fop}}{\lmin(\fop;T_{\Sigma'}(\xvcsigma))}}\dist(\xvc,\Sigma').
    \end{split}
    \end{align}
    \end{enumerate}
\end{theorem}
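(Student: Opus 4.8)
The plan is to run a continuation (bootstrap) argument wrapped around a Lyapunov analysis, in the spirit of the strongly convex analysis of~\cite{attouch2022first} but with strong convexity replaced by a \emph{local} Polyak-{\L}ojasiewicz (gradient-dominated) inequality coming from overparametrization. First I would note that on any interval on which $\thetavt\in\Ball(\thetavz,R)$, Lemma~\ref{lemma:link_params_singvals} yields $\sigmin(\Jgt)\geq\sigminjgz/2>0$, so that $\Jgt$ has full row rank; combined with the MSE chain rule $\gradlosstheta=\Jgt\tp\fop\tp(\yvt-\yv)$, the membership $\yvt-\yv\in\ran{\fop}$ and $\yvt=\fop\gv(\uv,\thetavt)$, this gives the local gradient-dominated inequality $\norm{\gradlosstheta}^2\geq\tfrac{\sigminjgz^2\sigminA^2}{2}\lossyty=\tfrac{\alpha^2}{2}\lossyty$, with $\alpha\eqdef\sigminjgz\sigminA$; I would also keep track that $\sigmax(\Jgt)\lesssim\sigmax(\Jgz)$ on that ball (again by Lemma~\ref{lemma:link_params_singvals} and the definition of $R$), which is needed to pin down the constants $\xi$ and $\eta$.

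Next, on the same interval, I would introduce the Hessian-corrected velocity $\mathbf{w}(t)\eqdef\thetavdott+\beta\gradlosstheta$, which by \eqref{eq:DIN} satisfies $\dot{\mathbf{w}}(t)=-\alpha\thetavdott-\gradlosstheta$, and a Lyapunov function $\mathcal{E}(t)$ built from $\lossyty$, $\thetavdott$, $\mathbf{w}(t)$ and (where needed) $\thetavt-\thetav_\infty$, tuned to the choice $\alpha=\sigminjgz\sigminA$, $\beta=\tfrac1{2\alpha}$. Differentiating along the flow and using \eqref{eq:DIN}, the local gradient-dominated inequality (together with a local restricted-secant inequality near a zero-loss point, which also follows from the MSE structure) and Young's inequality, the target is $\dot{\mathcal{E}}(t)\leq-\tfrac{\alpha}{2}\mathcal{E}(t)$ for a.e.\ $t$, hence $\mathcal{E}(t)\leq\mathcal{E}(0)\exp(-\tfrac\alpha2 t)$ with $\mathcal{E}(t)\geq\lossyty$. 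Since $\dot{\thetav}(0)=\mathbf{0}$ forces $\mathbf{w}(0)=\tfrac1{2\alpha}\Jgz\tp\fop\tp(\fop\gv(\uv,\thetavz)-\yv)$, one gets $\tfrac12\norm{\mathbf{w}(0)}^2\leq\tfrac{\sigmax(\Jgz)^2\norm{\fop}^2}{4\alpha^2}\lossyzy=\tfrac{\kappa(\Jgz)^2\kappa(\fop)^2}{4}\lossyzy$, and the energy can be arranged so that $\mathcal{E}(0)\leq\xi\lossyzy$ (this is why $\dot{\thetav}(0)=\mathbf{0}$ is convenient). This yields \eqref{eq:lossrate_continuous}.

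To close the continuation argument, the same energy estimate also controls the velocity, $\norm{\thetavdott}\lesssim\sqrt{\xi\lossyzy}\exp(-\tfrac\alpha4 t)$ (with an $\alpha$-dependent implicit constant), and a Gr\"onwall-type argument (the content of Lemma~\ref{lemma:theta_summable}) gives $\int_0^t\norm{\thetavdott}\,ds\leq\eta\sqrt{\xi\lossyzy}=R'<R$. Hence $\thetavt$ never exits $\Ball(\thetavz,R')$, the local strong solution extends to a global one (matching Proposition~\ref{prop:global_solution}), and all the above holds for every $t\geq0$. Summability of $\thetavdott$ makes $\thetavt$ Cauchy, so $\thetavt\to\thetav_\infty\in\Ball(\thetavz,R')$ with $\norm{\thetavt-\thetav_\infty}\leq\int_t^\infty\norm{\thetavdott}\,ds\leq\eta\sqrt{\xi\lossyzy}\exp(-\tfrac\alpha4 t)$, which is \eqref{eq:thetarate}; by continuity of the loss and \eqref{eq:lossrate_continuous}, $\lossy(\fop\gv(\uv,\thetav_\infty))=\lim_{t\to\infty}\lossyty=0$, so $\thetav_\infty$ is a global minimizer, proving~\ref{thm:item_loss_convergence}. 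Item~\ref{thm:item_overfitting} then follows from $\norm{\yvt-\yvc}\leq\norm{\yvt-\yv}+\norm{\veps}=\sqrt{2\lossyty}+\norm{\veps}\leq\sqrt{2\xi\lossyzy}\exp(-\tfrac\alpha4 t)+\norm{\veps}$, which is $\leq 2\norm{\veps}$ as soon as $\sqrt{2\xi\lossyzy}\exp(-\tfrac\alpha4 t)\leq\norm{\veps}$, i.e.\ \eqref{eq:yrate}. For item~\ref{thm:item_signal_convergence}: since $\norm{\thetavt}\leq R'+\norm{\thetavz}$ we have $\xvt=\gv(\uv,\thetavt)\in\Sigma'$; under Assumption~\ref{ass:A_inj}, and because $\Sigma'$ is compact so that $T_{\Sigma'}(\xvcsigma)$ is a closed cone, $\lmin(\fop;T_{\Sigma'}(\xvcsigma))>0$. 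I would then write $\norm{\xvt-\xvc}\leq\norm{\xvt-\xvcsigma}+\dist(\xvc,\Sigma')$, bound $\norm{\xvt-\xvcsigma}\leq\lmin(\fop;T_{\Sigma'}(\xvcsigma))^{-1}\norm{\fop(\xvt-\xvcsigma)}$ (the secant/tangent-cone step handled exactly as in the gradient-flow analysis of~\cite{buskulic2024convergenceJournal}), and $\norm{\fop(\xvt-\xvcsigma)}\leq\norm{\yvt-\yv}+\norm{\veps}+\norm{\fop}\dist(\xvc,\Sigma')$; inserting \eqref{eq:lossrate_continuous} yields \eqref{eq:xrate}.

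The hard part will be the second step: constructing a Lyapunov function that delivers the sharp accelerated rate $\alpha/2$ — the naive energy $\lossy+\tfrac12\norm{\mathbf{w}}^2$ only gives $\alpha/4$ — in this non-convex, merely $\cC^{1,1}$ setting, which is precisely where the geometric Hessian-driven damping term and the least-squares geometry must be exploited, while also getting the constants $\xi$ and $\eta$ exactly right; all of this has to be done while keeping the bootstrap self-consistent, since the rate requires $\thetavt\in\Ball(\thetavz,R)$ and staying there requires the rate (through the radius $R'=\eta\sqrt{\xi\lossyzy}$). The tangent-cone/secant estimate in~\ref{thm:item_signal_convergence} is the other delicate point, but it is essentially inherited from the authors' earlier work.
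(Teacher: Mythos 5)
Your strategy is essentially the paper's: use $\sigmin(\Jgt)\geq\sigminjgz/2$ on $\Ball(\thetavz,R)$ to get the local PL inequality $\norm{\gradlosstheta}^2\geq 2\sigminjgz^2\sigminA^2\lossyty$, run a Lyapunov estimate on the interval where that holds, close the bootstrap via $\int_0^t\norm{\dot\thetav}\leq\eta\sqrt{V(0)}=R'<R$ (their Lemma~\ref{lemma:theta_summable} and Lemma~\ref{lemma:link_params_singvals}), and then deduce~\ref{thm:item_overfitting} and~\ref{thm:item_signal_convergence} by the triangle inequality and the conic-singular-value bound exactly as you outline.

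One place where you underestimate the naive energy, and which is precisely the gap you flag at the end: the paper does use $V(t)=\lossyty+\tfrac12\norm{\dot\thetav(t)+\beta\gradlosstheta}^2$, with no extra $\norm{\thetavt-\thetav_\infty}^2$ term and no restricted-secant inequality, and it \emph{does} deliver the rate $\alpha/2$, not $\alpha/4$. The trick is that after Young's inequality one has $\dot V\leq-\tfrac{\alpha}{2}\norm{\dot\thetav}^2-\beta(1-\tfrac{\beta\alpha}{2})\norm{\gradlosstheta}^2$, and then one converts the kinetic term back into the Hessian-corrected velocity by Jensen/Young, $-\norm{\dot\thetav}^2\leq-\tfrac12\norm{\dot\thetav+\beta\gradlosstheta}^2+\beta^2\norm{\gradlosstheta}^2$, giving $\dot V\leq-\tfrac{\alpha}{4}\norm{\dot\thetav+\beta\gradlosstheta}^2-\beta(1-\beta\alpha)\norm{\gradlosstheta}^2$. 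Since $V$ contains $\tfrac12\norm{\dot\thetav+\beta\gradlosstheta}^2$, the $\tfrac{\alpha}{4}$ coefficient yields a decay rate of $\tfrac{\alpha}{2}$ for that part, while the PL inequality turns $\beta(1-\beta\alpha)\norm{\gradlosstheta}^2$ into $2\beta(1-\beta\alpha)\sigminjgz^2\sigminA^2\lossyty$, and with $\beta=\tfrac1{2\alpha}$, $\alpha=\sigminjgz\sigminA$ these two rates coincide at $\alpha/2$. The constants $\xi$ and $\eta$ then drop out exactly as you compute (your bound on $\tfrac12\norm{\mathbf w(0)}^2$ and the $\eta$ from the $\sqrt{V}$-decay argument are the same as the paper's). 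So the approach is right; the missing ingredient was only that Jensen step, which closes the loop with the simple energy you already proposed.
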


\begin{proof}
    See Section~\ref{subsec:proof_main_continuous}
\end{proof}

\subsection{Discussion and consequences}\label{sec:discussions_continuous}


{\textbf{Role of $\alpha$ and accelerated rate}~}
The first result of our theorem shows that if the network training is well-initialized, i.e. according to \eqref{eq:bndR}, and with an appropriate choice of $\alpha$ and $\beta$, the network weights will converge to a zero-loss solution and the loss decreases at an exponential rate that depends on $\sigminjgz\sigminA$. It was shown in \cite[Theorem~3.2]{buskulic2024convergenceJournal} that training with gradient flow has also exponential convergence at the rate $O\pa{\exp(-\sigminjgz^2\sigminA^2 t/4)}$. Compared to the latter, our rate in Theorem~\ref{thm:main} is provably accelerated in the ill-conditioned case. This is expected as it is known for optimization dynamics featuring inertia in the smooth and strongly convex case when $\alpha$ is appropriately tuned as the square-root of the strong convexity modulus~\cite{polyak_methods_1964,attouch2022first}. This rate is known to be optimal \cite{nesterov2013introductory} for this class of objectives. Our setting is of course more intricate and general as our problem is nonconvex. One also observes that the effect of acceleration depends on the conditioning of the forward operator and specifically, the worse the conditioning, the better the acceleration. However, whereas in the gradient flow case the multiplying constant in the rate depends solely on $\lossyzy$, the extra-term $\xi$ in the constant in the rates of Theorem~\ref{thm:main} reveals a quadratic dependence on the condition numbers $\kappa(\Jgz)$ and $\kappa(\fop)$. This is a mild price to pay compared to the exponential gain in the rate. 

The viscous and geometric damping parameters $\alpha$ and $\beta$ were optimized to achieve the (optimal) accelerated exponential rate. However, one has to keep in mind that this rate only holds in the initialization regime where \eqref{eq:bndR} is true (which will turn out to hold in the overparameterized regime as we will show in the forthcoming section). Since both $R'$ and the bound~\eqref{eq:thetarate} on the convergence of the network parameters grow linearly with $\eta$, it is tempting to make $\eta$ as small as possible by adjusting $\alpha$ and $\beta$ as $\eta$ clearly depends on them, see~\eqref{eq:eta}. However, minimizing $\eta$ in such a way should be done without harming the exponential convergence rate, particularly in the ill-conditioned setting i.e. $\sigminA$ small. In fact, minimizing \ref{eq:eta} in $\alpha$ and $\beta$ suggests to take $\beta=1/\alpha$ and $\alpha$ as a constant. In turn, from~\eqref{eq:convrate_any_alpha_beta}, the convergence rate would be $O(1)$, which is vacuous. Even choosing $\beta$ arbitrarily close to, but strictly less than, $1/\alpha$, would result in a rate $O\pa{\exp(-c\sigminjgz^2\sigminA^2 t)}$, for some $c > 0$, when $\sigminA$ is small. This is the same rate as the gradient flow which cancels out the advantage brought by inertia.  On the other hand, our choice of $\alpha$ and $\beta$ leads to the (optimal) accelerated rate, but does not minimize $\eta$, which will scale as $O(\sigminjgz^{-2}\sigminA^{-2})$ for the ill-conditioned case. $\eta$ would scale as $O(\sigminjgz^{-1}\sigminA^{-1})$ when minimizing it in $\alpha$ and $\beta$.

\noindent{\textbf{Early stopping}}
While \eqref{eq:lossrate_continuous} ensures convergence to a zero-loss solution, it does so by overfitting the noise inherent to the inverse problem. A classical way to avoid this is to use an early stopping strategy, hence ensuring that the solution in the observation space will lie in a ball around the sought after observation $\yvc$. This is precisely what \eqref{eq:yrate} states. It is worth mentioning that early stopping has been used by practitioners of the DIP model trained with gradient descent and our results give this intuition firm theoretical grounds. In view of our discussion on the rate accelerated above, it is clear that our early stopping bound is much better than that of \cite[Theorem~3.2]{buskulic2024convergenceJournal} for gradient flow.

\noindent{\textbf{Signal recovery}}
Similarly to the case of gradient flow, see \cite[Theorem~3.2]{buskulic2024convergenceJournal}, our recovery bound on $\xvc$ in \eqref{eq:xrate} is the sum of three terms. The last two ones correspond respectively to the ``noise error'' inherent to the forward model, and the ``modeling error'' which captures the expressivity of the trained network, i.e. its ability to generate solutions close to $\xvc$. These two terms are exactly the same as those in \cite[Theorem~3.2]{buskulic2024convergenceJournal}. The first term \eqref{eq:xrate} is an ``optimization error''. This is where the role of inertia is important and this error in our case is much smaller as discussed above.

The bounds in \eqref{eq:xrate} depend on the minimal conic singular value $\lmin(\fop;T_{\Sigma'}(\xvcsigma))$ which is bounded away from zero thanks to the restricted injectivity condition \eqref{ass:A_inj}. This is a classical and minimal assumption in the inverse problem literature if one hopes for recovering $\xvc$ even in the noiseless case. Assuming the rows of $\fop$ are linearly independent, one easily checks that \eqref{ass:A_inj} imposes that $m \geq \dim(T_{\Sigma'}(\xvcsigma))$. As it was also observed in \cite{buskulic2024convergenceJournal}, there is a trade-off between the restricted injectivity condition \eqref{ass:A_inj} and the expressivity of the network. If the model is highly expressive then $\dist(\xvc,\Sigma')$ will be smaller. But this is likely to come at the cost of making $\lmin(\fop;T_{\Sigma'}(\xvcsigma))$ decrease, as restricted injectivity may be required to hold on a larger subset (cone). Although this observation is to be tempered as the dimension $T_{\Sigma'}(\xvcsigma)$ does not necessarily increase as $\Sigma'$ gets larger. This discussion relates with the work on the instability phenomenon observed in learned reconstruction methods \cite{antun2020instabilities,gottschling2020troublesome}. In fact, one fundamental problem that creates these instabilities in the reconstruction is that $\ker(\fop)$ can be non-trivial. The restricted injectivity condition guarantees stable reconstruction but the error bound degrades with decreasing $\lmin(\fop;T_{\Sigma'}(\xvcsigma))$.


\subsection{Wide Two-Layer DIP Network}\label{sec:dip}

It is now natural to ask when a network obeys~\eqref{eq:bndR} and thus enjoys the convergence and reconstruction guarantees of Theorem~\ref{thm:main}. Our way of ensuring this is to be in a sufficiently overparametrized regime; see Section~\ref{sec:prior} for a review of the role of overparametrized when training neural networks. Informally, the question pertains to determining, for a network architecture and a random initialization, the number of neurons or parameters of the network to ensure the validity of~\eqref{eq:bndR} with high probability. Indeed, good statistical properties arise from overparametrized networks, enabling control over the eigenspace of the network Jacobian at initialization. Similar to other related works, we will primarily focus on studying shallow networks. Extensions to deeper network are beyond the scope of this chapter.

Recall that in our self-supervised DIP setting, the input $\uv$ is sampled randomly and fixed during training. The network is then trained to map $\uv$ to a signal $\xv$ such that $\fop\xv$ is close to $\yv$. We use a one-hidden layer network by taking $L=2$ in Definition~\ref{def:nn}, which we write as
\begin{align}\label{eq:dipntk}
    \gdip = \frac{1}{\sqrt{k}}\Vv\phi(\Wv\uv)
\end{align}
with  $\Vv \in \R^{n \times k}$ and $\Wv \in \R^{k \times d}$, and $\phi$ an element-wise nonlinear activation function. To establish our overparametrization bound, we will impose the following assumptions where $\Cphi = \sqrt{\Expect{X\sim\stddistrib}{\phi(X)^2}}$ and $\Cphid = \sqrt{\Expect{X\sim\stddistrib}{\phi'(X)^2}}$\footnote{Observe that $\Cphid \leq B$ under \ref{ass:phi_first_diff}.}:
\begin{mdframed}[frametitle={Assumptions on the network input and initialization}]
\begin{assumption}\label{ass:u_sphere}
$\uv$ is a uniform vector on $\sph^{d-1}$;
\end{assumption}
\begin{assumption}\label{ass:w_init}
$\Wv(0)$ has iid entries from $\stddistrib$ and $\Cphi < +\infty$; 
\end{assumption}
\begin{assumption}\label{ass:v_init}
$\Vv(0)$ is independent from $\Wv(0)$ and $\uv$, and its entries are zero-mean independent $D$-bounded random variables of unit variance.
\end{assumption}
\end{mdframed}

These assumptions are quite standard for neural networks and very easy to verify. For~\ref{ass:v_init}, as an example, one can use iid entries chosen from the uniform distribution on a compact interval. We can now state our overparametrization bound under which~\eqref{eq:bndR} holds, and thus, so do the guarantees of Theorem~\ref{thm:main}. We denote the signal-to-noise ratio as $\SNR=\norm{\fop\xvc}/\norm{\veps}$.
\begin{theorem}\label{th:dip_two_layers_converge}
Suppose that assumptions \ref{ass:l_MSE} and~\ref{ass:phi_first_diff} hold. Consider the one-hidden layer network \eqref{eq:dipntk} where both layers are trained with the initialization satisfying \ref{ass:u_sphere} to \ref{ass:v_init} and the architecture parameters obeying
\begin{align*}
k \geq C(1+\kappa(\fop)^4)\frac{\max\pa{\sigminA^4,c_1}}{\min\pa{{\sigminA^8}, c_2}} n \pa{\norm{\fop}^4 n^2 + \norminf{\fop\xvc}^4\pa{1+ \SNR^{-1}}^4 m^2}.
\end{align*}
Then \eqref{eq:bndR} holds with probability at least $1 - 5e^{-(n-1)}- 2n^{-1}$. Here $c_1, c_2, C > 0$ are  absolute constants that depend only on $\Cphi, \Cphid, B$ and $D$.
\end{theorem}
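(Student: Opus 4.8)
The plan is to establish that, under the stated lower bound on $k$, the conditions in~\eqref{eq:bndR} --- $\sigminjgz>0$ together with $R'<R$, for $R',R$ as in~\eqref{eq:RandR'} --- hold on an event of probability at least $1-5e^{-(n-1)}-2n^{-1}$, which is exactly the assertion of the theorem (and is what makes Theorem~\ref{thm:main} applicable, with $\alpha=\sigminjgz\sigminA$ and $\beta=1/(2\alpha)$, an admissible pair since $0<\beta<2/\alpha$). The starting point is the block structure of the Jacobian of~\eqref{eq:dipntk}: writing $\gv(\uv,\thetav)=\frac{1}{\sqrt k}\sum_{j=1}^{k}\Vv_j\,\phi(\dotprod{\Wv^j}{\uv})$ and differentiating per hidden unit gives $\partial_{\Vv_j}\gv=\frac{1}{\sqrt k}\phi(\dotprod{\Wv^j}{\uv})\,I_n$ and $\partial_{\Wv^j}\gv=\frac{1}{\sqrt k}\phi'(\dotprod{\Wv^j}{\uv})\,\Vv_j\uv\tp$, so that, since $\norm{\uv}=1$,
\[
\Jg(\thetav)\Jg(\thetav)\tp=\frac{1}{k}\norm{\phi(\Wv\uv)}^2\,I_n+\frac{1}{k}\sum_{j=1}^{k}\phi'(\dotprod{\Wv^j}{\uv})^2\,\Vv_j\Vv_j\tp .
\]
The second summand being positive semidefinite, $\sigmin(\Jg(\thetav))^2\geq\frac1k\norm{\phi(\Wv\uv)}^2$; and since $\phi'(\dotprod{\Wv^j}{\uv})^2\le B^2$ by~\ref{ass:phi_first_diff}, $\sigmax(\Jg(\thetav))^2\leq\frac1k\norm{\phi(\Wv\uv)}^2+\frac{B^2}{k}\norm{\Vv}^2$.

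At initialization, by~\ref{ass:u_sphere} the scalars $\dotprod{\Wv^j(0)}{\uv}$ are (conditionally on $\Wv(0)$) iid copies of a single weight, so $\frac1k\norm{\phi(\Wv(0)\uv)}^2$ is an average of $k$ iid nonnegative variables with mean $\Cphi^2<+\infty$ (\ref{ass:w_init}); concentration then puts it in $[\tfrac12\Cphi^2,2\Cphi^2]$, hence $\sigminjgz\geq\Cphi/\sqrt2>0$ --- the first half of~\eqref{eq:bndR} --- on an event of probability $\geq1-e^{-ck}\ge1-e^{-(n-1)}$ once $k\gtrsim n$. Combining with $\norm{\Vv(0)}\lesssim\sqrt n+\sqrt k$ (a standard bound for a matrix with iid bounded entries, \ref{ass:v_init}) yields, for $k\gtrsim n$, $\sigmax(\Jgz)\lesssim\max(\Cphi,B)$, so $\sigminjgz\asymp\Cphi$, $\kappa(\Jgz)\lesssim1$, $\xi\lesssim1+\kappa(\fop)^2$, and, from~\eqref{eq:RandR'}, $\eta\lesssim\max(\sigminA,1)/\min(\sigminA^2,1)$. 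It remains to bound $\lossyzy\leq\norm{\fop}^2\norm{\gv(\uv,\thetavz)}^2+\norm{\yv}^2$: conditioning on $\Wv(0)$ and using concentration of $\norm{\Vv(0)\av}^2$ about $n\norm{\av}^2$ for $\av=\frac1{\sqrt k}\phi(\Wv(0)\uv)$ (of norm $\approx\Cphi$) gives $\norm{\gv(\uv,\thetavz)}^2\lesssim n\Cphi^2$, and $\norm{\yv}^2\leq2\norm{\fop\xvc}^2+2\norm{\veps}^2\leq2m\norminf{\fop\xvc}^2\pa{1+\SNR^{-1}}^2$; hence, on the intersection of these events, $(R')^4=\eta^4\xi^2\lossyzy^2\lesssim(1+\kappa(\fop)^4)\tfrac{\max(\sigminA^4,c_1)}{\min(\sigminA^8,c_2)}\big(\norm{\fop}^4n^2+\norminf{\fop\xvc}^4(1+\SNR^{-1})^4m^2\big)$.

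The final and most delicate step is to control $\Lip_{\Ball(\thetavz,R)}(\Jg)$ and to resolve $R'<R$. Because the second derivatives of~\eqref{eq:dipntk} are block-diagonal across hidden units, the local Lipschitz constant of $\Jg$ decays in $k$: using $|\phi'|\leq B$ and the $B$-Lipschitzness of $\phi'$ (\ref{ass:phi_first_diff}), $\norm{\uv}=1$, and Cauchy--Schwarz over the hidden units, one obtains $\Lip_{\Ball(\thetavz,R)}(\Jg)\lesssim\frac{B}{\sqrt k}\big(\max_{j\in[k]}\norm{\Vv(0)_j}+R\big)\lesssim\frac{B}{\sqrt k}(\sqrt n+R)$, where the last step uses concentration of the column norms $\norm{\Vv(0)_j}$ around $\sqrt n$ together with a union bound over the $k$ columns (harmless since $k$ is polynomial). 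Since $\rho\mapsto\Lip_{\Ball(\thetavz,\rho)}(\Jg)$ is nondecreasing, to get $R'<R$ it suffices to exhibit one radius $\tilde R>R'$ with $\Lip_{\Ball(\thetavz,\tilde R)}(\Jg)\leq\sigminjgz/(2\tilde R)$: if $R<\tilde R$ then monotonicity would give $R=\sigminjgz/\big(2\Lip_{\Ball(\thetavz,R)}(\Jg)\big)\geq\tilde R$, a contradiction. Taking $\tilde R=2R'$ and plugging in the Lipschitz bound reduces this to $\sqrt k\gtrsim BR'(\sqrt n+R')/\sigminjgz$, i.e.\ (in the regime $R'\gtrsim\sqrt n$, the other case being easier) $k\gtrsim B^2(R')^4/\sigminjgz^2$; inserting the bound on $(R')^4$ above and $\sigminjgz\asymp\Cphi$ shows this is implied by the displayed lower bound on $k$, which also forces $k\gtrsim n$ as used throughout. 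A union bound over the (at most seven) events above gives the failure probability $5e^{-(n-1)}+2n^{-1}$.

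The main obstacle is precisely this last block. One must bound the \emph{local} Lipschitz constant of $\Jg$ in a form that truly vanishes as $k\to\infty$, which hinges on exploiting the block-diagonal structure of the network's second derivatives so that $\max_{j}\norm{\Vv(0)_j}\asymp\sqrt n$ enters in place of the much larger $\norm{\Vv(0)}\asymp\sqrt k$, and then one must carry every polynomial factor in $n$, $m$, $\kappa(\fop)$, $\norm{\fop}$, $\norminf{\fop\xvc}$, $\SNR$ and $\sigminA$ faithfully through the implicit inequality $R'<R$ to land on the stated overparametrization bound. The probabilistic ingredients are standard concentration; keeping the dependencies sharp is where the work lies.
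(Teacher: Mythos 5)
Your proof is correct and follows the same global skeleton as the paper's (bound $\sigminjgz$ and $\norm{\Jgz}$, bound $\lossy(\yvz)$, bound the local Lipschitz constant of $\Jg$, then resolve $R'<R$), but several intermediate steps take genuinely different, and in places cleaner, routes.

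The main departure is in the lower bound on $\sigminjgz$. You observe that
\[
\Jg(\thetav)\Jg(\thetav)\tp=\tfrac{1}{k}\norm{\phi(\Wv\uv)}^2 I_n+\tfrac{1}{k}\sum_{j}\phi'(\dotprod{\Wv^j}{\uv})^2\,\Vv_j\Vv_j\tp ,
\]
and since the second summand is positive semidefinite while the first is a scalar multiple of the identity, $\sigmin(\Jg)^2\geq\frac1k\norm{\phi(\Wv\uv)}^2$ deterministically; a one-dimensional Gaussian concentration then gives $\sigminjgz\gtrsim\Cphi$ with exponentially small failure once $k\gtrsim n$. The paper instead appeals to a matrix concentration result (\cite[Lemma~4.9]{buskulic2024convergenceJournal}), which needs $k/\log k\gtrsim n\log n$, returns the sharper constant $\sqrt{\Cphi^2+\Cphid^2}/2$, and is responsible for the $2n^{-1}$ term in the probability. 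Your bound is weaker when $\Cphid\gg\Cphi$, but because the theorem's constants $c_1,c_2,C$ are allowed to depend on $\Cphi,\Cphid,B,D$, the simpler route is perfectly adequate here and even gives a slightly better failure probability (the theorem's stated probability is then just a looser lower bound). The second departure is how you resolve $R'<R$: rather than deriving and using the scaling $R\gtrsim(k/n)^{1/4}$ as the paper does, you exhibit a candidate radius $\tilde R=2R'$ satisfying $\Lip_{\Ball(\thetavz,\tilde R)}(\Jg)\leq\sigminjgz/(2\tilde R)$ and exploit monotonicity of $\rho\mapsto\Lip_{\Ball(\thetavz,\rho)}(\Jg)$ to conclude $R\geq\tilde R>R'$. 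This yields the condition $k\gtrsim R'^2(\sqrt n+R')^2$, which is slightly sharper than the paper's $k\gtrsim nR'^4$, and is of course still implied by the stated overparametrization bound. The third, more minor, departure is in bounding $\lossy(\yvz)$: you use sub-gaussian concentration of $\norm{\Vv(0)\av}^2$ about $n\norm{\av}^2$ conditionally on $\Wv(0)$, where the paper instead uses a $1/2$-net covering of $\sph^{n-1}$ plus Hoeffding's inequality; both give the same $O\pa{\norm{\fop}^2 n+\norminf{\fop\xvc}^2(1+\SNR^{-1})^2 m}$. One small stylistic remark: your appeal to a probabilistic bound on $\max_j\norm{\Vv(0)_j}$ with a union bound over $k$ columns is unnecessary, since \ref{ass:v_init} makes the entries $D$-bounded, so $\norm{\Vv(0)_j}\leq D\sqrt n$ holds deterministically (and this is exactly what the paper's Lemma~\ref{lemma:lip-Jacobian-both-layers} uses). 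None of this affects correctness: the proof is sound, and where it differs it is tighter or more elementary than the paper's.
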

\begin{proof}
See Section~\ref{subsec:dip_two_layers_converge}.
\end{proof}

The overparametrization bound scales as $k \geq n^3 + nm^2$, which is similar to gradient flow \cite[Theorem~4.1]{buskulic2024convergenceJournal}. However, as we discussed in Section~\ref{sec:discussions_continuous}, training with \eqref{eq:DIN} achieves an optimal exponential rate but at the price of the initialization condition which becomes more stringent as the conditioning of $\fop$ degrades. This is clearly reflected in our overparametrization bound. Indeed, in the extremely ill-conditioned case, we have an extra multiplying factor that scales as $\kappa(\fop)^4$ compared to~\cite[Theorem~4.1]{buskulic2024convergenceJournal}. Whether this can be improved to get the best of both worlds is an open question that we leave to a future work. 

We observe again that the set $\Sigma'$ on which \ref{ass:A_inj} is required to hold is random. Nevertheless, using similar arguments as in \cite[Remark~4.2]{buskulic2024convergenceJournal}, one can show that $\Sigma' \subset \Sigma_{\Ball_{\rho}(0)}$, where
\begin{equation*}
\rho \lesssim \frac{\max\pa{\sigminA,(c_1)^\frac{1}{4}}}{\min\pa{{\sigminA^2}, (c_2)^\frac{1}{4}}}(1+\kappa(\fop))\pa{\norm{\fop}\sqrt{n} + \norminf{\fop\xvc}\pa{1+ \SNR^{-1}}\sqrt{m}} + \sqrt{k}\pa{\sqrt{n}+\sqrt{d}}
\end{equation*}
with probability at least $1 - 5e^{-(n-1)} - 2e^{-kd} - 2n^{-1}$. In the overparametrized regime, $\rho$ scales as $O\pa{\sqrt{k}\pa{\sqrt{n}+\sqrt{d}}}$. This confirms the intuitively expected behaviour that expressivity of $\Sigma'$ is better as the overparametrization increases.

\subsection{Proofs}\label{sec:proofs}

\subsubsection{Proof of Theorem~\ref{thm:main}}\label{subsec:proof_main_continuous}

We will start by showing some intermediate lemmas necessary to prove our main theorem. For these proofs, we will use the following Lyapunov function given in the original work of~\cite{alvarez2002second}:
\begin{align}\label{eq:lyapunov}
    V(t) = \lossyty + \frac{1}{2}\norm{\thetavdott + \beta\gradlosstheta}^2.
\end{align}
We prove in the following lemma that $V(t)$ converges which is then used in the proof of Proposition~\ref{prop:global_solution} to obtain that $\thetav$ is a global solution of~\eqref{eq:DIN}.

\begin{lemma}\label{lemma:thetadotL2}
Assume that \ref{ass:l_MSE}-\ref{ass:phi_first_diff} hold, $\alpha > 0$ and $0 \leq \beta \leq \frac{2}{\alpha}$. Let $\thetav(\cdot)$ be a solution trajectory of~\eqref{eq:DIN}. Then,
    \begin{enumerate}[label=(\roman*)]
    \item $V(t)$ is nonincreasing and converges.\label{lemma:thetadotL2claim1}
    \item $\dot{\thetav}(\cdot)\in L^2([0,+\infty[)$. If $\beta \in ]0,2/\alpha[$, then $\nabla_{\thetav} \lossy(\yv(\cdot))\in L^2([0,+\infty[)$.\label{lemma:thetadotL2claim2}
    \item If $\thetav(\cdot)$ is bounded and $\beta \in ]0,2/\alpha[$, then $\lim_{t\to+\infty}\norm{\gradlosstheta}=0$.\label{lemma:thetadotL2claim3}
    \end{enumerate}
\end{lemma}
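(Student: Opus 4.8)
The plan is to run the classical Lyapunov analysis of~\cite{alvarez2002second} for the function $V$ in~\eqref{eq:lyapunov}, being careful that under \ref{ass:l_MSE}--\ref{ass:phi_first_diff} the loss is only $\cC^1$ in $\thetav$, so the energy identities hold only almost everywhere. Set $g(t) = \gradlosstheta$ and $p(t) = \thetavdott + \beta g(t)$, so that $V(t) = \lossyty + \tfrac12\norm{p(t)}^2$. By Proposition~\ref{prop:cauchy_lip_condensed} and the local Lipschitz continuity of $\thetav\mapsto\nabla_{\thetav}\lossy(\fop\gv(\uv,\thetav))$ recorded below \ref{ass:phi_first_diff}, on compact subintervals of the interior of the existence interval $\thetav(\cdot)$ is $\cC^1$, $\thetavdott$ is absolutely continuous, and hence so are $t\mapsto g(t)$, $p$ and $V$; thus $V$ is differentiable a.e.\ and \eqref{eq:DIN} holds a.e. Differentiating $V$ and substituting $\thetavddott + \beta\derivgradlosstheta = -\alpha\thetavdott - g$ together with $\thetavdott = p - \beta g$, a straightforward computation gives
\[
\dot V = \alpha\beta\dotprod{p}{g} - \beta\norm{g}^2 - \alpha\norm{p}^2 \leq -Q\pa{\norm{g},\norm{p}},
\]
where $Q(s,u) = \beta s^2 - \alpha\beta su + \alpha u^2$ and the inequality is Cauchy--Schwarz (using $\alpha\beta\geq0$). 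Regarded as a quadratic in $s$, $Q(\cdot,u)$ has discriminant $\alpha\beta u^2(\alpha\beta-4)$, which is $\leq 0$ since $\alpha\beta\leq 2$; hence $Q\geq0$ on $\R_+^2$, so $\dot V\leq0$ a.e.\ and $V$ is nonincreasing. Since $V(t)\geq\lossyty\geq0$ by \ref{ass:l_MSE}, $V$ is bounded below and therefore converges, which is \ref{lemma:thetadotL2claim1}.

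For \ref{lemma:thetadotL2claim2}, integrating $\dot V \leq -Q(\norm{g},\norm{p})$ yields $\int_0^{+\infty} Q(\norm{g},\norm{p})\,dt \leq V(0) - \lim_{t\to+\infty}V(t) < +\infty$. If $\beta = 0$ the cross term in $\dot V$ vanishes and in fact $\dot V = -\alpha\norm{\thetavdott}^2$, so $\thetavdott\in L^2$. If $\beta > 0$, then $0 < \alpha\beta \leq 2 < 4$, so $Q$ is positive definite and $Q(s,u)\geq c(s^2+u^2)$ for some $c>0$; hence $g(\cdot) = \nabla_{\thetav}\lossy(\yv(\cdot))\in L^2$ and $p(\cdot)\in L^2$, whence $\thetavdott = p - \beta g \in L^2$.

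For \ref{lemma:thetadotL2claim3}, assume $\thetav(\cdot)$ is bounded and $\beta\in\,]0,2/\alpha[$. The trajectory then stays in a compact set $K$, on which $\nabla_{\thetav}\lossy(\fop\gv(\uv,\cdot))$ is bounded and Lipschitz. By \ref{lemma:thetadotL2claim1}, $\norm{p(t)}^2\leq2V(0)$, so $p$ is bounded, and hence $\thetavdott = p - \beta g$ is bounded; consequently $t\mapsto g(t)$ is Lipschitz on $[0,+\infty[$, so $h(t):=\norm{g(t)}^2$ is Lipschitz (product of a bounded map and a Lipschitz map), in particular uniformly continuous. Since $h\geq0$ and $h\in L^1$ by \ref{lemma:thetadotL2claim2}, and a nonnegative, uniformly continuous, integrable function on $[0,+\infty[$ tends to $0$ at infinity, we conclude $\lim_{t\to+\infty}\norm{\gradlosstheta}=0$.

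This is a textbook Lyapunov/Barbalat argument, so the difficulties are bookkeeping rather than conceptual. The one computational point to get right is that the dissipation form $Q$ is positive semidefinite exactly on the admissible window $\beta\leq2/\alpha$, and positive definite once $\beta>0$ — this is what ties the parameter restriction to the $L^2$ conclusions; the other, for \ref{lemma:thetadotL2claim3}, is assembling the boundedness and local-Lipschitz facts that make $\norm{\gradlosstheta}^2$ uniformly continuous, so the Barbalat-type conclusion applies. The lack of $\cC^2$ regularity of the loss is harmless since the energy identity is only needed a.e. Finally, for the statements phrased over $[0,+\infty[$ one reads them on the maximal existence interval; this interval is all of $[0,+\infty[$ under the boundedness hypothesis of \ref{lemma:thetadotL2claim3} (by the blow-up alternative following Proposition~\ref{prop:cauchy_lip_condensed}), and more generally in the regime of Proposition~\ref{prop:global_solution}.
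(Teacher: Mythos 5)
Your proof is correct and follows essentially the same Lyapunov argument as the paper: differentiate $V$, substitute the ODE, show the dissipation is nonnegative on the admissible parameter window, integrate for the $L^2$ claims, and conclude claim (iii) via uniform continuity of $\norm{\gradlosstheta}^2$ together with Barbalat's lemma. The only cosmetic difference is that you bound $\dot{V}$ by Cauchy--Schwarz and positive (semi)definiteness of the quadratic form $Q$ in $(\norm{g},\norm{p})$, whereas the paper applies Young's inequality to the cross term to obtain $\dot{V}(t) \leq -\tfrac{\alpha}{2}\norm{\thetavdott}^2 - \beta\bigl(1-\tfrac{\beta\alpha}{2}\bigr)\norm{\gradlosstheta}^2$; both manipulations start from the same identity for $\dot{V}$.
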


\begin{proof}
    We start by differentiating the Lyapunov function $V(t)$ and obtain that
    \begin{align}
        &\Vdott \leq \dotprod{\gradlosstheta}{\thetavdott} + \dotprod{\thetavdott+\beta\gradlosstheta}{\thetavddott + \beta\derivgradlosstheta} \nonumber\\
        &\leq \dotprod{\thetavdott}{\gradlosstheta + \thetavddott+\beta\derivgradlosstheta} + \beta\dotprod{\gradlosstheta}{\thetavddott+\beta\derivgradlosstheta} \nonumber.
    \end{align}
    We now replace $\thetavddott + \beta\derivgradlosstheta$ by
using~\eqref{eq:DIN} which gives
    \begin{align*}
        \Vdott \leq -\alpha\norm{\thetavdott}^2 - \beta\norm{\gradlosstheta}^2 + \beta\alpha\dotprod{\gradlosstheta}{-\thetavdott}.
    \end{align*}
    Applying Young's inequality we get
    \begin{align}\label{eq:lyapu_deriv}
        \Vdott &\leq -\alpha\norm{\thetavdott}^2 - \beta\norm{\gradlosstheta}^2 + \frac{\beta^2\alpha}{2}\norm{\gradlosstheta}^2 + \frac{\alpha}{2} \norm{\thetavdott}^2 \nonumber\\
        &\leq -\frac{\alpha}{2}\norm{\thetavdott}^2 - \beta\pa{1 - \frac{\beta\alpha}{2}}\norm{\gradlosstheta}^2.
    \end{align}
    We get claim~\ref{lemma:thetadotL2claim1} as $V$ is nonnegative and given the choice of $\beta$. Integrating \eqref{eq:lyapu_deriv}, we also obtain claim~\ref{lemma:thetadotL2claim2}.

By our assumptions, we know that $\nabla_{\thetav}\lossy(\fop\gv(\uv,\cdot))$ is locally Lipschitz continuous. By the boundedness assumption, we have that $\nabla_{\thetav} \loss(\yv(\cdot))\in L^\infty([0,+\infty[)$. Moreover, since $\dot{\thetav}(\cdot) \in L^2([0,+\infty[)$ and is continuous, then $\dot{\thetav}(\cdot) \in L^\infty([0,+\infty[)$. These facts imply that there exists $L > 0$ such that for every $s,t \geq 0$
\begin{align*}
|\norm{\nabla_{\thetav} \loss(\yv(t))}^2 - \norm{\nabla_{\thetav} \loss(\yv(s))}^2| 
&\leq 2\sup_{\tau \geq 0}\norm{\nabla_{\thetav} \loss(\yv(\tau))} \norm{\nabla_{\thetav} \loss(\yv(t)) - \nabla_{\thetav} \loss(\yv(s))} \\
&\leq 2L\sup_{\tau \geq 0}\norm{\nabla_{\thetav} \loss(\yv(\tau))} \norm{\thetav(t) - \thetav(s)} \\
&\leq 2L\sup_{\tau \geq 0}\norm{\nabla_{\thetav} \loss(\yv(\tau))}\sup_{u \geq 0}\norm{\dot{\thetav}(u)} |t-s| ,
\end{align*}
and thus $\norm{\nabla_{\thetav} \loss(\yv(\cdot))}^2$ is uniformly continuous. Since it is also integrable, Barbălat lemma yields claim~\ref{lemma:thetadotL2claim3}.

\end{proof}

\begin{lemma}
\label{lemma:theta_summable}
Assume that \ref{ass:l_MSE} and \ref{ass:phi_first_diff} hold, $\alpha > 0$ and $0 < \beta < \frac{2}{\alpha}$. Let $\thetav(\cdot)$ be a solution trajectory of \eqref{eq:DIN}. If for all $t \geq 0$, $\sigminjgt \geq \frac{\sigminjgz}{2} > 0$, then $\dot{\thetav}(\cdot) \in L^1([0,+\infty[)$. In turn, $\lim_{t \to +\infty}\thetav(t)$ exists.
\end{lemma}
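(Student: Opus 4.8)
The plan is to strengthen the $L^2$ conclusions of Lemma~\ref{lemma:thetadotL2} into an $L^1$ bound on $\dot{\thetav}$ by first showing that, under the trajectory-wise lower bound $\sigminjgt \geq \sigminjgz/2 > 0$, the Lyapunov function $V$ of \eqref{eq:lyapunov} decays exponentially, and then combining the dissipation estimate \eqref{eq:lyapu_deriv} with a Cauchy--Schwarz argument on unit intervals to sum up $\norm{\dot{\thetav}}$. Once $\dot{\thetav}\in L^1$, the existence of $\lim_{t\to+\infty}\thetav(t)$ follows from the Cauchy criterion applied to $\thetav(t)=\thetavz+\int_0^t\dot{\thetav}(s)\,ds$.

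The engine of the exponential decay is a Polyak--Lojasiewicz inequality for the MSE loss. By Assumption~\ref{ass:l_MSE} and the chain rule, $\gradlosstheta = \Jgt\tp\fop\tp(\yvt - \yv)$; since $\yvt,\yv \in \ran{\fop}$, the residual $\yvt - \yv$ lies in $\ran{\fop}$, so $\norm{\fop\tp(\yvt-\yv)} \geq \sigminA\norm{\yvt-\yv}$, and with the hypothesis on $\sigminjgt$ one gets
\[
\norm{\gradlosstheta} \geq \sigminjgt\,\sigminA\,\norm{\yvt-\yv} \geq \frac{\sigminjgz\sigminA}{2}\sqrt{2\,\lossyty} ,
\]
i.e. $\norm{\gradlosstheta}^2 \geq \mu\,\lossyty$ with $\mu \eqdef \sigminjgz^2\sigminA^2/2 > 0$. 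On the other hand, $\norm{a+b}^2 \leq 2\norm{a}^2+2\norm{b}^2$ together with this inequality gives $V(t) = \lossyty + \frac{1}{2}\norm{\thetavdott+\beta\gradlosstheta}^2 \leq \norm{\thetavdott}^2 + \pa{\beta^2 + 1/\mu}\norm{\gradlosstheta}^2$ (bounding $\lossyty \leq \norm{\gradlosstheta}^2/\mu$). Plugging this into the estimate \eqref{eq:lyapu_deriv}, $\Vdott \leq -\frac{\alpha}{2}\norm{\thetavdott}^2 - \beta\pa{1-\frac{\beta\alpha}{2}}\norm{\gradlosstheta}^2$, established in the proof of Lemma~\ref{lemma:thetadotL2}, I obtain the differential inequality $\Vdott \leq -c\,V(t)$ for a.e.\ $t$, with $c = \min\pa{\frac{\alpha}{2},\ \beta(1-\beta\alpha/2)/(\beta^2+1/\mu)}$, which is strictly positive because $\alpha>0$, $0<\beta<2/\alpha$ (so $1-\beta\alpha/2>0$) and $\sigminjgz\sigminA>0$. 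Since $V$ is absolutely continuous on compact subintervals of the maximal existence interval $[0,T(\thetavz)[$, Grönwall's inequality yields $V(t)\leq V(0)\exp\pa{-ct}$ there.

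To conclude, recall from \eqref{eq:lyapu_deriv} that $\frac{\alpha}{2}\norm{\thetavdott(t)}^2 \leq -\Vdott$; hence, by Cauchy--Schwarz on each interval $[k,k+1]$ and the monotonicity of $V$,
\[
\int_k^{k+1}\norm{\thetavdott} \leq \pa{\int_k^{k+1}\norm{\thetavdott}^2}^{1/2} \leq \pa{\frac{2}{\alpha}\pa{V(k)-V(k+1)}}^{1/2} \leq \sqrt{\frac{2V(0)}{\alpha}}\,\exp\pa{-\frac{ck}{2}} ,
\]
and summing over $k\in\N$ shows $\dot{\thetav} \in L^1([0,T(\thetavz)[)$. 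In particular $\thetav(\cdot)=\thetavz+\int_0^{\cdot}\dot{\thetav}(s)\,ds$ stays bounded on $[0,T(\thetavz)[$, so the blow-up alternative forces $T(\thetavz)=+\infty$; then $\dot{\thetav}\in L^1([0,+\infty[)$, and $\norm{\thetav(t)-\thetav(s)}\leq\int_s^t\norm{\dot{\thetav}}\to 0$ as $s,t\to+\infty$ gives the existence of $\lim_{t\to+\infty}\thetav(t)$. The only genuinely delicate point is securing the Lojasiewicz inequality with a constant bounded away from zero along the whole trajectory, which is precisely what the hypothesis on $\sigminjgt$ (and $\yv\in\ran{\fop}$) provides; the rest is the standard ``Lojasiewicz $+$ dissipative Lyapunov function $\Rightarrow$ exponential decay $\Rightarrow$ summable velocity'' scheme, in which the strict inequality $\beta<2/\alpha$ is what keeps $V$ strictly dissipative.
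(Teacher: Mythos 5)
Your proof is correct, and the argument genuinely differs from the paper's. You first establish the Polyak--Łojasiewicz inequality $\norm{\gradlosstheta}^2 \geq \mu\,\lossyty$ with $\mu=\sigminjgz^2\sigminA^2/2$, deduce $V(t)\lesssim \norm{\thetavdott}^2+\norm{\gradlosstheta}^2$, combine with \eqref{eq:lyapu_deriv} and Gr\"onwall to get exponential decay of $V$, and then convert to an $L^1$ bound via Cauchy--Schwarz on unit intervals plus a geometric series. The paper instead differentiates $\sqrt{V}$ directly: using the same PL inequality and the same upper bound on $V$, it shows $\frac{d}{dt}\sqrt{V(t)} \leq -\eta^{-1}\bigl(\norm{\thetavdott}^2+\norm{\gradlosstheta}^2\bigr)^{1/2} \leq -\eta^{-1}\norm{\thetavdott}$ and integrates once to obtain the closed-form bound $\int_0^{\infty}\norm{\dot{\thetav}} \leq \eta\sqrt{V(0)}$. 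The two routes prove the same $L^1$ claim, but the paper's one-shot $\sqrt{V}$-derivative argument gives a tighter, directly interpretable constant $\eta\sqrt{V(0)}$, which is then reused verbatim to define $R'$ in \eqref{eq:RandR'} and to run the trapping argument of Lemma~\ref{lemma:link_params_singvals}\ref{claim:params_bounded_if_singvals_bounded}. Your dyadic-interval route yields $\sqrt{2V(0)/\alpha}\,/\bigl(1-e^{-c/2}\bigr)$, which is finite and perfectly adequate for the lemma, but is a looser and less convenient bound if one wanted to feed it downstream; in exchange, you get the exponential decay of $V$ for free at this stage, whereas the paper only derives it later inside the proof of Theorem~\ref{thm:main}. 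One minor remark: your appeal to the blow-up alternative to get $T(\thetavz)=+\infty$ is really the content of Proposition~\ref{prop:global_solution}, not of this lemma, but including it does no harm.
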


\begin{proof}
By assumption, we have for all $t>0$ that
    \begin{align}
        \norm{\gradlosstheta}^2 &= \norm{\Jg(t)\tp\fop\tp\pa{\yvt - \yv}}^2 
        \geq 2{\sigminjgz^2\sigminA^2}\lossyty  \label{eq:grad_loss_MSE_sigmin}
    \end{align}
    where, in the inequality, we used that $\yv(t)-\yv \in \ran{\fop} = \ker(\fop\tp)^\perp$. This argument will be used repeatedly though we will not specify it.
    Now if we proceed to our Lyapunov function $V(t)$, we observe that
    \begin{align}\label{eq:bound_lyapu}
        V(t) &= \lossyty + \frac{1}{2}\norm{\thetavdott +\beta\gradlosstheta}^2\nonumber\\
        &\leq \lossyty + \norm{\thetavdott}^2 + \beta^2\norm{\gradlosstheta}^2\nonumber\\
         {\small \eqref{eq:grad_loss_MSE_sigmin}}&\leq \norm{\thetavdott}^2 + \pa{\beta^2 + \pa{2\sigminjgz^{2}\sigminA^{2}}^{-1}}\norm{\gradlosstheta}^2\nonumber\\
         &\leq \max\pa{1,\beta^2+\pa{2\sigminjgz^{2}\sigminA^{2}}^{-1}}\pa{\norm{\thetavdott}^2 + \norm{\gradlosstheta}^2}.
    \end{align}
    We now look at $\deriv{V(t)^{1/2}}{t}$. Without loss of generality, we assume that $V(t) \neq 0$ as otherwise $V(s)=0$ for all $s \geq t$ (remember that $V$ is nonincreasing), and thus $\dot{\thetav}(s)=\nabla_{\thetav}\lossy(\yv(s))=0$ for all $s \geq t$, and there is nothing to prove. We have the following chain of inequalities
    \begin{align*}
        \deriv{}{t}\sqrt{V(t)} &= \frac{\dot{V}(t)}{2\sqrt{V(t)}}\\
        {\small \eqref{eq:lyapu_deriv}}    &\leq \frac{-\alpha/2\norm{\thetavdott}^2 - \beta\pa{1 - \frac{\beta\alpha}{2}}\norm{\gradlosstheta}^2}{2\sqrt{V(t)}}\\
        {\small \eqref{eq:bound_lyapu}}    &\leq -\frac{\min\pa{\alpha/2, \beta\pa{1 - \frac{\beta\alpha}{2}}}\pa{\norm{\thetavdott}^2+\norm{\gradlosstheta}^2}}{2\max\pa{1,\beta+\pa{\sqrt{2}\sigminjgz\sigminA}^{-1}}\pa{\norm{\thetavdott}^2+\norm{\gradlosstheta}^2}^{1/2}} \\
        &\leq -\eta\inv \pa{\norm{\thetavdott}^2+\norm{\gradlosstheta}^2}^{1/2} ,
    \end{align*}
    where we let 
    \begin{equation}\label{eq:eta}
    \eta = \frac{2\max\pa{1,\beta+\pa{\sqrt{2}\sigminjgz\sigminA}^{-1}}}{\min\pa{\alpha/2, \beta\pa{1 - \frac{\beta\alpha}{2}}}} .
    \end{equation}
    Integrating, we get
    \begin{equation*}\label{eq:theta_bounded}
    \begin{aligned}
        \int^t_0\norm{\dot{\thetav}(s)}\mathrm{d}s &\leq \int^t_0 \pa{\norm{\dot{\thetav}(s)}^2+\norm{\nabla_{\thetav}\lossy(\yv(s))}^2}^{1/2}\mathrm{d}s \\
        &\leq -\eta\int^t_0 \deriv{V(s)^{1/2}}{s}\mathrm{d}s \\
        &\leq \eta \sqrt{V(0)} \\
        &\leq \eta \pa{\lossyzy + \frac{\beta^2}{2}\norm{\nabla_{\thetav}\lossyzy}^2}^{1/2} .
    \end{aligned}
    \end{equation*}
    where in the last inequality we used that $\dot{\thetav}(0) = \mathbf{0}$. In view of \ref{ass:l_MSE}, we have
\begin{align}\label{eq:bound_gradJg_init}
    \norm{\nabla_{\thetav}\lossyzy} = \norm{\Jgz\tp\fop\tp(\yvt-\yv)}\leq \norm{\Jgz} \norm{\fop}\sqrt{2\lossyzy}.
\end{align}
	Combining this with \eqref{eq:theta_bounded}, we obtain
	\begin{equation}\label{eq:dtheta_bounded}
	\int^t_0\norm{\dot{\thetav}(s)}\mathrm{d}s \leq \eta \sqrt{\pa{1 + \beta^2\norm{\Jgz}^2\norm{\fop}^2}\lossyzy} .
	\end{equation}
    Passing to the limit, we get that $\dot{\thetav}(\cdot)\in L^1\pa{[0,+\infty[}$ and thus, $\lim_{t \to +\infty}\thetav(t)$ exists by applying Cauchy's criterion to
\[
\thetav(t) = \thetavz + \int_0^t \dot{\thetav}(s) \ds .
\]
\end{proof}

\begin{lemma}\label{lemma:link_params_singvals}
Assume that \ref{ass:l_MSE} and \ref{ass:phi_first_diff} hold, $\alpha=\sigminjgz\sigminA$ and $\beta=\frac{1}{2\alpha}$. Recall $R$ and $R'$ from \eqref{eq:RandR'}. Let $\thetav(\cdot)$ be a solution trajectory of \eqref{eq:DIN}.
    \begin{enumerate}[label=(\roman*)]
        \item \label{claim:singvals_bounded_if_params_bounded} If $\thetav \in \Ball_R(\thetavz)$ then
        \begin{align*}
            \sigmin(\jtheta) \geq \sigminjgz/2.
        \end{align*}
        
        \item  \label{claim:params_bounded_if_singvals_bounded} If for all $s \in [0,t] $, $\sigminjgs \geq \frac{\sigminjgz}{2}$ then 
            \begin{align*}
                \thetavt \in \Ball_{R'}(\thetavz) .
            \end{align*}
        
        \item \label{claim:sigval_bounded_everywhere}
        If $R'<R$, then for all $t \geq 0$, $\sigminjgt \geq \sigminjgz/2$.
\end{enumerate}
\end{lemma}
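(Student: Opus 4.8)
# Proof Plan for Lemma~\ref{lemma:link_params_singvals}

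\textbf{Overall strategy.} The three claims form a bootstrap loop: singular values stay large as long as parameters stay close to initialization (claim~\ref{claim:singvals_bounded_if_params_bounded}), parameters stay close to initialization as long as singular values stay large (claim~\ref{claim:params_bounded_if_singvals_bounded}), and a continuity/maximality argument combining the two shows both hold for all time provided $R' < R$ (claim~\ref{claim:sigval_bounded_everywhere}). I will prove them in this order.

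\textbf{Claim~\ref{claim:singvals_bounded_if_params_bounded}.} This is a direct perturbation estimate on the smallest singular value. For $\thetav \in \Ball_R(\thetavz)$, write
\[
\sigmin(\Jg(\thetav)) \geq \sigmin(\Jg(\thetavz)) - \norm{\Jg(\thetav) - \Jg(\thetavz)} \geq \sigminjgz - \Lip_{\Ball(\thetavz,R)}(\Jg)\norm{\thetav - \thetavz},
\]
using that $\sigmin(\cdot)$ is $1$-Lipschitz w.r.t.\ the spectral norm (Weyl). Since $\norm{\thetav-\thetavz} \leq R = \frac{\sigminjgz}{2\Lip_{\Ball(\thetavz,R)}(\Jg)}$, the right-hand side is at least $\sigminjgz - \sigminjgz/2 = \sigminjgz/2$. (One should note this Lipschitz constant is finite: $\Jg$ is locally Lipschitz under \ref{ass:phi_first_diff}, and $R$ is defined implicitly but consistently — the existence of such $R$ is part of the standing hypothesis via \eqref{eq:bndR}.)

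\textbf{Claim~\ref{claim:params_bounded_if_singvals_bounded}.} Here the hypothesis $\sigminjgs \geq \sigminjgz/2$ for $s \in [0,t]$ is exactly what triggers Lemma~\ref{lemma:theta_summable}, whose conclusion gives the explicit bound \eqref{eq:dtheta_bounded}:
\[
\int_0^t \norm{\dot\thetav(s)}\,\mathrm ds \leq \eta\sqrt{(1+\beta^2\norm{\Jgz}^2\norm{\fop}^2)\lossyzy}.
\]
With $\alpha = \sigminjgz\sigminA$ and $\beta = \frac{1}{2\alpha}$, one checks that the constants in \eqref{eq:eta} simplify so that this $\eta$ matches the $\eta$ of \eqref{eq:RandR'}, and that $1 + \beta^2\norm{\Jgz}^2\norm{\fop}^2 = 1 + \frac{\norm{\Jgz}^2\norm{\fop}^2}{4\sigminjgz^2\sigminA^2} = 1 + \frac{\kappa(\Jgz)^2\kappa(\fop)^2}{4} = \xi$. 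Hence $\int_0^t\norm{\dot\thetav(s)}\,\mathrm ds \leq \eta\sqrt{\xi\lossyzy} = R'$, so $\norm{\thetavt - \thetavz} \leq \int_0^t\norm{\dot\thetav(s)}\,\mathrm ds \leq R'$, i.e.\ $\thetavt \in \Ball_{R'}(\thetavz)$. I should double-check that Lemma~\ref{lemma:theta_summable}'s hypotheses $\alpha>0$, $0<\beta<2/\alpha$ hold here: $\beta = \frac{1}{2\alpha} < \frac{2}{\alpha}$ indeed.

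\textbf{Claim~\ref{claim:sigval_bounded_everywhere} (the main obstacle).} This is the bootstrap closing argument, and the delicate point is ruling out escape. Define $\tau = \sup\{t \geq 0 : \sigminjgs \geq \sigminjgz/2 \text{ for all } s \in [0,t]\}$; it is positive by continuity of $s\mapsto\Jg(\thetav(s))$ (composition of the continuous trajectory with the locally Lipschitz $\Jg$) together with $\sigminjgz > 0$. Suppose for contradiction $\tau < +\infty$. On $[0,\tau]$ the hypothesis of claim~\ref{claim:params_bounded_if_singvals_bounded} holds, so $\thetav(s) \in \Ball_{R'}(\thetavz) \subset \Ball_R(\thetavz)$ for all $s \in [0,\tau]$ (strict inclusion since $R' < R$). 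By continuity $\thetav(\tau) \in \overline{\Ball_{R'}(\thetavz)} \subset \Ball_R(\thetavz)$, so claim~\ref{claim:singvals_bounded_if_params_bounded} gives $\sigmin(\Jg(\thetav(\tau))) \geq \sigminjgz/2$. By continuity again there is $\delta > 0$ with $\sigmin(\Jg(\thetav(s))) \geq \sigminjgz/2$ on $[0,\tau+\delta)$, contradicting maximality of $\tau$. Therefore $\tau = +\infty$, which is the claim. The only subtlety to be careful with is that this argument implicitly needs the solution to exist on $[0,\tau+\delta)$; since by the time we invoke this lemma (Proposition~\ref{prop:global_solution}) we only know a \emph{local} strong solution exists on $[0,T(\thetavz)[$, the argument should really run with $t$ restricted to $[0,T(\thetavz)[$ and conclude $\sigminjgt \geq \sigminjgz/2$ there — which is precisely what feeds back (via claim~\ref{claim:params_bounded_if_singvals_bounded} and Lemma~\ref{lemma:theta_summable}) into boundedness of $\thetav$ on $[0,T(\thetavz)[$, hence $T(\thetavz) = +\infty$ by the blow-up alternative, and then the bound holds on all of $[0,+\infty[$. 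I would phrase the proof of this claim so that it is compatible with being used before globality is known.
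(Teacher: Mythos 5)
Your proof is correct and follows the same route as the paper: claim (i) is the standard Weyl perturbation argument (as in the referenced Lemma~3.11(i) of \cite{buskulic2024convergenceJournal}), claim (ii) applies the bound \eqref{eq:dtheta_bounded} from Lemma~\ref{lemma:theta_summable} and checks by direct substitution that the constants reduce to $\eta$ and $\xi$ of \eqref{eq:RandR'}, and claim (iii) is the continuity/bootstrap contradiction argument (as in Lemma~3.11(iii)). Your explicit verification that $1+\beta^2\norm{\Jgz}^2\norm{\fop}^2=\xi$ and that \eqref{eq:eta} reduces to the $\eta$ of the theorem, as well as your careful remark that the bootstrap must first be run on the maximal existence interval before globality is established, are both correct and align with how the paper deploys this lemma in Proposition~\ref{prop:global_solution}.
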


\begin{proof}
    \begin{enumerate}[label=(\roman*)]
    \item Similar to~\cite[Lemma~3.11(i)]{buskulic2024convergenceJournal}.
    \item Using \eqref{eq:dtheta_bounded}, we have for $t > 0$
    \begin{align*}
        \norm{\thetavt - \thetavz} \leq \int^t_0\norm{\dot{\thetav}(s)}\mathrm{d}s \leq \eta \sqrt{\pa{1 + \beta^2\norm{\Jgz}^2\norm{\fop}^2}\lossyzy} .
    \end{align*}
    Replacing $\alpha$ and $\beta$ in \eqref{eq:eta} by their values according to our choice, the rhs of the last inequality is precisely $R'$, whence we get the claim.
    \item Similar to~\cite[Lemma~3.11(iii)]{buskulic2024convergenceJournal}.
    \end{enumerate}
\end{proof}

\begin{proof}[Proof of Theorem~\ref{thm:main}]
    \begin{enumerate}[label=(\roman*)]
    \item We follow a standard Lyapunov analysis. By Jensen's inequality,
    \[
    -\norm{\thetavdott}^2 \leq -\frac{1}{2}\norm{\thetavdott + \beta\gradlosstheta}^2 + \beta^2\norm{\gradlosstheta}^2 .
    \]
    Combining this with~\eqref{eq:lyapu_deriv} gives
    \begin{align}\label{eq:convrateode_any_alpha_beta}
        \Vdott&\leq - \frac{\alpha}{4}\norm{\thetavdott + \beta\gradlosstheta}^2 - \beta\pa{1 - \beta\alpha}\norm{\gradlosstheta}^2\nonumber\\
        {\small \eqref{eq:grad_loss_MSE_sigmin}} &\leq  - \frac{\alpha}{4}\norm{\thetavdott + \beta\gradlosstheta}^2 - 2\beta\pa{1 - \beta\alpha}{\sigminjgz^2\sigminA^2}\lossyty\nonumber\\
        &\leq -\min\pa{\frac{\alpha}{2},2\beta\pa{1 - \beta\alpha}{\sigminjgz^2\sigminA^2}}V(t).
    \end{align}
    Integrating, we obtain
    \begin{align}\label{eq:convrate_any_alpha_beta}
        \lossyty \leq V(t) \leq V(0)\exp\pa{-\min\pa{\frac{\alpha}{2},2\beta\pa{1 - \beta\alpha}{\sigminjgz^2\sigminA^2}}t}.
    \end{align}
    The optimal rate is obtained by setting $\beta=\frac{1}{2\alpha}$ and $\alpha=\sigminjgz\sigminA$ corresponding to the choice in the theorem, hence leading to
    \begin{align}\label{eq:loss_bounded_Vz}
        \lossyty \leq V(t) \leq V(0)\exp\pa{-\frac{\sigminjgz\sigminA}{2}t}.
    \end{align}
    By assumption we set $\dot{\thetav}(0)=\mathbf{0}$ which means that
    \[
    V(0) = \lossyzy + \frac{\beta^2}{2}\norm{\gradlossthetaz}^2 .
    \]
    From \eqref{eq:bound_gradJg_init}, we get that
    \[
    V(0) \leq \xi\lossyzy 
    \]
    where $\xi=1 + {\beta^2}\norm{\Jgz}^2\norm{\fop}^2$. Replacing $\beta$ with its value in the expression of $\xi$ and plugging into \eqref{eq:loss_bounded_Vz} concludes the proof of~\eqref{eq:lossrate_continuous}.

    By Lemma~\ref{lemma:theta_summable}, we know that $\thetav(\cdot)$ converges to some $\theta_{\infty}$. We use~\eqref{eq:loss_bounded_Vz} and a similar reasoning as for~\eqref{eq:theta_bounded} to obtain
    \begin{align*}
        \norm{\thetavt - \thetav_\infty} &\leq \int_t^{+\infty}\norm{\dot{\thetav}(s)}\mathrm{d}s\\
        &\leq \eta \sqrt{V(t)}\\
        &\leq \eta \sqrt{V(0)}\exp\pa{-\frac{\sigminjgz\sigminA}{4}t}\\
        &\leq \eta \sqrt{\xi\lossyzy}\exp\pa{-\frac{\sigminjgz\sigminA}{4}t}
    \end{align*}
    which shows~\eqref{eq:thetarate}.

    \item Continuity of $\fop$ and $\gv(\uv,\cdot)$ indicate that $\yv(\cdot)$ also converges to $\yv_\infty=\fop\gv(\uv,\thetav_{\infty})$. The early stopping bound can be obtained by using~\eqref{eq:lossrate_continuous}. Observe that
    \begin{align*}
        \norm{\yvt - \yvc} &\leq \norm{\yvt - \yv} + \norm{\yv - \yvc}\\
        &\leq \sqrt{2\lossyty} + \norm{\veps}\\
        &\leq \sqrt{2\xi\lossyzy}\exp\pa{-\frac{\sigminjgz\sigminA}{4}t} + \norm{\veps}.
    \end{align*}

    Thus, choosing $t\geq\frac{4}{\sigminjgz\sigminA}\log\pa{\frac{\sqrt{2\xi\lossyzy}}{\norm{\veps}}}$ gives~\eqref{eq:yrate}.
    
    \item We recall that by Lemma~\ref{lemma:link_params_singvals}, $\thetavt\in \Ball_{R'}\pa{\thetavz}$ for all $t \geq 0$, which in turn entails that $\xvt\in \Sigma'$ for all $t\geq0$. Then, we have using~\ref{ass:A_inj} the following chain of inequalities:
    \begin{align*}
        \norm{\xvt - \xvc} &\leq \norm{\xvt - \xvcsigma} + \dist\pa{\xvc,\Sigma'}\\
        &\leq \lmin(\fop;T_{\Sigma'}(\xvcsigma))\inv\pa{\norm{\yvt - \fop\xvcsigma}} + \dist\pa{\xvc,\Sigma'}\\
        &\leq \lmin(\fop;T_{\Sigma'}(\xvcsigma))\inv\pa{\norm{\yvt - \yv} + \norm{\yv - \fop\xvc} + \norm{\fop(\xvc - \xvcsigma)}} \\
        &\quad + \dist\pa{\xvc,\Sigma'}\\
        &\leq\frac{\sqrt{2\xi\lossyzy}\exp\pa{-\frac{\sigminjgz\sigminA}{4}t}}{\lmin(\fop;T_{\Sigma'}(\xvcsigma))} + \frac{\norm{\veps}}{\lmin(\fop;T_{\Sigma'}(\xvcsigma))} \\
        &+ \pa{1 + \frac{\norm{\fop}}{\lmin(\fop;T_{\Sigma'}(\xvcsigma))}}\dist(\xvc,\Sigma'),
    \end{align*}
    which proves~\eqref{eq:xrate}.
    \end{enumerate}
\end{proof}

\subsubsection{Proof of Theorem~\ref{th:dip_two_layers_converge}}\label{subsec:dip_two_layers_converge}

Our proof is in the same vein as that of~\cite[Theorem~4.1]{buskulic2024convergenceJournal}. However, we will improve not only the scaling but we will also accommodate better the linear operator, the new form of $R'$ and the presence of $\eta$ within it, since the latter depends on $\sigminjgz$. 

We start by providing a bound on the Lipschitz constant of $\jcal$ which is slightly tighter than the one in  \cite{buskulic2024convergenceJournal}.
\begin{lemma}\label{lemma:lip-Jacobian-both-layers}
Suppose that assumptions \ref{ass:phi_first_diff}, \ref{ass:u_sphere} and \ref{ass:v_init} are satisfied. For the one-hidden layer network \eqref{eq:dipntk}, we have for any $\thetavz$ and $\rho > 0$:
\[
\Lip_{\Ball(\thetavz,\rho)}(\jcal) \leq 2B (1+nD+\rho))\sqrt{\frac{1}{k}} .
\]
\end{lemma}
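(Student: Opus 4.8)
The plan is to write the Jacobian $\jcal(\thetav)$ of the shallow network \eqref{eq:dipntk} explicitly as a two-block operator (one block for the weights $\Vv$, one for $\Wv$), bound the spectral norm of the increment $\jcal(\thetav)-\jcal(\thetav')$ for $\thetav,\thetav'\in\Ball(\thetavz,\rho)$ block by block, and then collect constants. Since $\phi\in\cC^1$ by \ref{ass:phi_first_diff}, $\gdip$ is $\cC^1$ in $\thetav$, so writing $\thetav=(\Vv,\Wv)$ and differentiating $\gdip=\frac{1}{\sqrt{k}}\Vv\phi(\Wv\uv)$ via the chain rule, the $\Vv$-block of $\jcal(\thetav)$ is the linear map $H\in\R^{n\times k}\mapsto\frac{1}{\sqrt{k}}H\phi(\Wv\uv)$ and the $\Wv$-block is $K\in\R^{k\times d}\mapsto\frac{1}{\sqrt{k}}\Vv\,\mathrm{diag}\pa{\phi'(\Wv\uv)}(K\uv)$. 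I would first record the elementary facts that $\sup_{\normf{H}\le1}\norm{H\zv}=\norm{\zv}$ for any $\zv$, that $\sup_{\normf{K}\le1}\norm{K\uv}=\norm{\uv}=1$ (the last by \ref{ass:u_sphere}), and that $\norm{[A\ B]}\le\sqrt{\norm{A}^2+\norm{B}^2}\le\norm{A}+\norm{B}$ for the block operator $[A\ B]$; these reduce the problem to ordinary matrix/vector-norm estimates of the two block increments.

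For $\thetav=(\Vv,\Wv),\thetav'=(\Vv',\Wv')\in\Ball(\thetavz,\rho)$, the $\Vv$-block of $\jcal(\thetav)-\jcal(\thetav')$ is $H\mapsto\frac{1}{\sqrt{k}}H\pa{\phi(\Wv\uv)-\phi(\Wv'\uv)}$, hence of spectral norm at most $\frac{1}{\sqrt{k}}\norm{\phi(\Wv\uv)-\phi(\Wv'\uv)}\le\frac{B}{\sqrt{k}}\norm{(\Wv-\Wv')\uv}\le\frac{B}{\sqrt{k}}\normf{\Wv-\Wv'}$, using that $\sup_x|\phi'(x)|\le B$ makes $\phi$ $B$-Lipschitz (\ref{ass:phi_first_diff}), $\norm{\uv}=1$, and $\norm{M\uv}\le\normf{M}$. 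The delicate term is the $\Wv$-block of the increment, whose spectral norm is at most $\frac{1}{\sqrt{k}}\norm{\Vv\,\mathrm{diag}(\phi'(\Wv\uv))-\Vv'\,\mathrm{diag}(\phi'(\Wv'\uv))}$; adding and subtracting $\Vv'\,\mathrm{diag}(\phi'(\Wv\uv))$ bounds the product difference by
\[
\norm{(\Vv-\Vv')\,\mathrm{diag}(\phi'(\Wv\uv))}+\norm{\Vv'\pa{\mathrm{diag}(\phi'(\Wv\uv))-\mathrm{diag}(\phi'(\Wv'\uv))}} .
\]
The first summand is $\le B\normf{\Vv-\Vv'}$ since $\sup_x|\phi'(x)|\le B$; for the second, the spectral norm of a diagonal matrix equals the sup-norm of its diagonal and $\phi'$ is $B$-Lipschitz (\ref{ass:phi_first_diff}), so it is $\le\norm{\Vv'}\,\norminf{\phi'(\Wv\uv)-\phi'(\Wv'\uv)}\le B\,\norm{\Vv'}\,\norm{(\Wv-\Wv')\uv}\le B\,\norm{\Vv'}\,\normf{\Wv-\Wv'}$.

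It remains to control $\norm{\Vv'}$ uniformly over the ball: $\norm{\Vv'}\le\norm{\Vv(0)}+\normf{\Vv'-\Vv(0)}\le\norm{\Vv(0)}+\rho$, and \ref{ass:v_init} (entries of $\Vv(0)$ bounded by $D$) yields the deterministic bound $\norm{\Vv(0)}\le\normf{\Vv(0)}\le\sqrt{nk}\,D$. Feeding the two block estimates into $\norm{\jcal(\thetav)-\jcal(\thetav')}\le\norm{A}+\norm{B}$, bounding $\normf{\Vv-\Vv'}$ and $\normf{\Wv-\Wv'}$ by $\norm{\thetav-\thetav'}$, and simplifying with the crude inequalities $\sqrt{nk}\le n\sqrt{k}$, $\rho/\sqrt{k}\le\rho$ (for $k\ge1$), and $1+z^2\le(1+z)^2$ for $z\ge0$, one collects the $1$, $nD$, and $\rho$ contributions into the stated bound $\Lip_{\Ball(\thetavz,\rho)}(\jcal)\le 2B\,(1+nD+\rho)\sqrt{1/k}$. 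I expect the main obstacle to be exactly the $\Wv$-block: obtaining the right dependence needs the add-and-subtract split together with the uniform bound on $\norm{\Vv}$ over the whole ball, which is where \ref{ass:v_init} enters and where one must be careful not to introduce spurious factors of $n$ or $k$; the rest is the chain rule plus elementary matrix-norm inequalities.
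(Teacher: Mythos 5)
The overall structure you adopt (two blocks, the add--and--subtract on the $\Wv$-block, and the use of $\norm{\uv}=1$ and $|\phi'|\le B$) is in the spirit of the paper's proof, but there is a genuine flaw in the last step that makes the bound you obtain fail to have the claimed $1/\sqrt{k}$ decay.

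The problem is the estimate $\norm{\Vv'\bigl(\mathrm{diag}(\phi'(\Wv\uv))-\mathrm{diag}(\phi'(\Wv'\uv))\bigr)}\le\norm{\Vv'}\,\norminf{\phi'(\Wv\uv)-\phi'(\Wv'\uv)}$ combined with the deterministic spectral-norm bound $\norm{\Vv'}\le\normf{\Vv'}\le\sqrt{nk}\,D+\rho$. Feeding this into the $\Wv$-block gives a term
\[
\frac{B}{\sqrt{k}}\,\norm{\Vv'}\,\normf{\Wv-\Wv'}\;\le\;\frac{B}{\sqrt{k}}\bigl(\sqrt{nk}\,D+\rho\bigr)\normf{\Wv-\Wv'}\;=\;\Bigl(B\sqrt{n}\,D+\tfrac{B\rho}{\sqrt k}\Bigr)\normf{\Wv-\Wv'},
\]
whose leading contribution $B\sqrt{n}\,D$ has \emph{no} $1/\sqrt{k}$ in front. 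The claimed Lipschitz constant $2B(1+nD+\rho)/\sqrt{k}$ tends to $0$ as $k\to\infty$, so your bound cannot imply it in the overparametrized regime $k\gg n$ which is precisely the one needed here. The simplifications you list ($\sqrt{nk}\le n\sqrt k$, etc.) do not repair this, since they still leave a term of order $\sqrt{n}\,D$ unmultiplied by $1/\sqrt{k}$.

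The fix is to avoid going through the full spectral norm of $\Vv'$ and use instead only column-wise information, because the diagonal factor acts column by column on $\Vv'$. Concretely, writing $\Dv = \mathrm{diag}(\phi'(\Wv\uv))-\mathrm{diag}(\phi'(\Wv'\uv))$,
\[
\norm{\Vv'\Dv}^2\le\normf{\Vv'\Dv}^2=\sum_{i=1}^k\norm{\Vvalt_i}^2\,\Dv_{ii}^2\le\Bigl(\max_{i\in[k]}\norm{\Vvalt_i}^2\Bigr)\,\normf{\Dv}^2\le B^2\Bigl(\max_{i\in[k]}\norm{\Vvalt_i}^2\Bigr)\normf{\Wv-\Wv'}^2,
\]
and each column obeys $\norm{\Vvalt_i}^2\le 2\norm{\Vv_i(0)}^2+2\norm{\Vvalt_i-\Vv_i(0)}^2\le 2nD^2+2\rho^2$, a bound independent of $k$. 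This is exactly what the paper does (it computes the Gram matrix $(\jcal(\thetav)-\jcal(\thetav'))(\jcal(\thetav)-\jcal(\thetav'))^\top$ as a sum over $i\in[k]$ and only ever needs $\max_i\norm{\Vvalt_i}$). Once you replace your bound on $\norm{\Vv'}$ with the column-wise bound $\max_i\norm{\Vvalt_i}\le\sqrt{2nD^2+2\rho^2}$, the $1/\sqrt{k}$ survives and the stated inequality follows after elementary simplification.
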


\begin{proof}
Let $\thetav \in \R^{k(d + n)}$ be the vectorized form of any parameters $(\Wv,\Vv)$ of the network . The Jacobian $\Jg$ at $\thetav$ reads 
\begin{equation}\label{eq:jacobian}
\jtheta = \frac{1}{\sqrt{k}}
\begin{bmatrix}
\phi(\Wv^1\uv) \Id_n & \ldots & \phi(\Wv^k\uv) \Id_n & \phi'(\Wv^1\uv)\Vv_1\uv\tp & \ldots & \phi'(\Wv^k\uv)\Vv_k\uv\tp
\end{bmatrix} .
\end{equation}
It then follows that $\forall \thetav, \thetavalt \in \Ball(\thetavz,\rho)$, 
\begin{equation}\label{eq:lipjacobian}
\begin{aligned}
&\norm{\jtheta - \jcal(\thetavalt)}^2=\norm{\pa{\jtheta - \jcal(\thetavalt)}\pa{\jtheta - \jcal(\thetavalt)}\tp} \\
&=\frac{1}{k} \norm{\sum_{i=1}^k \pa{\pa{\phi(\Wv^i\uv)-\phi(\Wvalt^i\uv)}^2 \Id_n + \pa{\phi'(\Wv^i\uv)\Vv_i - \phi'(\Wvalt^i\uv)\Vvalt_i}\pa{\phi'(\Wv^i\uv)\Vv_i - \phi'(\Wvalt^i\uv)\Vvalt_i}\tp}} \\
&\leq\frac{1}{k} \sum_{i=1}^k\pa{\pa{\phi(\Wv^i\uv)-\phi(\Wvalt^i\uv)}^2 + \norm{\phi'(\Wv^i\uv)\Vv_i - \phi'(\Wvalt^i\uv)\Vvalt_i}^2} \\
&\leq\frac{1}{k} \sum_{i=1}^k\pa{\pa{\phi(\Wv^i\uv)-\phi(\Wvalt^i\uv)}^2 + 2\phi'(\Wv^i\uv)^2\norm{\Vv_i - \Vvalt_i}^2 + 2\pa{\phi'(\Wv^i\uv) - \phi'(\Wvalt^i\uv)}^2\norm{\Vvalt_i}^2}\\
&\leq\frac{1}{k} \sum_{i=1}^k\pa{B^2\norm{\Wv^i-\Wvalt^i}^2 + 2B^2\norm{\Vv_i - \Vvalt_i}^2 + 2B^2\norm{\Wv^i - \Wvalt^i}^2\norm{\Vvalt_i}^2}\\
&\leq\frac{2B^2}{k}\norm{\thetav - \thetavalt}^2 + \frac{2B^2}{k}\pa{\max_{i \in [k]} \norm{\Vvalt_i}^2}\normf{\Wv-\Wvalt}^2 .
\end{aligned}
\end{equation}
Now, for any $i \in [k]$, the following holds
\[
\norm{\Vvalt_i}^2 \leq 2\norm{\Vv_i(0)}^2 + 2 \norm{\Vvalt_i-\Vv_i(0)}^2 \leq 2\norm{\Vv_i(0)}^2 + 2 \norm{\thetav-\thetavz}^2 \leq 2nD^2 + 2\rho^2 .
\]
Plugging this into \eqref{eq:lipjacobian}  and taking the square-root, we conclude.
\end{proof}

We will also need to bound $\eta$ and $\xi$, and control the spectrum of the Jacobian $\Jg$ at the initial point $\thetavz$.
\begin{lemma}
\label{lemma:singvalues_init_both_layer}
Consider the one-hidden layer network \eqref{eq:dipntk} such that \ref{ass:phi_first_diff} holds and the initialization $\thetavz$ obeys \ref{ass:u_sphere}-\ref{ass:v_init}. We have
\begin{align*}
\norm{\Jgz} \leq \Cphi + B + C\sqrt{\frac{n}{k}}
\end{align*}
with probability at least $1-3e^{-n}$, and 
\[
\sigminjgz \geq \sqrt{\Cphi^2 + \Cphid^2}/2
\]
with probability at least $1-2n^{-1}$ provided that $k/\log(k) \geq C' n\log(n)$. Here, $C$ and $C' > 0$ are large enough absolute constants that depend on $B$, $\Cphi$, $\Cphid$ and $D$.
\end{lemma}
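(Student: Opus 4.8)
The plan is to exploit the explicit form of the Jacobian in \eqref{eq:jacobian} to obtain a closed expression for the $n\times n$ kernel $\Jgz\Jgz\tp$, and then to control its smallest and largest eigenvalues by concentration. Writing $\Wv^i$ for the $i$-th row of $\Wv(0)$ and $\Vv_i$ for the $i$-th column of $\Vv(0)$, and using $\norm{\uv}=1$ (\ref{ass:u_sphere}), a direct computation from \eqref{eq:jacobian} gives $\Jgz\Jgz\tp = a\,\Id_n + M$ with
\[
a := \frac{1}{k}\sum_{i=1}^k \phi(\Wv^i\uv)^2, \qquad M := \frac{1}{k}\sum_{i=1}^k \phi'(\Wv^i\uv)^2\,\Vv_i\Vv_i\tp ,
\]
both positive semidefinite (and $a$ a nonnegative scalar). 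Since $\uv$ is uniform on $\sph^{d-1}$ and $\Wv(0)$ has iid entries from $\stddistrib$ (\ref{ass:u_sphere}, \ref{ass:w_init}), the scalars $\Wv^i\uv$, $i\in[k]$, are iid with law $\stddistrib$; hence $a$ is an empirical average of mean $\Cphi^2 = \Expect{X\sim\stddistrib}{\phi(X)^2}$, and $\frac1k\sum_i\phi'(\Wv^i\uv)^2$ has mean $\Cphid^2$. Consequently $\norm{\Jgz}^2 = a + \lambda_{\max}(M)$ and $\sigminjgz^2 = a + \lambda_{\min}(M)$, so everything reduces to concentrating $a$ and the extreme eigenvalues of $M$.

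\textbf{Upper bound.} Here I would use $\norm{\Jgz} \leq \sqrt{a} + \sqrt{\norm{M}}$. The scalar $a$ is an average of iid nonnegative sub-exponential variables — because $\phi$ is $B$-Lipschitz by \ref{ass:phi_first_diff}, so $\phi(\Wv^i\uv)^2 \lesssim 1 + (\Wv^i\uv)^2$ — hence Bernstein's inequality gives $a \leq \Cphi^2 + C\sqrt{n/k}$ with probability at least $1 - e^{-n}$ once $k \gtrsim n$, and with $\sqrt{\Cphi^2+\delta}\leq\Cphi+\delta/(2\Cphi)$ this yields $\sqrt a \leq \Cphi + C\sqrt{n/k}$. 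For $M$, I would bound $\phi'(\Wv^i\uv)^2 \leq B^2$ pointwise, so that $\norm{M} \leq \frac{B^2}{k}\norm{\Vv(0)}^2$; since $\Vv(0)$ is an $n\times k$ matrix with $n\leq k$ and independent, $D$-bounded (hence sub-Gaussian), unit-variance entries (\ref{ass:v_init}), the classical operator-norm bound $\norm{\Vv(0)} \leq \sqrt k + C\sqrt n$ holds with probability at least $1-2e^{-n}$, whence $\sqrt{\norm M} \leq B(1 + C\sqrt{n/k}) = B + C\sqrt{n/k}$. Summing the two estimates and absorbing constants into $C$ proves the first claim on an event of probability at least $1 - 3e^{-n}$.

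\textbf{Lower bound.} By Weyl's inequality $\sigminjgz^2 \geq a + \lambda_{\min}(M)$, and $a \geq \Cphi^2 - C\sqrt{n/k}$ as above. For $\lambda_{\min}(M)$ I would condition on $\Wv(0)$: the columns $\Vv_i$ are then independent and each $\Vv_i\Vv_i\tp$ has conditional mean $\Id_n$, so $M$ has conditional mean $b\,\Id_n$ with $b := \frac1k\sum_i\phi'(\Wv^i\uv)^2$, and $\lambda_{\min}(M) \geq b - \norm{M - b\,\Id_n}$. I would bound $\norm{M - b\,\Id_n}$ by the matrix Bernstein inequality applied, conditionally on $\Wv(0)$, to the independent self-adjoint mean-zero summands $\frac1k\phi'(\Wv^i\uv)^2(\Vv_i\Vv_i\tp - \Id_n)$: each has operator norm $\lesssim B^2 nD^2/k$ (using $\norm{\Vv_i}^2 \leq nD^2$) and matrix variance $\lesssim B^4 nD^2/k$, so $\norm{M - b\,\Id_n} \leq \varepsilon$ with probability at least $1-n^{-1}$ precisely under the assumed scaling $k/\log k \gtrsim nD^2B^2\log n/\varepsilon^2$ (the extra $\log k$ coming from the union bound over the $k$ columns). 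Since $(\phi')^2 \leq B^2$, Hoeffding gives $b \geq \Cphid^2 - C\sqrt{\log n/k}$; choosing $\varepsilon$ a small enough absolute multiple of $\Cphi^2+\Cphid^2$ so that the total deviation stays below $\frac34(\Cphi^2+\Cphid^2)$, one gets $\sigminjgz^2 \geq a + b - \varepsilon \geq \frac14(\Cphi^2+\Cphid^2)$, i.e. the second claim, on an event of probability at least $1 - 2n^{-1}$.

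\textbf{Main obstacle.} The delicate point is the matrix concentration of $M$ around its conditional mean: one must propagate the two separate boundedness sources — $(\phi')^2 \leq B^2$ from \ref{ass:phi_first_diff} and $\norm{\Vv_i}\leq\sqrt n\,D$ from \ref{ass:v_init} — through the matrix Bernstein bound, and check that the resulting deviation, together with the scalar fluctuations of $a$ and $b$, never exceeds $\frac34(\Cphi^2+\Cphid^2)$; this is exactly what forces the overparametrization $k/\log k \gtrsim n\log n$ and fixes the constant $C'$. The remaining ingredients — sub-exponential (Bernstein) concentration for the scalar averages, and the operator-norm bound for sub-Gaussian random matrices — are routine.
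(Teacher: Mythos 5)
Your upper-bound argument follows the paper's route exactly: you expand $\Jgz\Jgz\tp = a\,\Id_n + M$ with $a = \frac{1}{k}\sum_i\phi(\Wv^i\uv)^2$ and $M = \frac{1}{k}\sum_i\phi'(\Wv^i\uv)^2\Vv_i\Vv_i\tp$ (which is precisely \eqref{eq:bound_Jg_init}), bound $(\phi')^2\leq B^2$ to reduce $\norm{M}$ to $\frac{B^2}{k}\norm{\Vv(0)}^2$, and invoke the operator-norm bound for a matrix with independent isotropic sub-Gaussian columns. The only cosmetic difference is how you concentrate $a$: the paper concentrates $\sqrt{a}=\frac{1}{\sqrt k}\norm{\phi(\Wv(0)\uv)}$ directly, using the Gaussian concentration inequality for the $B$-Lipschitz map $\zv\mapsto\norm{\phi(\zv)}/\sqrt k$, which avoids the Taylor step $\sqrt{\Cphi^2+\delta}\lesssim\Cphi+\delta$ and any extra $k\gtrsim n$ side condition; your sub-exponential Bernstein route gives the same conclusion but a hair less cleanly.

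For the lower bound on $\sigminjgz$, the paper does not supply a proof at all: it cites \cite[Lemma~4.9]{buskulic2024convergenceJournal}. Your conditional matrix-Bernstein argument (Weyl on $a\Id_n + M$, then deviation of $M$ around its $\Wv(0)$-conditional mean $b\Id_n$ using $\norm{\Vv_i}^2\leq nD^2$, plus scalar Hoeffding for $b$) is a plausible self-contained route and yields a bound of the right form. Two points deserve flagging. First, matrix Bernstein for $k$ independent mean-zero self-adjoint $n\times n$ summands produces a probability factor $2n$ coming from the ambient dimension, not from the number of summands; there is no ``union bound over the $k$ columns,'' so your attribution of the $\log k$ factor in the hypothesis $k/\log k\gtrsim n\log n$ to such a union bound is spurious. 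Your argument in fact only requires $k\gtrsim n\log n$, which is implied by the stated (stronger) condition, so the proof is not harmed — but the explanation of where $\log k$ comes from is wrong (in the cited Lemma~4.9 it arises from a chaining/covering step, not from Bernstein). Second, the probability bookkeeping is slightly loose: you collect (at least) three events — Bernstein for $a$, Hoeffding for $b$, matrix Bernstein for $\norm{M-b\Id_n}$ — and claim the union fails with probability at most $2n^{-1}$; to actually land on $1-2n^{-1}$ you need to run the scalar concentrations at a strictly sharper tail (say $e^{-n}$ and $n^{-2}$) so that they are absorbed. These are adjustments of constants, not gaps in the idea.
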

\begin{proof}
The second bound comes from \cite[Lemma~4.9]{buskulic2024convergenceJournal}. Let us now focus on the first one.

Arguing as in \eqref{eq:lipjacobian}, we have
\begin{align}\label{eq:bound_Jg_init}
\norm{\Jgz}^2 
&\leq\frac{1}{k} \norm{\phi(\Wv(0)\uv)}^2 + \frac{1}{k}\norm{\sum_{i=1}^k\phi'(\Wv(0)^i\uv)^2\Vv(0)_i\Vv(0)_i\tp} .
\end{align}
We first concentrate $\norm{\phi(\Wv(0)\uv)}$ around its expectation. Using \ref{ass:u_sphere}, \ref{ass:w_init} and orthogonal invariance of the Gaussian distribution, we have $\Wv(0)\uv$ is $\mathcal{N}(0,\Id_k)$. Therefore,
\begin{align*}
\Expect{}{\frac{1}{\sqrt{k}} \norm{\phi(\Wv(0)\uv)}} 
&\leq \frac{1}{\sqrt{k}} \sqrt{\sum_{i=1}^k\Expect{}{\phi(\Wv(0)^i\uv)^2}} \leq \Cphi.
\end{align*}
We also know from~\ref{ass:phi_first_diff} that $\norm{\phi(\cdot)}$ is $B$-Lipschitz. Thus by the Gaussian concentration inequality,
\begin{align*}
    &\prob{\frac{1}{\sqrt{k}}\norm{\phi(\Wv(0)\uv)} \geq \Cphi+\tau} \\
    &\leq \prob{\frac{1}{\sqrt{k}} \norm{\phi(\Wv(0)\uv)} \geq \Expect{}{\frac{1}{\sqrt{k}} \norm{\phi(\Wv(0)\uv)}} + \tau} \leq \exp\pa{-\frac{\tau^2k}{2B^2}}.
\end{align*}
By choosing $\tau=B\sqrt{\frac{2n}{k}}$, we obtain that
\begin{equation}\label{eq:bndphiWu}
\frac{1}{\sqrt{k}} \norm{\phi(\Wv(0)\uv)} \leq \Cphi + B\sqrt{\frac{2n}{k}}
\end{equation}
with probability at least $1 - e^{-n}$.

We now turn to bounding the second term of~\eqref{eq:bound_Jg_init}. We first note that by \ref{ass:phi_first_diff}, we have 
\[
\frac{1}{k}\norm{\sum_{i=1}^k\phi'(\Wv(0)^i\uv)^2\Vv(0)_i\Vv(0)_i\tp} \leq \frac{B^2}{k} \norm{\Vv(0)\tp}^2 . 
\]
By~\ref{ass:v_init} and \cite[Example~5.8]{vershynin_introduction_2012}, the entries of $\Vv(0)$ are centered sub-gaussian random variables. Since they are also independent, we get from~\cite[Lemma~5.24]{vershynin_introduction_2012} that the columns $\Vv(0)_i$ are independent centered sub-gaussian random vectors, with sub-gaussian norm $K \eqdef CD$, where $C$ is an absolute constant. They are also isotropic thanks to~\ref{ass:v_init}. We are then in position to invoke~\cite[Theorem~5.41]{vershynin_introduction_2012} to assert that 
\[
\prob{\frac{1}{\sqrt{k}}\norm{\Vv(0)\tp} \geq 1 + (c_K^{-1/2}+C_K)\sqrt{\frac{n}{k}}} \leq 2e^{-n} ,
\]
where $c_K, C_K > 0$ are absolute constants that depend only on the sub-gaussian norm $K$ (hence on $D$). Plugging the last bounds into~\eqref{eq:bound_Jg_init} and then into \eqref{eq:bound_gradJg_init}, and using a union bound, we get the claim.
\end{proof}

We finally need to provide a bound on the initial loss $\lossyzy$, similarly to \cite[Lemma~4.11]{buskulic2024convergenceJournal}. Although the conclusion there was true, the proof used an independence argument to bound $\norm{\xvz}$ which was incorrect. Here, we will fix this using Hoeffding's inequality for sub-gaussian  variables.  
\begin{lemma}\label{lemma:bound_initial_misfit}
Suppose that \ref{ass:phi_first_diff} holds and the initialization $\thetavz$ obeys \ref{ass:u_sphere}-\ref{ass:v_init}. Then
\begin{align*}
\norm{\yv(0) - \yv} \leq C\norm{\fop}\pa{\Cphi + B\sqrt{\frac{2n}{k}}}\sqrt{n} + \norminf{\fop\xvc}\pa{1+ \SNR^{-1}}\sqrt{m} ,
\end{align*}
with probability at least $1 - 2e^{-(n-1)}$, where $C > 0$ is an absolute constant that depends on $D$.
\end{lemma}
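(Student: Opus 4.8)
The plan is to split $\norm{\yv(0) - \yv}$ by the triangle inequality and treat the three resulting pieces separately. Writing $\yv(0) = \fop\gv(\uv,\thetav(0)) = \fop\xv(0)$ with $\xv(0) = \frac{1}{\sqrt{k}}\Vv(0)\phi(\Wv(0)\uv)$, and $\yv = \fop\xvc + \veps$, we get
\[
\norm{\yv(0) - \yv} \leq \norm{\fop}\,\norm{\xv(0)} + \norm{\fop\xvc} + \norm{\veps} .
\]
The last two terms are deterministic: since $\fop\xvc \in \R^m$ we have $\norm{\fop\xvc} \leq \sqrt{m}\,\norminf{\fop\xvc}$, and $\norm{\veps} = \norm{\fop\xvc}/\SNR \leq \sqrt{m}\,\norminf{\fop\xvc}/\SNR$ by the definition of $\SNR$; together they contribute the term $\norminf{\fop\xvc}(1+\SNR^{-1})\sqrt{m}$. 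So everything reduces to showing that, with the stated probability, $\norm{\xv(0)} \lesssim (\Cphi + B\sqrt{2n/k})\sqrt{n}$, which after multiplying by $\norm{\fop}$ yields the first term.

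To bound $\norm{\xv(0)}$ we exploit the independence of $\Vv(0)$ from $(\Wv(0),\uv)$ by conditioning on $\Wv(0)$. Set $\av \eqdef \phi(\Wv(0)\uv) \in \R^k$ with entries $a_i$; then $\xv(0)_j = \frac{1}{\sqrt{k}}\sum_{i=1}^k \Vv(0)_{ji}\,a_i$ for $j \in [n]$. Conditionally on $\Wv(0)$, the summands are independent, centered, and bounded in $[-D|a_i|/\sqrt{k},\, D|a_i|/\sqrt{k}]$ by \ref{ass:v_init}, so Hoeffding's inequality shows each $\xv(0)_j$ is a centered sub-gaussian variable with sub-gaussian norm $\lesssim D\,\norm{\av}/\sqrt{k}$, while $\Expect{}{\xv(0)_j^2 \mid \Wv(0)} = \norm{\av}^2/k$ by the unit-variance assumption. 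Moreover, distinct $\xv(0)_j$ involve disjoint rows of $\Vv(0)$, hence are conditionally independent. Applying a Bernstein-type bound to $\norm{\xv(0)}^2 = \sum_{j=1}^n \xv(0)_j^2$, a sum of $n$ independent sub-exponential terms of mean $\norm{\av}^2/k$, gives $\norm{\xv(0)} \lesssim \sqrt{n}\,\norm{\av}/\sqrt{k}$ on an event of conditional probability at least $1 - 2e^{-cn}$, with $c$ depending only on $D$.

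It remains to combine this with \eqref{eq:bndphiWu}, which controls $\norm{\av}/\sqrt{k} = \frac{1}{\sqrt{k}}\norm{\phi(\Wv(0)\uv)} \leq \Cphi + B\sqrt{2n/k}$ with probability at least $1 - e^{-n}$, and then integrate out the conditioning. A union bound over these two events, absorbing the absolute constants into a single $C$ depending on $D$, yields $\norm{\xv(0)} \leq C(\Cphi + B\sqrt{2n/k})\sqrt{n}$ with probability at least $1 - 2e^{-(n-1)}$; plugging back into the first display completes the proof.

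The main obstacle, and precisely the point where the earlier argument was flawed, is the dependence structure: the vector $\phi(\Wv(0)\uv)$ playing the role of the variance proxy of $\xv(0)$ is itself a function of $\Wv(0)$, so one cannot treat the coordinates of $\xv(0) = \frac{1}{\sqrt{k}}\Vv(0)\phi(\Wv(0)\uv)$ as genuinely independent sub-gaussian variables outright. The remedy is to condition on $\Wv(0)$ first, which decouples $\Vv(0)$ from $\av$ and only then legitimizes the coordinatewise Hoeffding estimate and the subsequent Bernstein concentration of $\norm{\xv(0)}^2$; the remaining work is routine bookkeeping of constants and a union bound.
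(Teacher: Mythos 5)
Your proof is correct, and it takes a genuinely different route from the paper's. You exploit the fact that, conditionally on $\Wv(0)$, the coordinates $\xv(0)_j = \frac{1}{\sqrt{k}}\sum_i \Vv(0)_{ji}a_i$ for distinct $j$ involve disjoint, independent rows of $\Vv(0)$, hence are conditionally independent centered sub-gaussian variables with $\Expect{}{\xv(0)_j^2\mid\Wv(0)} = \norm{\av}^2/k$ and sub-gaussian norm $\lesssim D\norm{\av}/\sqrt{k}$; you then apply Bernstein's inequality to the sub-exponential sum $\norm{\xv(0)}^2 = \sum_j \xv(0)_j^2$. The paper instead never invokes independence across coordinates of $\xv(0)$: it uses a $1/2$-net $\Net_{1/2}$ on $\sph^{n-1}$ of cardinality $\leq 5^n$, applies Hoeffding's inequality for sub-gaussian sums to $\dotprod{\xvz}{\zv} = \sum_i \av_i\dotprod{\Vv_i(0)}{\zv}$ for each fixed $\zv$ in the net (conditionally on $\norm{\av} < \nu$), and union-bounds over the net. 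Both routes land on $\norm{\xvz} \lesssim \sqrt{n}\,\norm{\av}/\sqrt{k}$ with probability $1 - O(e^{-n})$, after which eq.~\eqref{eq:bndphiWu} and a union bound finish things off identically. Your approach is somewhat more elementary (no covering argument, no net cardinality bookkeeping) but leans harder on the independence of the rows of $\Vv(0)$; the paper's net argument would survive unchanged if those rows were merely jointly sub-gaussian rather than mutually independent. You also correctly identify the conditioning on $\Wv(0)$ as the device that repairs the flawed independence step the paper explicitly flags in the lemma from \cite{buskulic2024convergenceJournal} --- the paper fixes it by conditioning on the event $\{\norm{\av} < \nu\}$, which is measurable w.r.t.\ $\sigma(\Wv(0),\uv)$, whereas you condition on the full $\sigma$-algebra; both are legitimate.
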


\begin{proof}
We have
\[
\norm{\yvz - \yv} \leq \norm{\fop}\norm{\xvz} + \sqrt{m} \norminf{\fop\xvc}\pa{1+ \SNR^{-1}} ,
\]
where $\xvz = \gdipz = \frac{1}{\sqrt{k}} \sum_{i=1}^k \phi(\Wv^i(0)\uv)\Vv_i(0)$. Denote $\av \eqdef \frac{1}{\sqrt{k}} \phi(\Wv(0)\uv)$. We are going to use a covering argument to bound $\norm{\xvz}$. Let $\Net_\epsilon$ be an $\epsilon$-net of $\sph^{n-1}$ for some $\epsilon \in ]0,1[$. Let $\sv \in \sph^{n-1}$ such that $\norm{\xvz} = \dotprod{\xvz}{\sv}$. Let $\zv \in \Net_{\epsilon}$ which approximates $\sv$ as $\norm{\sv - \zv} \leq \epsilon$. We have
\[
\abs{\abs{\dotprod{\xvz}{\sv}} - \abs{\dotprod{\xvz}{\zv}}} \leq \epsilon\norm{\xvz} .
\]
Thus
\[
|\dotprod{\xvz}{\zv}| \geq |\dotprod{\xvz}{\sv}| - \epsilon\norm{\xvz} = (1-\epsilon)\norm{\xvz} .
\]
This implies that
\[
\norm{\xvz} \leq (1-\epsilon)^{-1}\sup_{\zv \in \Net_{\epsilon}} |\dotprod{\xvz}{\zv}|  = (1-\epsilon)^{-1}\sup_{\zv \in \Net_{\epsilon}} \abs{\sum_{i=1}^k \av_i \dotprod{\Vv_i(0)}{\zv}} .
\]
We then have
\begin{align*}
\prob{\norm{\xvz} \geq \delta} \leq \prob{\sup_{\zv \in \Net_{\epsilon}} \abs{\sum_{i=1}^k \av_i \dotprod{\Vv_i(0)}{\zv}} \geq (1-\epsilon)\delta \; \bigg| \norm{\av} < \nu} + \prob{\norm{\av} \geq \nu} .
\end{align*}
Let us fix $\zv \in \sph^{n-1}$. By assumption \ref{ass:v_init} and \cite[Lemma~5.9]{vershynin_introduction_2012}, $\dotprod{\Vv_i(0)}{\zv}$ are independent zero-mean sub-gaussian random variables with sub-gaussian norm $K = C'D$, where $C'$ is an absolute constant. It then follows from Hoeffding's inequality (\cite[Proposition~5.10]{vershynin_introduction_2012}) that
\[
\prob{\abs{\sum_{i=1}^k \av_i \dotprod{\Vv_i(0)}{\zv}} \geq (1-\epsilon)\delta \; \bigg| \norm{\av} < \nu} \leq e . e^{-\frac{c(1-\epsilon)^2\delta^2}{K^2\nu^2}} ,
\]
where $c > 0$ is an absolute constant. A union bound then yields
\[
\prob{\sup_{\zv \in \Net_{\epsilon}} \abs{\sum_{i=1}^k \av_i \dotprod{\Vv_i(0)}{\zv}} \geq (1-\epsilon)\delta \; \bigg| \norm{\av} < \nu} \leq e |\Net_{\epsilon}|e^{-\frac{c(1-\epsilon)^2\delta^2}{K^2\nu^2}} .
\]
Taking $\epsilon=1/2$, we have $|\Net_{\epsilon}| \leq 5^n$; see \cite[Lemma~5.2]{vershynin_introduction_2012}. Moreover, we know from \eqref{eq:bndphiWu} that
\[
\prob{\norm{\av} \geq \Cphi + B\sqrt{\frac{2n}{k}}} \leq e^{-n} .
\]
Taking $\nu=\Cphi + B\sqrt{\frac{2n}{k}}$ and $\delta=2K\nu\sqrt{\frac{3n}{c}}$, we get the claim.
\end{proof}

\begin{proof}[Proof of Theorem~\ref{th:dip_two_layers_converge}] 
The goal is to show that~\eqref{eq:bndR} holds with high probability under the given scaling. We start by upper-bounding $R'$. We can invoke Lemma~\ref{lemma:singvalues_init_both_layer} to infer that, whenever $k \gtrsim n\log(n)\log(k)$, with probability at least $1 - 3e^{-n} - 2n^{-1}$,
\[
\eta \lesssim \frac{\max\pa{\sigminA,(c_1)^\frac{1}{4}}}{\min\pa{{\sigminA^2}, (c_2)^\frac{1}{4}}} \qandq \xi \lesssim 1+\kappa(\fop)^2 .
\]
Using Lemma~\ref{lemma:lip-Jacobian-both-layers} and Lemma~\ref{lemma:singvalues_init_both_layer}, and arguing similarly to the first part of the proof of~\cite[Theorem~4.1]{buskulic2024convergenceJournal}, we have
\begin{align}\label{eq:R_bound}
    R \gtrsim \pa{\frac{k}{n}}^{1/4}
\end{align}
with the same probability as above. Now, Lemma~\ref{lemma:bound_initial_misfit} allows to assert that
\[
\sqrt{\lossyzy} \lesssim \norm{\fop}\sqrt{n} + \norminf{\fop\xvc}\pa{1+ \SNR^{-1}}\sqrt{m} 
\]
with probability at least $1 - 2e^{-(n-1)}$. Piecing all these bounds together $R'$, and using a union bound, one sees that
\begin{align}\label{eq:R'_bound}
R' \lesssim \frac{\max\pa{\sigminA,(c_1)^\frac{1}{4}}}{\min\pa{{\sigminA^2}, (c_2)^\frac{1}{4}}}(1+\kappa(\fop))\pa{\norm{\fop}\sqrt{n} + \norminf{\fop\xvc}\pa{1+ \SNR^{-1}}\sqrt{m}}
\end{align}
with probability at least $1 - 5e^{-(n-1)}-2n^{-1}$. Combining \eqref{eq:R_bound} and \eqref{eq:R'_bound} and using that $\pa{a+b}^4\leq 8\pa{a^4 + b^4}$ for $a,b\in\R$, we get the claim.
\end{proof}

\section{Discrete Setting}\label{sec:discrete}

Let us now turn to the discretization of~\eqref{eq:DIN} using explicit finite differences approximation. This gives a first-order (i.e., gradient-based) scheme summarized in Algorithm~\ref{alg:traingdinertiallocal}.

\begin{algorithm2e}[h]
    \SetAlgoLined
    \KwIn{$\thetav_{-1} = \thetavz$; $s_0 > 0$; $\delta \in ]0,2[$; $\rho \in ]0,1[$; $\alpha > 0$; $\beta > 0$.}
    \For{$\tau=0,1,\ldots$}{
    Compute
    \begin{equation}
    \begin{aligned}
	\qvtau &= \thetavtau + \alpha s_\tau\pa{\thetavtau - \thetavtaumoins} - \beta s_\tau^2\pa{\gradlossthetatau - \gradlossthetataumoins},\\
	\thetavtauplus &= \qvtau - s_\tau\gradlossthetatau
	\end{aligned}
	\label{eq:DIN_discr}
	\end{equation}
    with $s_\tau = \rho^{i_\tau}s_0$, where $i_\tau$ is the smallest nonnegative integer such that 
    \begin{gather*}
    \lossy(\fop \gv(\uv,{\thetavtauplus}))) - \lossy(\fop \gv(\uv,\thetavtau))) - \dotprod{\nabla_{\thetav} \lossy(\fop \gv(\uv,\thetavtau)))}{{\thetavtauplus}-\thetavtau} \\
    \leq \frac{\delta}{2s_{\tau}} \norm{{\thetavtauplus}-\thetavtau}^2 \\
    \text{and} \\
    \norm{\nabla_{\thetav} \lossy(\fop \gv(\uv,\thetavtauplus))) - \nabla_{\thetav} \lossy(\fop \gv(\uv,\thetavtau)))} \leq \frac{\delta}{s_\tau}\norm{{\thetavtauplus}-\thetavtau} .
    \end{gather*}
    }    
\caption{}
\label{alg:traingdinertiallocal}
\end{algorithm2e}
As in the continuous case, $\alpha$ is the "momentum" parameter which controls the friction while $\beta$ controls the geometric "Hessian"-driven\footnote{The quotation marks is because the Hessian does not appear explicitly but is rather approximated with the difference of gradients.} damping. The choice of the parameter sequences $\alpha s_\tau$ and $\beta s_\tau^2$ may seem cryptic at this stage, and is not stemming precisely from the time discretization of the continuous dynamic. We will however clarify later the reasons behind this choice which is flexible enough to get the desired convergence behaviour under solely local Lipschitz continuity of the objective gradient. Indeed, global Lipschitz continuity allows to take a standard upper-bound on the choice of the step-size $s_\tau$. However, such an assumption is unrealistic when training neural networks. To cope with this, a line search procedure with backtracking is crucial which poses additional technical difficulties that we must deal with carefully. 

\begin{remark}
It is worth mentioning at this stage that one can replace the backtracking update in our algorithm by $s_{\tau} = \rho^{i_\tau}s_{\tau-1}$. This update may have some benefits in practice. Our results and proofs extend readily to this case by a mild adaptation of Lemma~\ref{lem:finiteterm}. Therefore, we will not elaborate more on it.
\end{remark}

\subsection{Convergence result}
In the next theorem, we give sufficient conditions on $(\alpha,\beta,\delta)$ that ensure linear convergence of the network training to a zero-loss solution. We also provide the convergence rates as well as global convergence of the whole sequence $\thetavseqtau$.
\begin{theorem}\label{thm:main_discr}
Assume that~\ref{ass:l_MSE} and~\ref{ass:phi_first_diff} hold. Let $\thetavseqtau$ be the sequence generated by~Algorithm~\ref{alg:traingdinertiallocal} with the parameters $(\alpha,\beta,\delta)$ satisfying $s_0 \geq 1$ and $0 < 2\delta_2 < s_0^{-1}(1-\delta/2)$, where $\delta_2 = \frac{\alpha + \beta \delta}{2}$. Moreover, let the initialization $\thetav_0$ be such that
    \begin{equation}\label{eq:bndRdiscr} 
    \sigminjgz > 0 \qandq R' < R
    \end{equation}
    with
    \begin{align}
    	&R'= \frac{\sqrt{2}}{\delta_1(1-2s_0\delta_2)} \pa{\frac{2}{\underline{s}\sigminjgza} + \frac{1}{\sqrt{\delta_2}s_0}} \sqrt{\lossy(\yv_0)} \qandq \\
        &R = \frac{\sigminjgz}{2\Lip_{\Ball(\thetavz,R)}(\Jg)}
    \end{align}
    where $\delta_1 = s_0^{-1}\pa{1 - \frac{\delta}{2}} - 2\delta_2$ and $0 < \underline{s} \eqdef \inf_{\tau \in \N} s_{\tau} \leq \overline{s} \eqdef \sup_{\tau \in \N} s_{\tau} \leq s_0$. Then, the loss converges linearly to $0$ with
    \begin{equation}\label{eq:lossrate}
    \begin{aligned}
   &\lossy(\yv_\tau) \leq \frac{\delta R'^2}{2\underline{s}}\pa{\frac{\rho}{1 + \rho}}^\tau\\
   &\text{where } \rho \leq 8\delta_1\inv(1 - 2s_0\delta_2)\inv\pa{\frac{1}{\underline{s}\sigminjgza} + \frac{1}{2\sqrt{\delta_2}s_0}}^2 .\\
    \end{aligned}
    \end{equation}
    In addition, $\thetavseqtau$ converges linearly to a global minimizer $\thetav_\infty$ of \eqref{eq:minP} with
    \begin{align}
        \norm{\thetav_\tau - \thetav_\infty} \leq R'\pa{\frac{\rho}{1 + \rho}}^{\tau/2} .
    \end{align}
    If, moreover, \eqref{ass:A_inj} holds, then
    \begin{align}\label{eq:xratediscrete}
    \begin{split}
    \norm{\xvtau - \xvc} \leq\frac{\sqrt{\delta/\underline{s}} R'}{\lmin(\fop;T_{\Sigma'}(\xvcsigma))}\pa{\frac{\rho}{1 + \rho}}^{\tau/2} + \frac{\norm{\veps}}{\lmin(\fop;T_{\Sigma'}(\xvcsigma))} \\
        + \pa{1 + \frac{\norm{\fop}}{\lmin(\fop;T_{\Sigma'}(\xvcsigma))}}\dist(\xvc,\Sigma').
    \end{split}
    \end{align}
\end{theorem}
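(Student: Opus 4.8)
The plan is to transpose the continuous-time argument of Theorem~\ref{thm:main} to the backtracking scheme \eqref{eq:DIN_discr}, replacing the Lyapunov function \eqref{eq:lyapunov} and the estimate \eqref{eq:lyapu_deriv} by discrete counterparts. First I would invoke Lemma~\ref{lem:finiteterm} to guarantee that the inner loop defining $i_\tau$ stops, so that $s_\tau = \rho^{i_\tau}s_0$ is well-defined with $s_\tau \leq s_0$; together with local Lipschitz continuity of $\thetav \mapsto \nabla_{\thetav}\lossy(\fop\gv(\uv,\thetav))$, the same lemma also yields a positive lower bound $\underline{s}$ on the step-sizes as long as the iterates stay in a fixed ball — a conditional statement made unconditional by the induction below. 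I would then introduce a discrete energy $\mathcal{E}_\tau$ essentially of the form $\lossy(\yv_\tau) + \tfrac{1}{2}\norm{s_\tau^{-1}(\thetav_\tau - \thetav_{\tau-1}) + \beta\gradlossthetataumoins}^2$, i.e., the natural discretization of \eqref{eq:lyapunov}, and use the two line-search inequalities together with the explicit form of $\qvtau, \thetavtauplus$ and Young's inequality to derive a one-step decrease
\begin{equation*}
\mathcal{E}_{\tau+1} \leq \mathcal{E}_\tau - \delta_1(1 - 2s_0\delta_2)\pa{a_1\norm{\thetav_{\tau+1}-\thetav_\tau}^2 + a_2\norm{\gradlossthetatau}^2}
\end{equation*}
for some absolute $a_1,a_2 > 0$, where $\delta_1, \delta_2$ are exactly the constants of the statement; the hypotheses $s_0 \geq 1$ and $2\delta_2 < s_0^{-1}(1 - \delta/2)$ are precisely what force $\delta_1 > 0$. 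This is the discrete analogue of \eqref{eq:lyapu_deriv}, and it is this computation that dictates the (at first sight cryptic) coefficients $\alpha s_\tau$ and $\beta s_\tau^2$ in \eqref{eq:DIN_discr}.

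Next I would run an induction on $\tau$ establishing simultaneously that $\thetav_j \in \Ball_{R'}(\thetavz)$ for $j \leq \tau$ and that $\underline{s} > 0$. Under the inductive hypothesis, Lemma~\ref{lemma:link_params_singvals}\ref{claim:singvals_bounded_if_params_bounded} (whose proof does not involve the dynamics and carries over unchanged) gives $\sigmin(\Jg(\thetav_j)) \geq \sigminjgz/2$, hence the gradient-dominated inequality $\norm{\gradlossthetatau}^2 \geq 2\sigminjgza^2\lossy(\yv_j)$ as in \eqref{eq:grad_loss_MSE_sigmin}. Summing the one-step decrease, using $\mathcal{E}_\tau \geq 0$ and the initialization $\thetav_{-1} = \thetav_0$ (the discrete counterpart of $\dot{\thetav}(0)=\mathbf{0}$, which makes $\mathcal{E}_0$ controlled by $\lossy(\yv_0)$ times an explicit constant), yields $\sum_j \norm{\thetav_{j+1}-\thetav_j} \leq R'$ with the $R'$ of the statement; this is the discrete version of \eqref{eq:dtheta_bounded} together with Lemma~\ref{lemma:link_params_singvals}\ref{claim:params_bounded_if_singvals_bounded}, and it closes the induction since $R' < R$. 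Hence $\sigmin(\Jg(\thetav_\tau)) \geq \sigminjgz/2$ for all $\tau$, the series $\sum_\tau\norm{\thetav_{\tau+1}-\thetav_\tau}$ converges, so $\thetavseqtau$ is Cauchy and converges to some $\thetav_\infty$; since $\mathcal{E}_\tau$ is nonincreasing with summable decrements, $\gradlossthetatau \to \mathbf{0}$ and, by the gradient-dominated inequality, $\lossy(\yv_\infty) = 0$, so $\thetav_\infty$ is a global minimizer of \eqref{eq:minP}.

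For the rates, I would combine the one-step decrease with the gradient-dominated inequality and with the upper bound of $\mathcal{E}_\tau$ by the same combination of $\norm{\thetav_{\tau+1}-\thetav_\tau}^2$ and $\norm{\gradlossthetatau}^2$ (the discrete analogue of \eqref{eq:bound_lyapu}) to obtain a contraction $\mathcal{E}_{\tau+1} \leq \tfrac{\rho}{1+\rho}\mathcal{E}_\tau$ with $\rho$ as in \eqref{eq:lossrate}; since $\lossy(\yv_\tau) \leq \mathcal{E}_\tau$, tracking $\mathcal{E}_0$ through $\lossy(\yv_0)$ and the step-size bounds produces the prefactor $\tfrac{\delta R'^2}{2\underline{s}}$. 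The bound on $\norm{\thetav_\tau - \thetav_\infty}$ follows by summing $\norm{\thetav_{j+1}-\thetav_j} \lesssim \mathcal{E}_j^{1/2}$ from $\tau$ to $\infty$, exactly as \eqref{eq:thetarate} was obtained. Finally, \eqref{eq:xratediscrete} is proved verbatim as part~\ref{thm:item_signal_convergence} of the proof of Theorem~\ref{thm:main}: every iterate satisfies $\xvtau \in \Sigma'$, Assumption~\eqref{ass:A_inj} makes $\lmin(\fop;T_{\Sigma'}(\xvcsigma)) > 0$, and the triangle inequality splits $\norm{\xvtau - \xvc}$ into an optimization term driven by the linear rate on $\sqrt{\lossy(\yv_\tau)}$, a noise term $\norm{\veps}$, and a modeling term $\dist(\xvc,\Sigma')$, with the constants shown.

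The main obstacle is the circular dependence between the step-size lower bound and the boundedness of the iterates: the uniform $\underline{s} > 0$ needed in the Lyapunov estimate requires the iterates to remain in a ball on which $\nabla_{\thetav}\lossy(\fop\gv(\uv,\cdot))$ is Lipschitz with a controlled constant, whereas proving that they stay in that ball relies on the Lyapunov estimate, whose constants involve $\underline{s}$. One therefore has to interleave the finite-termination/lower-bound argument with the induction on $\thetav_\tau \in \Ball_{R'}(\thetavz)$, which is exactly why the step-size bounds, $R'$ and $\rho$ in the statement are written in the intertwined, implicit form they take. A secondary difficulty, absent in the continuous case, is to pin down the precise discrete energy and Young's-inequality splitting reproducing the clean constants $\delta_1, \delta_2$; in particular one must use the \emph{second} backtracking condition (the gradient-Lipschitz one) to absorb the Hessian-damping difference term $\beta s_\tau^2(\gradlossthetatau - \gradlossthetataumoins)$, which has no analogue in the plain gradient-descent analysis.
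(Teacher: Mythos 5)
Your high-level structure tracks the paper's proof closely: Lyapunov decrease, then an induction that simultaneously keeps $\thetavseqtau$ inside $\Ball(\thetavz,R')$ and certifies $\underline{s}>0$, then extraction of the linear rates, then the tangent-cone argument for \eqref{eq:xratediscrete}. You also correctly flag the two load-bearing subtleties — the circularity between the step-size lower bound and iterate boundedness, and the indispensable role of the second backtracking inequality in controlling the term $\gradlossthetatau-\gradlossthetataumoins$. So far so good.

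There is, however, a genuine divergence at the central technical step. You propose to discretize the continuous Lyapunov function \eqref{eq:lyapunov} directly, taking $\mathcal{E}_\tau$ essentially of the form $\lossy(\yv_\tau) + \tfrac{1}{2}\norm{s_\tau^{-1}(\thetavtau-\thetavtaumoins) + \beta\gradlossthetataumoins}^2$. The paper instead uses $V_\tau = \lossy(\yv_\tau) + \delta_2\norm{\thetavtau-\thetavtaumoins}^2$ with $\delta_2 = (\alpha+\beta\delta)/2$ — a Lyapunov with \emph{no} gradient term — whose coefficient falls out of rewriting the update as a strongly convex prox step $\thetavtauplus=\argmin_{\thetav}\tfrac{1}{2}\norm{\thetav-\qvtau+s_\tau\gradlossthetatau}^2$ and optimizing two applications of Young's inequality. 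This is not cosmetic. First, your $s_\tau^{-1}$-weighted velocity term does not telescope: the backtracking resets $s_\tau=\rho^{i_\tau}s_0$ anew at each iteration, so $s_{\tau+1}$ may exceed $s_\tau$, and the difference $\mathcal{E}_{\tau+1}-\mathcal{E}_\tau$ then carries an uncontrolled $\bigl(s_{\tau+1}^{-2}-s_\tau^{-2}\bigr)\norm{\cdot}^2$-type residual. Second, the velocity-gradient cross term in your $\mathcal{E}_\tau$ has no clean counterpart in the one-step inequality because the iteration injects the \emph{difference} $\gradlossthetatau-\gradlossthetataumoins$ (which the second backtracking condition bounds by a velocity), not $\gradlossthetatau$ itself; the paper exploits exactly this to eliminate the gradient from the energy rather than keep it inside. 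Because of these two issues it is not clear your candidate energy would yield the clean constants $\delta_1,\delta_2$ and the rate $\rho$ of the statement, so this step of your plan needs to be replaced by (or shown to reduce to) the paper's simpler Lyapunov before the remainder of your argument can close.
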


This theorem ensures that the neural network can be trained to zero loss using Algorithm~\ref{alg:traingdinertiallocal} with a proper choice of $\alpha$, $\beta$ and $\delta$. The condition $s_0(\alpha + \beta\delta) < 1 - \frac{\delta}{2}$ balances the effect of viscous (momentum) and Hessian damping with respect to the user-chosen parameter $\delta$ of the backtracking procedure to ensure convergence of the network training. 


Understanding more precisely the effect of the choice of $\alpha$, $\beta$ and $\delta$, for fixed $s_0$, on the convergence guarantees in this discrete setting boils down to understanding their role in $\delta_1$ and $\delta_2$, which in turn influence both $R'$ and our convergence rates. First, we see that the closer $\delta$ is to $2$, the more limited is the choice of $\alpha$ and $\beta$ in order to comply with the condition $s_0(\alpha + \beta\delta) < 1 - \frac{\delta}{2}$. Furthermore, this means that both $\delta_1$ and $\delta_2$ would go towards $0$, hence making  $R'$ scaling as $O(\delta\inv\delta_2^{-1/2}(1-\delta/2)\inv)$ and $\rho$ as $O(\delta\inv\delta_2^{-1}(1-\delta/2)\inv)$. This regime is undesirable as it may induce a very slow training convergence rate. On the other hand, choosing $\delta$ smaller allows for larger choices of $\alpha$ and $\beta$ to balance between $\delta_1$ and $\delta_2$. Indeed, for fixed $s_0$, one can decide to keep $\delta_2$ larger at the expense of shrinking $\delta_1$ and vice versa. Observe also that choosing $\delta$ small may have a cost by potentially increasing the backtracking procedure termination iteration. This would then make $\underline{s}$ smaller hence increasing $\rho$ and $R'$. Thus, there is a clear tradeoff in the choice of $\delta$, $\alpha$ and $\beta$.

Observe that under our initialization condition, our result states that the parameters of the network remain in a ball near that initialization on which $\sigmin(\Jg(\thetavseqtau))$ is bounded away from zero, hence verifying the {\L}ojasiewicz inequality with exponent $1/2$, hence the linear convergence rate. For the ill-conditioned case, $\rho$ scales as $O\pa{\frac{1}{\delta_1{\underline{s}^2\sigminjgz^2\sigminA^2}}}$, hence giving the convergence rate $\frac{1}{1+c\delta_1\underline{s}^2{\sigminjgza}^2}$ for some constant $c > 0$. Our estimate of the convergence rate seems overly pessimistic as it strictly larger than the convergence rate $\frac{1}{1+c{\sigminjgza}}$ known to be the optimal rate of first-order methods for strongly convex $L$-smooth objectives \cite{nesterov2013introductory}. Note however that we are dealing with a nonconvex objective whose gradient is only locally Lipschitz continuous. 

Whether our estimate of the rate can be improved or not is an open question. A possible way to have a tighter estimate is to to lift the problem to the product space $\pa{\thetavtau - \thetav_{\infty}, \thetav_{\tau-1} - \thetav_{\infty}}$ and using a linearization of $\nabla_{\thetav}\lossy(\fop\gv(\uv,\thetav))$ around $\thetav_{\infty}$, and then studying the spectral properties of the resulting matrix in the linearization, see \cite{polyak_methods_1964}. We would like to explore this further in a future work.

Theorem~\ref{th:dip_two_layers_converge} can be adapted to the new form of $R'$ and $R$ in Theorem~\ref{thm:main_discr} with minor modifications. The resulting scaling of the network architecture will be similar. We refrain from giving the details which are left to the reader. 

\subsection{Proofs}

\begin{lemma}[Finite termination and well-definedness]\label{lem:finiteterm}
The backtracking procedure in Algorithm~\ref{alg:traingdinertiallocal} terminates in a finite number of iterations and $\overline{s} \eqdef \sup_{\tau \in \N} s_{\tau} \leq s_0$. If the sequence $\thetavseqtau$ is bounded, then $\underline{s} \eqdef \inf_{\tau \in \N} s_{\tau} > 0$.
\end{lemma}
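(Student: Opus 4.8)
The plan is to establish the two assertions separately: finite termination of the line search (whence $\overline{s}\leq s_0$ is immediate), and then the uniform lower bound $\underline{s}>0$ when $\thetavseqtau$ is bounded. The common starting point is the identity obtained by substituting $\qvtau$ into the second line of~\eqref{eq:DIN_discr},
\[
\thetavtauplus - \thetavtau = s_\tau\,\mathbf{v}_\tau(s_\tau), \qquad \mathbf{v}_\tau(s) \eqdef \alpha\pa{\thetavtau - \thetavtaumoins} - \beta s\pa{\gradlossthetatau - \gradlossthetataumoins} - \gradlossthetatau ,
\]
combined with the descent lemma applied to the objective $\thetav\mapsto\lossy(\fop\gv(\uv,\thetav))$, whose gradient is locally Lipschitz by \ref{ass:l_MSE}--\ref{ass:phi_first_diff}. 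If $\thetavtauplus=\thetavtau$ both backtracking inequalities hold trivially, so I assume $\thetavtauplus\neq\thetavtau$ below.

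\emph{Finite termination.} Fix $\tau$. Since $\thetavtau,\thetavtaumoins$ are given vectors, $B_\tau\eqdef\sup_{s\in[0,s_0]}\norm{\mathbf{v}_\tau(s)}<+\infty$. Pick $r_\tau>0$ such that the gradient of the objective is $\ell_\tau$-Lipschitz on the (convex) ball $\Ball(\thetavtau,r_\tau)$. As soon as $s_\tau\leq r_\tau/B_\tau$, we have $\norm{\thetavtauplus-\thetavtau}\leq s_\tau B_\tau\leq r_\tau$, so the segment $[\thetavtau,\thetavtauplus]$ is contained in that ball; the descent lemma then bounds the left-hand side of the first backtracking inequality by $\frac{\ell_\tau}{2}\norm{\thetavtauplus-\thetavtau}^2$, while $\ell_\tau$-Lipschitzness bounds the left-hand side of the second by $\ell_\tau\norm{\thetavtauplus-\thetavtau}$. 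Comparing with the prescribed right-hand sides $\frac{\delta}{2s_\tau}\norm{\thetavtauplus-\thetavtau}^2$ and $\frac{\delta}{s_\tau}\norm{\thetavtauplus-\thetavtau}$, both inequalities hold once $s_\tau\leq\min\pa{r_\tau/B_\tau,\ \delta/\ell_\tau,\ s_0}$. Since $\rho^i s_0\to 0$ as $i\to+\infty$, a finite $i_\tau$ reaches this regime, which proves termination; and $s_\tau=\rho^{i_\tau}s_0\leq s_0$ because $i_\tau\geq 0$ and $\rho\in]0,1[$, hence $\overline{s}\leq s_0$.

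\emph{Uniform lower bound.} Assume $\norm{\thetavtau}\leq M$ for all $\tau$, and let $K$ be the closed ball of radius $M+1$ about the origin — compact and convex. Local Lipschitzness upgrades to a single Lipschitz constant $\ell_K$ for the objective's gradient on $K$, and $G_K\eqdef\sup_{\thetav\in K}\norm{\nabla_{\thetav}\lossy(\fop\gv(\uv,\thetav))}<+\infty$ by continuity. Since $\thetavtau,\thetavtaumoins\in K$ for every $\tau$ (recall $\thetav_{-1}=\thetavz$), the bound $\sup_{s\in[0,s_0]}\norm{\mathbf{v}_\tau(s)}\leq 2\alpha M+(2\beta s_0+1)G_K\eqdef B$ no longer depends on $\tau$. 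Consequently, if $s_\tau\leq 1/B$ then $\norm{\thetavtauplus-\thetavtau}\leq 1$, so $[\thetavtau,\thetavtauplus]\subset K$, and the estimates above go through verbatim with $\ell_K$ in place of $\ell_\tau$: both backtracking inequalities hold whenever $s_\tau\leq s_\star\eqdef\min\pa{1/B,\ \delta/\ell_K,\ s_0}>0$, a threshold \emph{independent of $\tau$}. Since the line search accepts the first $\rho^i s_0$ meeting the criteria and every $\rho^i s_0\leq s_\star$ meets them, we get $s_\tau\geq\rho\, s_\star$ for all $\tau$, hence $\underline{s}\geq\rho\, s_\star>0$.

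The routine parts are the two invocations of the descent lemma and the chasing of constants; the only step that needs real care is the \emph{uniformity} behind $\underline{s}>0$ — forcing not merely the iterates $\thetavtau$ but \emph{all} the segments $[\thetavtau,\thetavtauplus]$ generated during backtracking to stay in a single fixed compact convex set carrying one Lipschitz constant for the gradient. This is why I inflate the bounding ball by a fixed amount and bound $\norm{\thetavtauplus-\thetavtau}$ uniformly in $\tau$ before invoking local Lipschitzness. The variant $s_\tau=\rho^{i_\tau}s_{\tau-1}$ of the Remark is handled identically, with $s_0$ replaced by $s_{\tau-1}\leq s_0$ and the lower bound propagated by induction on $\tau$.
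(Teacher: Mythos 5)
Your proof is correct and follows essentially the same strategy as the paper's: for trial step sizes small enough, the candidate increment $\thetavtauplus - \thetavtau$ stays in a fixed ball on which $\nabla f$ is Lipschitz, so the descent lemma makes both backtracking tests pass, and geometric decay of $\rho^i s_0$ forces a finite termination index. Two differences in execution are worth flagging. First, you argue termination directly (once $s\leq\min(r_\tau/B_\tau,\delta/\ell_\tau,s_0)$ both tests pass) whereas the paper argues by contradiction (nontermination forces $\delta<\mu_i L_\epsilon$ for all large $i$, hence $\delta=0$); these are equivalent. Second, and more substantively, for the lower bound $\underline{s}>0$ the paper simply takes $\Omega$ to be a convex bounded set containing the accepted iterates and applies the descent lemma with the Lipschitz constant $L_\Omega$ on $\Omega$ — but the backtracking inequality it invokes at the penultimate trial index involves a \emph{candidate} point $\thetavplus(\rho^{i_\tau-1}s_0)$, which need not lie in $\Omega$. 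Your treatment fixes exactly this gap: you inflate the bounding ball to radius $M+1$, obtain a single Lipschitz constant $\ell_K$ and a uniform bound $B$ on $\norm{\mathbf{v}_\tau(s)}$ over $s\in[0,s_0]$, and then deduce that for any trial $s\leq 1/B$ the trial point (not only the accepted one) remains in $K$, so the descent estimates hold for every candidate below the threshold $s_\star$. This guarantees the backtracking at every $\tau$ cannot shrink below $\rho s_\star$, giving a $\tau$-independent lower bound. Your handling of both backtracking tests (Bregman and gradient-Lipschitz), rather than only the first, is also the more careful reading of the algorithm. So: same core idea, but your write-up is the more rigorous of the two.
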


\begin{proof}
To lighten notation, let $f \eqdef \lossy \circ \fop \circ \gv(\uv,\cdot)$, and denote the Bregman divergence of $f$ as
\[
D_f(\widetilde{\thetav},\thetav) \eqdef f(\widetilde{\thetav}) - f({\thetav}) - \dotprod{\nabla f(\thetav)}{\widetilde{\thetav} - {\thetav}} .
\]
We write generically each iteration of Algorithm~\ref{alg:traingdinertiallocal} as  
\begin{gather*}
\thetavplus(\mu_i) \eqdef \thetav + \alpha_i\pa{\thetav - \thetavmoins} - \beta_i\pa{\nabla f(\thetav) - \nabla f(\thetavmoins)} - \mu_i \nabla f(\thetav), \qwhereq \\
 \mu_i = \rho^i s_{0} , \alpha_i = \alpha\mu_i, \beta_i = \beta\mu_i^2, \forall i \in \N . 
\end{gather*}
Clearly, $\thetavplus(\mu_i) \to \thetav$ as $i \to \infty$. Thus $\forall \epsilon > 0$, $\exists l_{\epsilon} > 0$ such that $\thetavplus(\mu_i) \subset \Ball(\thetav,\epsilon)$, $\forall i \geq l_{\epsilon}$. It then follows from the local Lipschitz continuity of $\nabla f$ (thanks to \ref{ass:l_MSE} and~\ref{ass:phi_first_diff}) and the descent lemma \cite[Lemma~2.64(i)]{BauschkeBook} that $\exists L_{\epsilon} > 0$ such that $\forall i \geq l_{\epsilon}$, 
\begin{equation}\label{eq:desclemloc}
\begin{aligned}
\norm{\nabla f(\thetavplus(\mu_i)) - \nabla f(\thetav)} \leq L_{\epsilon} \norm{\thetavplus(\mu_i)-\thetav} \qandq
D_f(\thetavplus(\mu_i),\thetav) \leq \frac{L_{\epsilon}}{2} \norm{\thetavplus(\mu_i)-\thetav}^2 .
\end{aligned}
\end{equation}
Assume by contradiction that the backtracking procedure does not terminate. That is, for all $i \geq 0$,
\[
\mu_i \norm{\nabla f(\thetavplus(\mu_i)) - \nabla f(\thetav)} > \delta \norm{\thetavplus(\mu_i)-\thetav} \qorq 
\mu_i D_f(\thetavplus(\mu_i),\thetav) > \frac{\delta}{2}\norm{\thetavplus(\mu_i)-\thetav}^2 .
\]
This together with \eqref{eq:desclemloc} entails that for all $i \geq l_{\epsilon}$,
\begin{gather*}
\delta \norm{\thetavplus(\mu_i)-\thetav} < \mu_i \norm{\nabla f(\thetavplus(\mu_i)) - \nabla f(\thetav)} \leq \mu_i L_{\epsilon} \norm{\thetavplus(\mu_i)-\thetav} \\
\qorq \\
\frac{\delta}{2}\norm{\thetavplus(\mu_i)-\thetav}^2 < \mu_i D_f(\thetavplus(\mu_i),\thetav) \leq \frac{\mu_i L_{\epsilon}}{2} \norm{\thetavplus(\mu_i)-\thetav}^2 .
\end{gather*}
Simplifying gives in both cases that $\delta < \mu_i L_{\epsilon}$. Passing to the limit as $i \to \infty$ yields $\delta=0$, a contradiction. 

The fact that $\overline{s} \leq s_0$ is immediate. We will now show that $\underline{s} > 0$.  We have by assumption that $\thetavseqtau \subset \Omega$, for some convex bounded set $\Omega$. The descent lemma used above implies that there exists $L_\Omega \geq 0$ such that for all $\tau \geq 0$,
\[
D_f(\thetavtauplus,\thetavtau) \leq \frac{L_\Omega}{2}\norm{\thetavtauplus-\thetavtau}^2 .
\]
We now show by induction that for all $\tau \geq 0$,
\begin{equation}\label{eq:gammalb}
s_\tau \geq \min(s_0,\rho\delta L_\Omega^{-1}) > 0 . 
\end{equation}
This is obviously true for $\tau=0$. Assume that \eqref{eq:gammalb} holds at some $\tau \geq 1$. Recall that $s_{\tau+1} = \rho^{i_{\tau+1}}s_{0}$. If $i_{\tau+1} \leq i_{\tau}$ then $s_{\tau+1} \geq s_{\tau}$ and we are done. If $i_{\tau+1} \geq i_{\tau}+1$, we suppose for contradiction that $s_{\tau+1} < \min(s_0,\rho\delta L_\Omega^{-1})$. Thus, the descent property above entails that
\[
D_f(\thetavtauplus,\thetavtau) < \frac{\delta}{2\rho^{i_{\tau+1}-1}s_{0}}\norm{\thetavtauplus-\thetavtau}^2 ,
\]
meaning that the backtracking terminates at $i_{\tau+1}-1$, leading to a contradiction as it was supposed to terminate at $i_{\tau+1}$. This concludes the proof.
\end{proof}

\begin{proof}[Proof of Theorem~\ref{thm:main_discr}] 

We will first derive a Lyapunov function, then show how the parameters of the network remain bounded under~\eqref{eq:bndRdiscr} and the devised choice of $(\alpha,\beta,s_\tau))$ which gives a lower bound on $\sigmin(\Jg(\thetavtau))$ for all $\tau\in \N^*$, which finally allow us to derive convergence rates.

\smallskip

\noindent\textbf{Step~1: Lyapunov analysis.} We first perform a Lyapunov analysis by designing an appropriate energy function. Let us now observe that the update ~\eqref{eq:DIN_discr} can be equivalently written
\begin{align*}
    \thetavtauplus = \argmin_{\thetav \in \R^p} \frac{1}{2}\norm{\thetav - \qvtau + s\gradlossthetatau}^2.
\end{align*}
Using the 1-strong convexity of $\thetav \mapsto \frac{1}{2}\norm{\thetav - \qvtau + s\gradlossthetatau}^2$, we get
\begin{align}\label{eq:str_conv_discr}
    \frac{1}{2}\norm{\thetavtauplus - \qvtau + s\gradlossthetatau}^2 \leq \frac{1}{2}\norm{\thetavtau - \qvtau + s\gradlossthetatau}^2 - \frac{1}{2}\norm{\thetavtauplus - \thetavtau}^2.
\end{align}
Let us denote for short $\alpha_\tau = \alpha s_\tau$, $\beta_\tau = \beta s_\tau^{2}$, $\vtau = \thetavtau - \thetavtaumoins$ with $\vvb_0 = \mathbf{0}$ and $\zvtau =\alpha_\tau\vtau - \beta_\tau\pa{\gradlossthetatau - \gradlossthetataumoins} $. We have $\qvtau = \thetavtau + \zvtau$. Expanding the terms on both sides of~\eqref{eq:str_conv_discr}, we obtain that
\begin{align}\label{eq:dotprod_grad_vtau}
    \dotprod{\gradlossthetatau}{\vtauplus} &\leq -\frac{\norm{\vtauplus}^2}{s_\tau} + \frac{\dotprod{\zvtau}{\vtauplus}}{s_\tau}.
\end{align}
Combining \eqref{eq:dotprod_grad_vtau} with the backtracking termination condition of Algorithm~\ref{alg:traingdinertiallocal}, which is well-defined thanks to Lemma~\ref{lem:finiteterm}, we have
\begin{align*}
    &\lossytauplus \leq \lossytau + \dotprod{\gradlossthetatau}{\vtauplus} + \frac{\delta}{2s_\tau}\norm{\vtauplus}^2\\
    &\leq \lossytau + \frac{\dotprod{\zvtau}{\vtauplus}}{s_\tau} - \frac{1}{s_\tau}\pa{1 - \frac{\delta}{2}}\norm{\vtauplus}^2 \\
    &\leq \lossytau + \frac{\alpha_\tau}{s_\tau}\dotprod{\vtauplus}{\vtau} - \frac{\beta_\tau}{s_\tau} \dotprod{\vtauplus}{\gradlossthetatau - \gradlossthetataumoins} - \frac{1}{s_\tau}\pa{1 - \frac{\delta}{2}}\norm{\vtauplus}^2 \\
    &\leq \lossytau + \frac{\alpha_\tau}{s_\tau}\norm{\vtauplus}\norm{\vtau} + \frac{\delta\beta_\tau}{s_\tau^2}\norm{\vtauplus}\norm{\vtau} - \frac{1}{s_\tau}\pa{1 - \frac{\delta}{2}}\norm{\vtauplus}^2 .
\end{align*}
Applying Young's inequality twice with $\epsilon,\epsilon' >0$, and using that $s_\tau \leq s_0$, we get
\begin{align*}
\lossytauplus \leq \lossytau + \frac{\epsilon + \epsilon'}{2}\norm{\vtau}^2 - \pa{s_0^{-1}\pa{1 - \frac{\delta}{2}} - \frac{\alpha^2}{2\epsilon} - \frac{\beta^2\delta^2}{2\epsilon'}}\norm{\vtauplus}^2.
\end{align*}
Adding $\frac{\epsilon+ \epsilon'}{2}\norm{\vtauplus}^2$ on both sides gives
\begin{multline}\label{eq:descent_lyapunov_discr_1}
\lossytauplus + \frac{\epsilon + \epsilon'}{2}\norm{\vtauplus}^2 \leq \lossytau + \frac{\epsilon + \epsilon'}{2}\norm{\vtau}^2  \\
- \pa{s_0^{-1}\pa{1 - \frac{\delta}{2}} - \frac{\alpha^2}{2\epsilon} - \frac{\beta^2\delta^2}{2\epsilon'} - \frac{\epsilon+\epsilon'}{2}}\norm{\vtauplus}^2.
\end{multline}
To ensure that the last term is nonpositive, we need that
\begin{align*}
s_0^{-1}\pa{1 - \frac{\delta}{2}} - \frac{\alpha^2}{2\epsilon} - \frac{\beta^2\delta^2}{2\epsilon'} - \frac{\epsilon+\epsilon'}{2} > 0 .
\end{align*}
Optimizing over $\epsilon$ and $\epsilon'$, we obtain $\epsilon=\alpha$ and $\epsilon' = \beta \delta$. Thus, the last condition is equivalent to $s_0(\alpha+\beta\delta) < 1-\delta/2$, hence our condition imposed on the parameters. We are now in position to define our Lyapunov sequence as $\Vtau = \lossytau + \delta_2\norm{\vtau}^2$, where $\delta_2=\frac{\alpha + \beta \delta}{2}$. \eqref{eq:descent_lyapunov_discr_1} then yields
\begin{align}\label{eq:descent_lyapunov_discr}
    \Vtauplus \leq \Vtau - \delta_1 \norm{\vtauplus}^2
\end{align}
with $\delta_1 \eqdef s_0^{-1}\pa{1 - \delta/2} - 2\delta_2 > 0$. Clearly $\Vtau$ is nonnegative decreasing sequence, and thus it converges. Moreover, as $\delta_1 > 0$, we get that $\sum_{\tau \in \N}\norm{\vvb_\tau}^2 < +\infty$, entailing that $\lim_{\tau \to \infty }\norm{\vtau} = 0$. Thus the loss $\lossytau$ converges to the same limit as $\Vtau$. 

\smallskip

\noindent\textbf{Step~2: Network weights are bounded under initialization condition.} Now that we have a Lyapunov function, we would like to have a similar result as in Lemma~\ref{lemma:link_params_singvals} for the continuous case, that is, where we show that $\thetav$ will be bounded in some ball of radius $R$ given some initialization condition. These results adapted to the discrete setting are presented in the following lemma.

\begin{lemma}\label{lemma:link_params_singvals_discr}
Assume \ref{ass:l_MSE} and \ref{ass:phi_first_diff} hold. Recall $R$ and $R'$ from \eqref{eq:bndRdiscr} with $\sigminjgz > 0$. Let $\thetavseqtau$ be the sequence given by Algorithm~\ref{alg:traingdinertiallocal} and assume that $s_0(\alpha+\beta\delta) < 1-\delta/2$.
    \begin{enumerate}[label=(\roman*)]
        \item \label{claim:singvals_bounded_if_params_bounded_discr} If $\thetav \in \Ball(\thetavz,R)$ then
        \begin{align*}
            \sigmin(\jtheta) \geq \sigminjgz/2.
        \end{align*}
        
        \item  \label{claim:params_bounded_if_singvals_bounded_discr} If for all $l \in \{0,\ldots,\tau\}$, $(\thetav_l)_{l \leq \tau} \subset \Ball(\thetavz,R)$ and $\sigminjglvar \geq \frac{\sigminjgz}{2}$, then
        \begin{align*}
            \thetavtauplus \in \Ball(\thetavz,R').
        \end{align*}
        
        \item \label{claim:sigval_bounded_everywhere_discr}
        If $R' < R$, then for all $\tau \in \N$, $\thetavseqtau \subset \Ball(\thetavz,R)$ and $\sigminjgtau \geq \sigminjgz/2$.
\end{enumerate}
\end{lemma}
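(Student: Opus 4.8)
I would establish the three claims in sequence; the first and third essentially reproduce their continuous-time counterparts in Lemma~\ref{lemma:link_params_singvals}, so the real work is in Claim~\ref{claim:params_bounded_if_singvals_bounded_discr}. For Claim~\ref{claim:singvals_bounded_if_params_bounded_discr}, if $\thetav\in\Ball(\thetavz,R)$ then the standard perturbation bound for singular values together with the local Lipschitz continuity of $\Jg$ on $\Ball(\thetavz,R)$ gives
\[
\sigmin(\jtheta)\geq\sigminjgz-\Lip_{\Ball(\thetavz,R)}(\Jg)\norm{\thetav-\thetavz}\geq\sigminjgz-\Lip_{\Ball(\thetavz,R)}(\Jg)\,R=\frac{\sigminjgz}{2} ,
\]
the last equality being the definition of $R$ in \eqref{eq:bndRdiscr}. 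Claim~\ref{claim:sigval_bounded_everywhere_discr} then follows by strong induction on $\tau$: the case $\tau=0$ is trivial, and if $(\thetav_l)_{l\leq\tau}\subset\Ball(\thetavz,R)$ with $\sigminjglvar\geq\sigminjgz/2$ for all $l\leq\tau$, then Claim~\ref{claim:params_bounded_if_singvals_bounded_discr} gives $\thetavtauplus\in\Ball(\thetavz,R')\subset\Ball(\thetavz,R)$ since $R'<R$, and Claim~\ref{claim:singvals_bounded_if_params_bounded_discr} then gives $\sigmin(\Jg(\thetavtauplus))\geq\sigminjgz/2$.

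The core is Claim~\ref{claim:params_bounded_if_singvals_bounded_discr}, the discrete analogue of Lemma~\ref{lemma:theta_summable}: the plan is to bound $\norm{\thetavtauplus-\thetavz}\leq\sum_{l=0}^{\tau}\norm{\thetav_{l+1}-\thetav_l}$ and to show the right-hand side does not exceed $R'$. Writing $f\eqdef\lossy\circ\fop\circ\gv(\uv,\cdot)$ (so $f(\thetav_l)=\lossy(\yv_l)$) and $\vvb_{l+1}\eqdef\thetav_{l+1}-\thetav_l$ with $\vvb_0=\mathbf{0}$, the update \eqref{eq:DIN_discr} reads $s_l\nabla f(\thetav_l)=\alpha s_l\vvb_l-\beta s_l^2\pa{\nabla f(\thetav_l)-\nabla f(\thetav_{l-1})}-\vvb_{l+1}$. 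First, the hypothesis $\sigminjglvar\geq\sigminjgz/2$ for $l\leq\tau$ together with $\yv_l-\yv\in\ran{\fop}$ yields the gradient-domination estimate $\norm{\nabla f(\thetav_l)}^2\gtrsim\sigminjgza^2\,\lossy(\yv_l)$, exactly as in \eqref{eq:grad_loss_MSE_sigmin}; moreover, since $(\thetav_l)_{l\leq\tau}$ lies in the bounded ball $\Ball(\thetavz,R)$, the argument of Lemma~\ref{lem:finiteterm} furnishes the step-size bounds $\underline{s}\leq s_l\leq s_0$ with $\underline{s}>0$. Bounding $\norm{\nabla f(\thetav_l)-\nabla f(\thetav_{l-1})}\leq\frac{\delta}{s_{l-1}}\norm{\vvb_l}$ by the second backtracking condition at step $l-1$ and using these step-size bounds, the displayed identity gives $s_l\norm{\nabla f(\thetav_l)}\lesssim\norm{\vvb_{l+1}}+\norm{\vvb_l}$, hence $\sqrt{\lossy(\yv_l)}\lesssim\norm{\vvb_{l+1}}+\norm{\vvb_l}$, so that, recalling $V_l=\lossy(\yv_l)+\delta_2\norm{\vvb_l}^2$ from Step~1 of the proof of Theorem~\ref{thm:main_discr},
\[
\sqrt{V_l}\leq\sqrt{\lossy(\yv_l)}+\sqrt{\delta_2}\,\norm{\vvb_l}\leq A\norm{\vvb_{l+1}}+B\norm{\vvb_l}
\]
for explicit constants $A,B>0$ depending on $\alpha,\beta,\delta,s_0,\underline{s},\sigminjgza,\delta_2$.

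I would then combine this with the Lyapunov descent $V_{l+1}\leq V_l-\delta_1\norm{\vvb_{l+1}}^2$ of \eqref{eq:descent_lyapunov_discr}. Since $\sqrt{V_l}-\sqrt{V_{l+1}}\geq\frac{V_l-V_{l+1}}{2\sqrt{V_l}}\geq\frac{\delta_1\norm{\vvb_{l+1}}^2}{2\pa{A\norm{\vvb_{l+1}}+B\norm{\vvb_l}}}$, Young's inequality with a free parameter $\lambda>0$ gives $\norm{\vvb_{l+1}}\leq\frac{\lambda(A+B)}{\delta_1}\pa{\norm{\vvb_{l+1}}+\norm{\vvb_l}}+\frac{1}{2\lambda}\pa{\sqrt{V_l}-\sqrt{V_{l+1}}}$. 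Summing over $l=0,\ldots,\tau$, telescoping the last term, using $\vvb_0=\mathbf{0}$ to bound $\sum_{l=0}^{\tau}\norm{\vvb_l}\leq\sum_{l=0}^{\tau}\norm{\vvb_{l+1}}$, choosing $\lambda$ small enough to absorb the $\norm{\vvb_{l+1}}$-terms into the left-hand side, and using $V_0=\lossy(\yv_0)$, one obtains $\sum_{l=0}^{\tau}\norm{\vvb_{l+1}}\lesssim\frac{A+B}{\delta_1}\sqrt{\lossy(\yv_0)}$; a precise accounting of $A$, $B$ and of the absorption step --- which is exactly where the factor $(1-2s_0\delta_2)\inv$ and the remaining prefactors of $R'$ originate --- shows this bound equals the $R'$ of \eqref{eq:bndRdiscr}, so $\norm{\thetavtauplus-\thetavz}\leq R'$.

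This last constant bookkeeping in Claim~\ref{claim:params_bounded_if_singvals_bounded_discr} is the step I expect to be the main obstacle: in contrast with the continuous case, where one simply integrates $\frac{d}{dt}\sqrt{V(t)}$, here the backtracking step-size $s_\tau$ is adaptive and non-monotone, so the ratios $s_l^2/s_{l-1}$ appearing in the inertial term must be controlled through $\underline{s}\leq s_l\leq s_0$ (Lemma~\ref{lem:finiteterm}), and the admissibility constraint $s_0(\alpha+\beta\delta)<1-\delta/2$ (equivalently $2s_0\delta_2<1-\delta/2$) has to be invoked carefully in the summation and absorption step to land on the precise expression for $R'$ rather than merely up to universal constants.
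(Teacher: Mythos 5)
Your proposal follows essentially the same route as the paper: claims (i) and (iii) are handled exactly as you describe (singular-value perturbation plus Lipschitz bound, and a straightforward induction combining (i) and (ii)), and for the core claim (ii) the paper also combines the PL-type inequality \eqref{eq:Polyak_loja_mse}, the bound $\underline{s}\norm{\gradlossthetatau}\leq\norm{\vtauplus}+2s_0\delta_2\norm{\vtau}$ from the update and the backtracking condition, concavity of $\sqrt{\cdot}$ applied to the Lyapunov descent \eqref{eq:descent_lyapunov_discr}, and a Young step to arrive at the per-step recursion $\norm{\vvb_{l+1}}\leq(1-\nu)\norm{\vvb_l}+C_1\pa{\sqrt{V_l}-\sqrt{V_{l+1}}}$ before summing and telescoping. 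The only cosmetic difference is that the paper fixes the Young parameter per step (so that the coefficient of $\norm{\vvb_{l+1}}$ is strictly below one, giving $\nu=1-2s_0\delta_2$) and then sums, whereas you sum first and then absorb; both deliver the same $R'$ once the constants are tracked, and your identification of the step-size lower bound (Lemma~\ref{lem:finiteterm}) and the admissibility constraint $s_0(\alpha+\beta\delta)<1-\delta/2$ as the sources of $\underline{s}$ and $(1-2s_0\delta_2)^{-1}$ is exactly where the paper's bookkeeping lands.
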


\begin{proof}
\begin{enumerate}[label=(\roman*)]

\item The proof of this claim is the same as that of~\cite[Lemma~3.10(i)]{buskulic2024convergenceJournal}.

\item We know that $s_l \leq s_0$ for all $l \in \N$. Moreover, since $(\thetav_l)_{l \leq \tau}$, we can invoke Lemma~\ref{lem:finiteterm} to deduce that there exists $\underline{s} > 0$ such that $s_l \geq \underline{s} > 0$ for all $l \leq \tau$. The update equation~\eqref{eq:DIN_discr} then gives
\begin{align*}
    \underline{s}\norm{\gradlossthetatau} &\leq \norm{s_\tau\gradlossthetatau} \\
    &= \norm{\thetavtauplus - \qvtau}\\
    &\leq \norm{\thetavtauplus - \thetavtau} + \norm{\thetavtau - \qvtau}\\
    &\leq \norm{\vtauplus} + (\alpha + \beta \delta)s_0\norm{\vtau} \\
    &= \norm{\vtauplus} + 2s_0\delta_2\norm{\vtau} . 
\end{align*}

Thus, we get from Step~1 that $\lim_{\tau \to \infty }\norm{\gradlossthetatau} = 0$. Let us observe that by the condition of the lemma on $\sigminjglvar$ for any $l \in \{0,\dots,\tau\}$, we have that 
\begin{align}\label{eq:Polyak_loja_mse}
    2^{-1/2}\sigminjgza\sqrt{\lossylvar} \leq  \norm{\gradlossthetalvar} \leq \frac{1}{\underline{s}}\pa{\norm{\vlvarplus} + 2s_0\delta_2\norm{\vlvar}} ,
\end{align}

Without loss of generality, we assume that $\Vlvar \neq 0$, as otherwise, the algorithm has converged and there is nothing to prove. By concavity of $\sqrt{\cdot}$, we have
\begin{align*}
    \sqrt{\Vlvarplus} - \sqrt{\Vlvar} &\leq \frac{1}{2\sqrt{\Vlvar}}\pa{\Vlvarplus - \Vlvar}\\
    {\small \eqref{eq:descent_lyapunov_discr}}&\leq \frac{-\delta_1 \norm{\vlvarplus}^2}{2\sqrt{\Vlvar}}\\
    {\small \eqref{eq:Polyak_loja_mse}}&\leq \frac{-\delta_1 \norm{\vlvarplus}^2}{2\pa{\frac{\norm{\gradlossthetalvar}}{2^{-1/2}\sigminjgza} +\sqrt{\delta_2}\norm{\vlvar}}}.
\end{align*}
Let us define $\deltaVlvar = \sqrt{\Vlvar} - \sqrt{\Vlvarplus}$. Then
\begin{align*}
    \norm{\vlvarplus}^2 &\leq \frac{2}{\delta_1} \pa{\frac{\norm{\gradlossthetalvar}}{2^{-1/2}\sigminjgza} +\sqrt{\delta_2}\norm{\vlvar}}\deltaVlvar\\
{\small \eqref{eq:Polyak_loja_mse}}    &\leq \frac{2}{\delta_1} \pa{\frac{\norm{\vlvarplus} + 2s_0\delta_2\norm{\vlvar}}{2^{-1/2}\underline{s}\sigminjgza} +\sqrt{\delta_2}\norm{\vlvar}}\deltaVlvar\\
    &\leq \frac{2\sqrt{2}}{\delta_1} \pa{\frac{\norm{\vlvarplus}}{\underline{s}\sigminjgza} +\pa{\frac{2s_0\delta_2}{\underline{s}\sigminjgza}+\sqrt{\delta_2}}\norm{\vlvar}}\deltaVlvar.
\end{align*}
By Young's inequality, we have that for any $\epsilon>0$
\begin{align*}
    \norm{\vlvarplus} \leq \frac{\deltaVlvar}{\delta_1\epsilon} + \frac{\epsilon}{\sqrt{2}}\pa{\frac{\norm{\vlvarplus}}{\underline{s}\sigminjgza} +\pa{\frac{2s_0\delta_2}{\underline{s}\sigminjgza}+\sqrt{\delta_2}}\norm{\vlvar}} .
\end{align*}
Hence
\begin{align*}
\pa{1 - \frac{\epsilon}{\sqrt{2}\underline{s}\sigminjgza}} \norm{\vlvarplus} \leq  \frac{\deltaVlvar}{\delta_1\epsilon} + \frac{\epsilon}{\sqrt{2}}\pa{\frac{2s_0\delta_2}{\underline{s}\sigminjgza}+\sqrt{\delta_2}}\norm{\vlvar} .
\end{align*}
For any  $\epsilon \in ]0,\sqrt{2}\underline{s}\sigminjgza[$, we divide by $\pa{1 - \frac{\epsilon}{\sqrt{2}\underline{s}\sigminjgza}}$ on both sides. The goal is now to choose $\epsilon$ such that 
\begin{align*}
    \frac{\epsilon\pa{2s_0\delta_2 + \sqrt{\delta_2}\underline{s}\sigminjgza}}{\sqrt{2}\underline{s}\sigminjgza - \epsilon} < 1 ,
\end{align*}
or equivalently
\begin{align}\label{eq:epsbnd}
    \epsilon < \frac{\sqrt{2}\underline{s}\sigminjgza}{1 + 2s_0\delta_2 + \sqrt{\delta_2}\underline{s}\sigminjgza} .
\end{align}
It follows that any $\epsilon$ verifying this bound entails that there is $0 < \nu < 1$ and $C_1 > 0$ such that
\begin{align}\label{eq:delta_v}
    \norm{\vlvarplus} \leq (1-\nu)\norm{\vlvar} + C_1 \deltaVlvar.
\end{align}
We then choose 
\begin{align*}
\epsilon = \frac{\sqrt{2\delta_2}s_0\underline{s}\sigminjgza}{2\sqrt{\delta_2}s_0+\underline{s}\sigminjgza/2} .
\end{align*} 
Using our condition that $2s_0\delta_2 \in ]0,1-\delta/2[$, one can easily check that such a choice obeys \eqref{eq:epsbnd}. It also gives us $\nu=1-2s_0\delta_2 \in ]\delta/2,1[$ and 
\begin{equation}
C_1 = \frac{\pa{2\sqrt{\delta_2}s_0 + \underline{s}\sigminjgza/2}^2}{\sqrt{2\delta_2}\delta_1s_0\underline{s}\sigminjgza\pa{\sqrt{\delta_2}s_0 + \underline{s}\sigminjgza/2}} . \label{eq:C_1_full}
\end{equation}
We then obtain
\begin{equation}
C_1 \leq \frac{4\pa{\sqrt{\delta_2}s_0 + \underline{s}\sigminjgza/2}}{\sqrt{2\delta_2}\delta_1s_0\underline{s}\sigminjgza} 
\leq \frac{\sqrt{2}}{\delta_1} \pa{\frac{2}{\underline{s}\sigminjgza} + \frac{1}{\sqrt{\delta_2}s_0}} . \label{eq:C_1_bound}
\end{equation}
Since~\eqref{eq:delta_v} holds for any $l\in\{0,\dots,\tau\}$ and $\thetav_{-1}=\thetavz$, we get that
\begin{align}\label{eq:sumvel}
    \norm{\thetavtauplus - \thetavz} &\leq \sum_{l=0}^{\tau+1} \norm{\vlvar} \leq \nu^{-1}\sum_{l=0}^{\tau+1} \pa{\norm{\vlvar} - \norm{\vlvarplus} + C_1 \deltaVlvar}\nonumber\\
    &\leq \nu^{-1}C_1\pa{\sqrt{V_0}} + \nu^{-1}\norm{\vvb_0} \\
    &= \nu^{-1}C_1\sqrt{\lossy(\yv_0)}\nonumber .
\end{align}

\item We prove this by induction. For $\tau=0$, the claim trivially holds. Suppose now that $R' < R$ and that $(\thetav_l)_{l \leq \tau} \subset \Ball(\thetavz,R)$ and $\sigminjglvar \geq \frac{\sigminjgz}{2}$ for all $l \in \{0,\ldots,\tau\}$. Let us now show that this also holds at $\thetavtauplus$. By the induction assumption and \ref{claim:params_bounded_if_singvals_bounded_discr}, we have $\thetavtauplus \in \Ball(\thetavz,R') \subset \Ball(\thetavz,R)$. In view of \ref{claim:singvals_bounded_if_params_bounded_discr}, we get the claim.

\end{enumerate}
\end{proof}

In view of the condition of the theorem on $(\alpha,\beta,\delta)$ and the assumption that $R'<R$, Lemma~\ref{lemma:link_params_singvals_discr}\ref{claim:sigval_bounded_everywhere_discr} applies to ensure that $\thetavseqtau \subset \Ball(\thetavz,R)$ and $\sigminjgtau \geq \frac{\sigminjgz}{2}$ for all $\tau\in\N$. Equipped with this result, we can embark from~\eqref{eq:delta_v} and pass to the limit as  $\tau \rightarrow +\infty$ to get that $\sum_{\tau \in \N}\norm{\vtau} < +\infty$, i.e., the sequence $\thetavseqtau$ has finite length, and thus it converges to some point $\thetav_{\infty}$.

%
%

\smallskip

\noindent\textbf{Step~3: Linear convergence rate.} 
Let us define $\deltatau = \sum_{l=\tau}^{+\infty} \norm{\vvb_{l+1}}$. The triangle inequality yields $\norm{\thetavtau-\thetav_{\infty}} \leq \deltatau$. Therefore it is sufficient to analyze the rate of $\deltatau$ to get that of the iterates. Summing \eqref{eq:delta_v} from $l=\tau$ to $\infty$, we have
\[
\deltataumoins = \sum_{l=\tau}^{+\infty} \norm{\vlvar} \leq \nu^{-1}\norm{\vtau} + \nu^{-1}C_1\sqrt{\Vtau} .
\]
Thus, we have the recursion
\begin{align}\label{eq:delta_tau_sum}
\deltatau 
= \deltataumoins - \norm{\vtau} \leq \frac{1-\nu}{\nu}\norm{\vtau} + \nu^{-1}C_1\sqrt{\Vtau} \leq \frac{1-\nu}{\nu}\pa{\deltataumoins - \deltatau} + \frac{C_1}{\nu}\sqrt{\Vtau} .
\end{align}
From the definition of our Lyapunov function and~\eqref{eq:Polyak_loja_mse} we get
\begin{align*}
    &\sqrt{\Vtau} \leq \frac{\norm{\vtauplus} + 2\delta_2s_0\norm{\vtau}}{2^{-1/2}\underline{s}\sigminjgza} +\sqrt{\delta_2}\norm{\vtau}\\
    &\leq \pa{\frac{\sqrt{2}}{\underline{s}\sigminjgza} + \sqrt{\delta_2}}\pa{\norm{\vtauplus} + \norm{\vtau}} \\
    &= \pa{\frac{\sqrt{2}}{\underline{s}\sigminjgza} + \sqrt{\delta_2}}\pa{\deltataumoins-\deltatauplus},
\end{align*}
where we used that $2\delta_2s_0 < 1-\delta/2 < 1$ by assumption. Plugging this into~\eqref{eq:delta_tau_sum} we get
\begin{align*}
    \deltatau &\leq \frac{1-\nu}{\nu}\pa{\deltataumoins - \deltatau} + \frac{\pa{\frac{\sqrt{2}}{\underline{s}\sigminjgza} + \sqrt{\delta_2}}C_1}{\nu}\pa{\deltataumoins-\deltatauplus} .
\end{align*}
Let us denote $C_2 = \max\pa{\pa{\frac{\sqrt{2}}{\underline{s}\sigminjgza} + \sqrt{\delta_2}}C_1,1-\nu}$. One can see, using~\eqref{eq:C_1_bound}, that $2s_0\delta_2 < 1$ and $1-\nu = 2s_0\delta_2$, that
\begin{align*}
C_2 
&\leq \max\pa{4\delta_1\inv\pa{\frac{1}{\underline{s}\sigminjgza} + \sqrt{\delta_2}}\pa{\frac{1}{\underline{s}\sigminjgza} + \frac{1}{2\sqrt{\delta_2}s_0}},2s_0\delta_2} \\
&\leq \max\pa{4\delta_1\inv\pa{\frac{1}{\underline{s}\sigminjgza} + \frac{1}{2\sqrt{\delta_2}s_0}}^2,2s_0\delta_2} .
\end{align*}
We claim that the maximum in the rhs is given by the first term. Indeed,
\begin{align*}
4\delta_1\inv\pa{\frac{1}{\underline{s}\sigminjgza} + \frac{1}{2\sqrt{\delta_2}s_0}}^2 
&=\frac{1}{\delta_1\delta_2s_0^2}\pa{\frac{2\sqrt{\delta_2}s_0}{\underline{s}\sigminjgza} + 1}^2 \geq 2 ,
\end{align*}
since by assumption $2\delta_2s_0 < 1$ which also entails $s_0\delta_1 < 1$. Therefore
\[
C_2 \leq 4\delta_1\inv\pa{\frac{1}{\underline{s}\sigminjgza} + \frac{1}{2\sqrt{\delta_2}s_0}}^2 .
\]
Using now that $\deltatauplus \leq \deltatau$, we obtain
\begin{align*}
    \deltatauplus \leq \deltatau \leq \frac{C_2}{\nu}\pa{\deltataumoins - \deltatauplus + \deltataumoins - \deltatau}
    \leq \frac{2C_2}{\nu}\pa{\deltataumoins - \deltatauplus}.
\end{align*}
Equivalently,
\[
\deltatauplus \leq \frac{\rho}{1+\rho}\deltataumoins
\]
where $\rho = \frac{2C_2}{\nu}$. This implies
\begin{align*}
\deltatau \leq \pa{\frac{\rho}{1 + \rho}}^{\tau/2} \Delta_0. 
\end{align*}
Observing that $\Delta_0 \leq \nu^{-1}C_1\sqrt{\lossy(\yv_0)}=R'$ (see \eqref{eq:sumvel}), it follows that
\begin{align*}
\norm{\thetavtau - \thetav_{\infty}} \leq \deltatau \leq R'\pa{\frac{\rho}{1 + \rho}}^{\tau/2} .
\end{align*}
By the well-posedness of the backtracking procedure, we obtain
\begin{align*}
    \lossy(\yv_\tau) \leq \frac{\delta}{2\underline{s}}\norm{\thetavtau - \thetav_{\infty}}^2 \leq \frac{\delta R'^2}{2\underline{s}}\pa{\frac{\rho}{1 + \rho}}^{\tau}.
\end{align*}
which concludes the proof.

\end{proof}
\section{Numerical Experiments}\label{sec:numerical}

In order to validate our results, we performed different numerical experiments. Throughout these experiments, we used a two-layer neural network equipped with the sigmoid activation function and where the entries of $\Wv(0)$ are sampled from a standard normal distribution and the entries of $\Vv(0)$ from a uniform distribution between $-\sqrt{3}$ and $\sqrt{3}$. The networks then obviously obey our assumptions. We train both layers of the networks.

In our first experiment shown in Figure~\ref{fig:grid_convergence}, our goal is to see the impact of the parameters $\alpha$ and $\beta$ on the convergence speed of the network applied to a simple inverse problem of relatively low dimension with $n=10$ and $m=5$ and without noise. We entries of $\xvc$ are iid samples from $\mathcal{N}(0,1)$ and those of $\fop$ from $\mathcal{N}(0,1/\sqrt{n})$. Standard random matrix theory results ensure that the non-zero singular values of $\fop$ are concentrated around $1$. To solve this problem, we use networks where $k=10^4$ and $d=1$. We trained the network for each instance using our inertial algorithm with $s_0=0.1$ fixed, and varied $\alpha$ and $\beta$ to assess their influence. We generate $50$ different problems instances characterized by an operator $\fop$, a signal $\xvc$ and a network initialization. For each pair of parameters ($\alpha,\beta)$, we computed the average number of iterations over the 50 problem instances that were necessary to achieve machine precision accuracy (i.e., $\lossy(\yvtau) \leq 10^{-14}$).

\begin{figure}[thb!]
    \centering
    \begin{subfigure}{0.49\textwidth}
        \includegraphics[width=\linewidth]{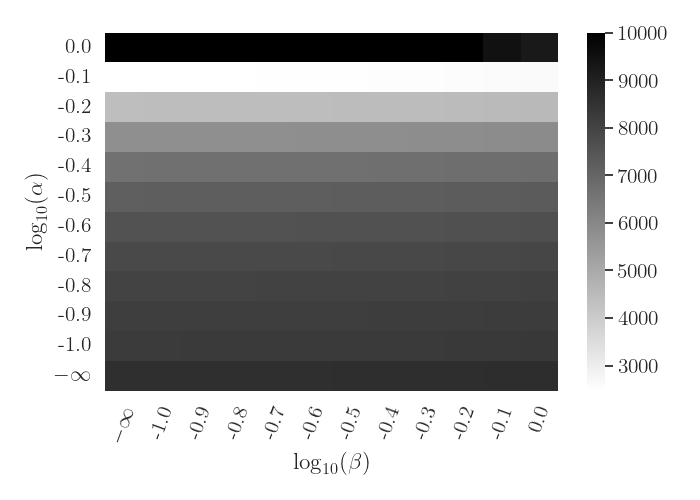}
        \caption{Number of iterations necessary for a network to converge on average, for different pairs $(\alpha,\beta)$}
    \end{subfigure}
     \begin{subfigure}{0.49\textwidth}
        \includegraphics[width=\linewidth]{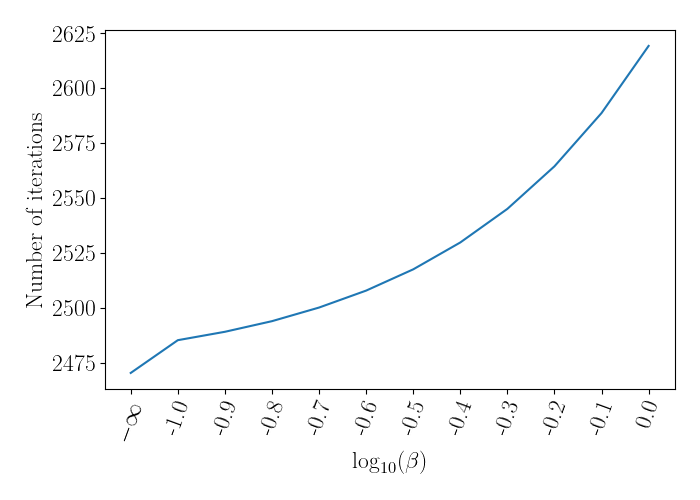}
        \caption{Effect of $\beta$ on the number of iterations necessary to converge for $\alpha=10^{-0.2}$}
        \label{fig:grid_convergence_beta}
    \end{subfigure}
    \caption{Convergence rates of an inertial system for different $\alpha$ and $\beta$. Better $\alpha$ allows for much faster convergence while for this problem, $\beta$ does not appear to be necessary.}
    \label{fig:grid_convergence}
\end{figure}

For this problem, it is obvious that $\alpha$ is the driving factor of convergence speed. We see a clear acceleration of the training of the network as $\alpha$ progresses until the network start diverging when $\alpha=1$. The acceleration we observe is very important as we go on average from 9000 iterations to converge when $\alpha=0$ (the gradient descent case typically) to only 3000 when we chose $\alpha=10^{-0.1}$. We see in Figure~\ref{fig:grid_convergence_beta} the effect of $\beta$ on the convergence when $\alpha=10^{-2}$. We observe a slight degradation of the convergence rate when $\beta$ progresses, revealing that for this problem, Hessian damping is not necessary and only the effect of viscous damping drives the acceleration. This is due to the fact that for this problem we do not observe oscillations around the minima, which is what Hessian damping helps to prevent.

In the next experiment, we kept the same setting but we fixed $\beta=0.05$ and varied both $k$ and $\alpha$. We train once again 50 networks for each pair $(k,\alpha)$ and we plot in Figure~\ref{fig:grid_convergence_k_alpha} the probability of each network to achieve machine precision loss in less than 15000 iterations. From these results we see different regimes. When the network is too small, whatever $\alpha$ is chosen, the network will not converge to a zero-loss solution, which is in agreement with our theoretical predictions. On the other side, when $\alpha$ is too large, the algorithm will not converge either whatever the size of the network. However, when the network is big enough, the choice of $\alpha$ has a clear impact on the convergence speed. It is to be noted that in some cases, much more iterations would lead to convergence, but it seems that even by taking this into account, using the right $\alpha$ does help a network to find a zero-loss solution even when its size is relatively small. This might be related to a better trap-avoidance property of inertial dynamics that would be worth investigating precisely in the future.

\begin{figure}[thb!]
    \centering
    \includegraphics[width=0.7\linewidth]{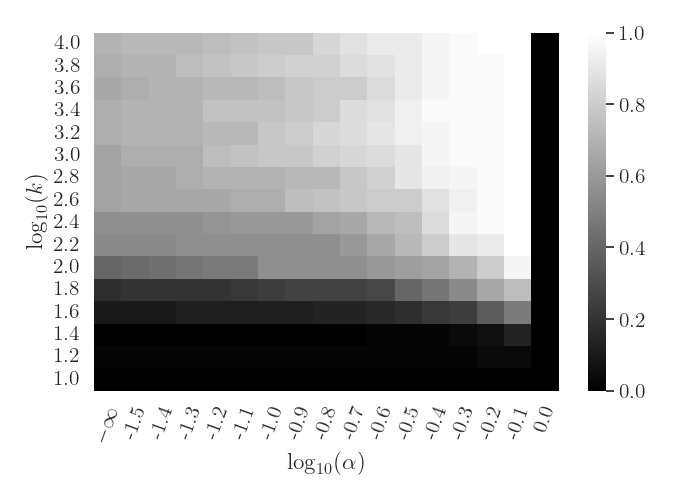}
    \caption{Empirical probability of a network to be trained in less than 15000 iterations for different $k$ and $\alpha$. Choosing $\alpha$ correctly helps when $k$ is above a certain threshold.}
    \label{fig:grid_convergence_k_alpha}
\end{figure}

For our next experiment, we explore the effects of $\alpha$ and $\beta$ on an imaging inverse problem. We consider the image as a vector in $[0,255]^{4096}$ and use a network with $k=7000$ hidden neurons, which is enough to achieve convergence empirically. We study a deconvolution problem where $\fop$ is a Gaussian kernel of standard deviation 1, and added an $\mathcal{N}(0,2.5^2)$ noise. We trained different networks using various $\alpha$ and $\beta$ and show in Figure~\ref{fig:plot_convergence_convolution} the evolution of the loss and of the distance to the true solution for each pair of parameters. We also show in Figure~\ref{fig:results_convolution} the final image obtained after a selected number of iterations for both gradient descent ($\alpha=0$ and $\beta=0$) and when $\alpha=1$ and $\beta=0.1$ which is one of the fastest converging combination.

\begin{figure}
\begin{minipage}{0.75\linewidth}
\begin{center}
\includegraphics[width=1\linewidth]{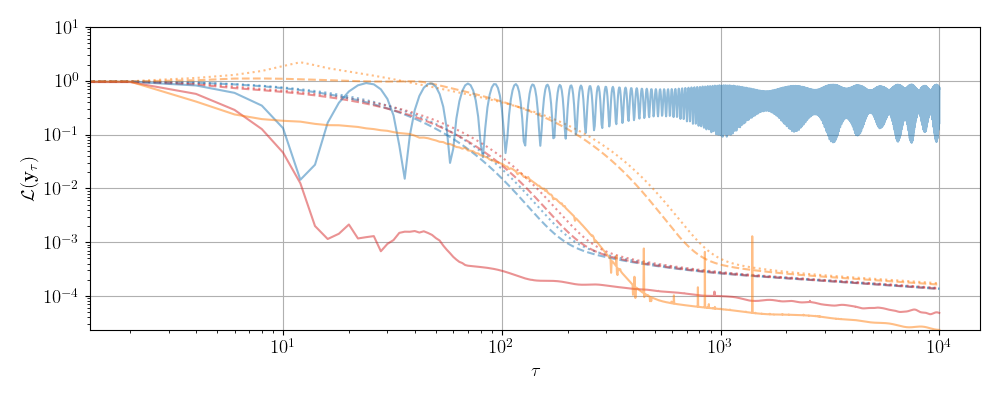}
\includegraphics[width=1\linewidth]{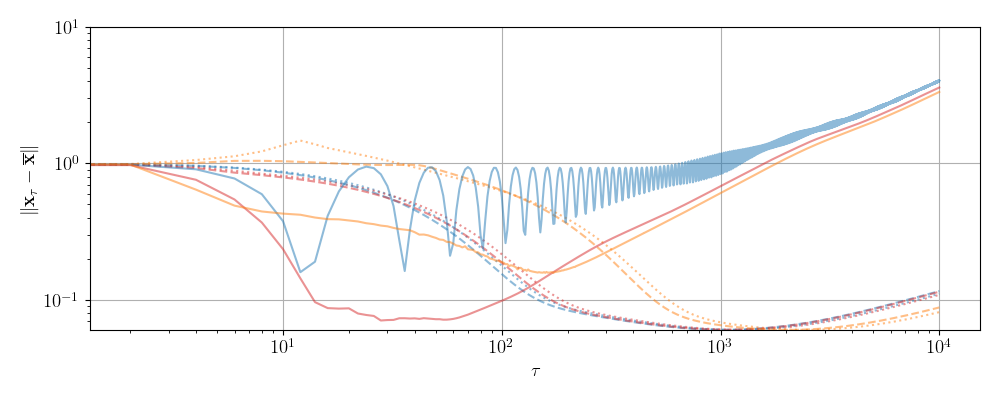}
\end{center}
\end{minipage}
\begin{minipage}{0.19\linewidth}
\includegraphics[width=1\linewidth]{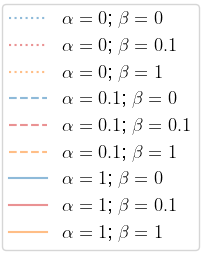}
\end{minipage}
\caption{Effects on the loss and the signal convergence of different combination of $\alpha$ and $\beta$ for a deconvolution problem. Inertial systems can provide faster convergence but they are sensible to the parameters $\alpha$ and $\beta$ and a wrong choice can prevent convergence.}
\label{fig:plot_convergence_convolution}
\end{figure}


\begin{figure}[htb!]
    \centering
    \includegraphics[width=1\linewidth]{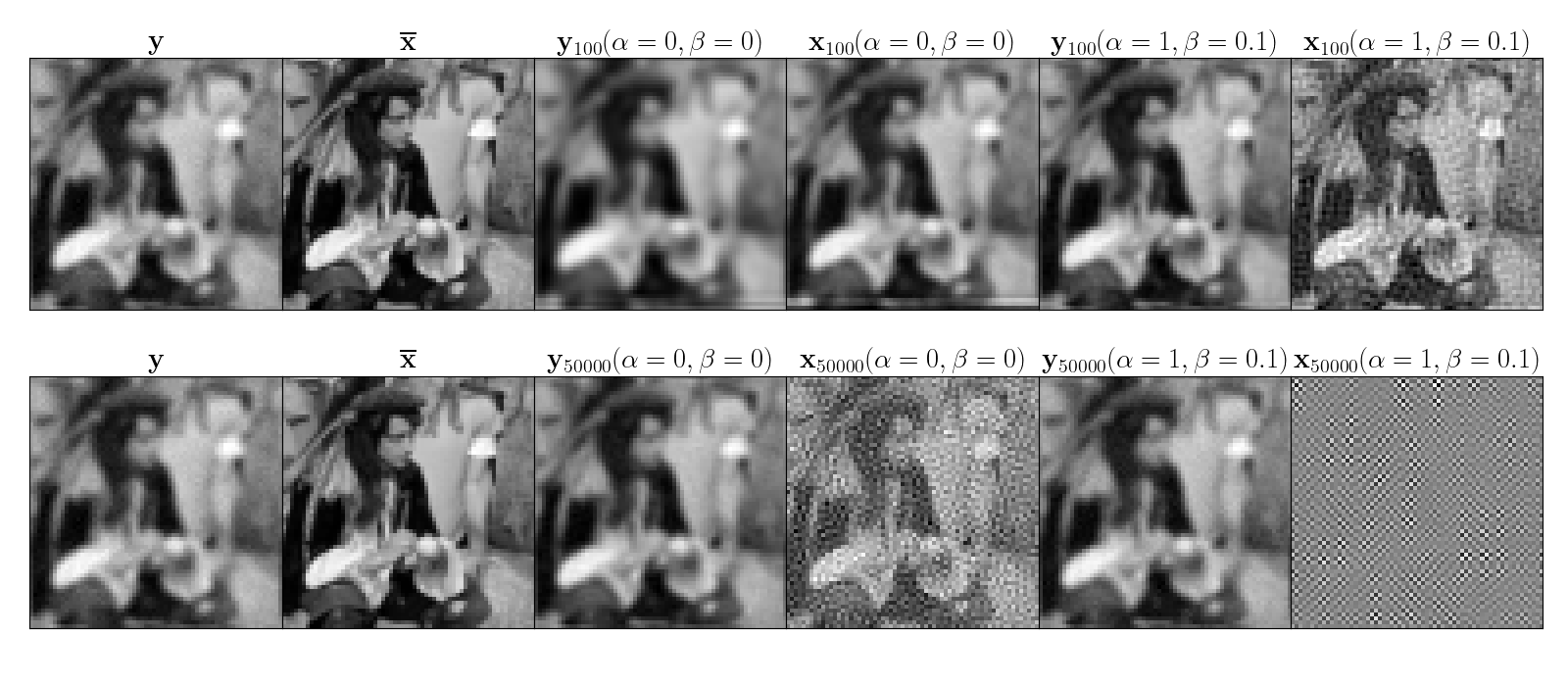}
    \caption{Qualitative results of a deconvolution problem with low noise for different $\alpha$ and $\beta$. Both gradient descent and inertial system converge in observation space but they overfit the noise in signal space even for very reduced level of noise. }
    \label{fig:results_convolution}
\end{figure}

Let us first focus on the evolution of the loss given in Figure~\ref{fig:results_convolution}. The first thing to note is that for the right combination of $\alpha$ and $\beta$ ($\alpha=1$ and $\beta\in\{0.1,1\}$), there is considerable acceleration phenomenon compared to gradient descent. More precisely, the acceleration happens at some point, either in the beginning for $(\alpha=1,\beta=0.1)$ or later on for $(\alpha=1,\beta=1)$, and then the optimization seems to continue at a similar rate as the gradient descent. Contrary to the previous toy example, this time the acceleration can only be achieved by a good combination of $\alpha$ and $\beta$ showing the interplay between viscous and Hessian dampings for more complex inverse problems. Indeed, for $(\alpha=1,\beta=0)$, we see that the loss oscillates without converging and on the other side $(\alpha=0,\beta=1)$ does converge but at a slower rate than gradient descent -- note that such phenomenon also appears for $(\alpha=0.1,\beta=1)$. Finally, choosing small $\alpha$ and $\beta$ will not provide the desired acceleration or can even hinder the convergence rate compared to gradient descent.

If we now observe the evolution of the error between the reconstructed signal and the true solution $\xvc$, we see this error decreases and then starts increasing before reaching a plateau that fits the noise level. This effect is amplified here, as the convolution operator, while being injective, is very badly conditioned ($\sigmin(\fop)\sim10^{-5}$). This validates the need for an early stopping strategy. However in our case, we have that $\norm{\veps}\sim 150$, which combined with the conditioning of the operator means that our early stopping bound is far from being reached in our experiments but we observe in Figure~\ref{fig:results_convolution} that the reconstructed image has already severely overfitted the noise after 50000 iterations. Similarly, our bound on $\norm{\xv_{\tau} - \xvc}$ is far from being reached because the noise term will be very large, showing the difficulties faced when using very badly conditioned operators. Finally, let us observe that in Figure~\ref{fig:results_convolution}, both gradient descent and the inertial system will overfit the noise, albeit at different times due to slower convergence of gradient descent.

We did a second imaging experiment where we changed the convolution operator to a better conditioned one and with higher level of noise to see how the optimization trajectories behave under these conditions. The operator $\fop$ that we are using is built from the combination of two orthonormal matrices and a diagonal matrix with entries in the range $[1,2]$ (the goal is to have the same matrices produced by an SVD). We used a Gaussian noise vector with entries iid sampled from $\mathcal{N}(0,25)$. We plot the evolution of the loss for different parameters in Figure~\ref{fig:plot_convergence_well_conditioned} and we see different behaviors than for the convolution operator. By choosing $\beta$ too large (1 here), the algorithm diverges which did not happen in the previous experiment meaning that the conditioning of the operator plays an important role in the right choice of $\alpha$ and $\beta$ to ensure convergence as we discussed after our theoretical results above. We also see that the choice $(\alpha=1,\beta=0.1)$ provides faster convergence at the beginning but then starts oscillating and reaches the convergence threshold later than gradient descent. It appears that choosing $(\alpha=0.5,\beta=0)$ provides the fastest convergence rate, indicating that maybe for this problem, Hessian damping is not as necessary and the solution landscape is smoother.

\begin{figure}
\begin{minipage}{0.75\linewidth}
\begin{center}
\includegraphics[width=1\linewidth]{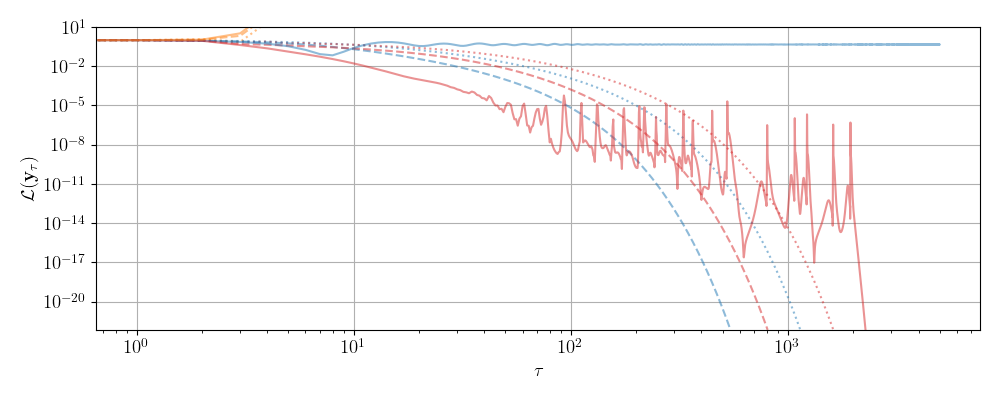}
\includegraphics[width=1\linewidth]{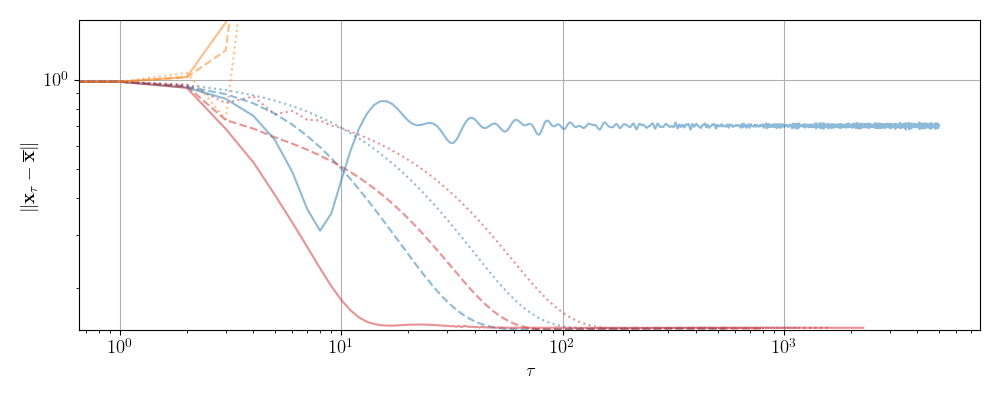}
\end{center}
\end{minipage}
\begin{minipage}{0.19\linewidth}
\includegraphics[width=1\linewidth]{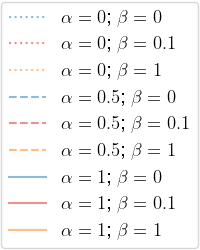}
\end{minipage}
\caption{Effects on the loss and the signal convergence of different combination of $\alpha$ and $\beta$ for a well-conditioned operator. We observe faster convergence for a variety of parameters and the networks converge to the same signal as the problem is well-posed.}
\label{fig:plot_convergence_well_conditioned}
\end{figure}

\begin{figure}[htb!]
    \centering
    \includegraphics[width=1\linewidth]{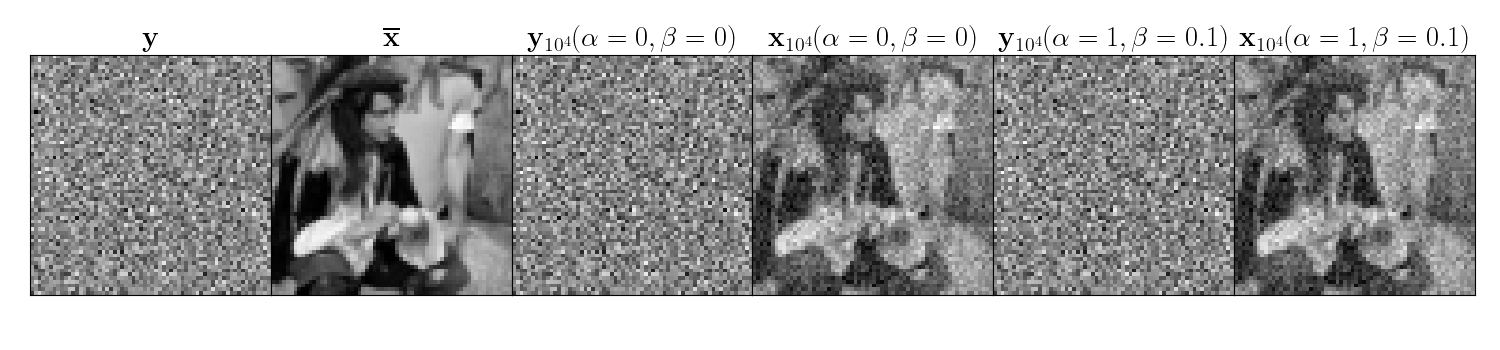}
    \caption{Qualitative results with a well-conditioned operator and heavy noise for different $\alpha$ and $\beta$. The signal is well recovered despite the heavy noise level for both gradient descent and inertial systems.}
    \label{fig:results_SVD}
\end{figure}

When we look at the evolution of the curves in Figure~\ref{fig:plot_convergence_well_conditioned}, there are some surprises. Notably the case $(\alpha=1,\beta=0.1)$ which shows these swings in the loss, is the fastest in signal space. Furthermore, we see that for a lot of cases, the algorithm converges in a stable way in the signal space and stays around the solution before overfitting the noise. This was expected as the operator is well-conditioned and thus overfitting the noise, even if it is to quite high level like here, does not require big changes in signal space. We see this effect in the qualitative results of Figure~\ref{fig:results_SVD} where the solution found by gradient descent and the inertial algorithm are very similar and do not show the same artifacts as was the case for the convolution operator.



\bibliographystyle{elsarticle-harv}
\bibliography{references}

\end{document}